%% file: main.tex
\documentclass{article}

\usepackage[main,final]{neurips_2026}

\usepackage[utf8]{inputenc} %
\usepackage[T1]{fontenc}    %
\usepackage{hyperref}       %
\usepackage{url}            %
\usepackage{booktabs}       %
\usepackage{amsfonts}       %
\usepackage{nicefrac}       %
\usepackage{microtype}      %
\usepackage{xcolor}         %

\usepackage{bm}
\usepackage{mathtools}
\usepackage{amssymb}
\usepackage{amsthm}
\usepackage{thmtools}
\usepackage{thm-restate}

\usepackage{prettyref}

\usepackage{hyperref}

\usepackage[nameinlink]{cleveref}
\input{command.tex}

\RestyleAlgo{vlined}
\Crefname{app}{Appendix}{Appendices}

\title{Strongly Adaptive Online Learning \\with Time-Varying Movement Costs}

\author{%
Andrew Jacobsen\\
Universit\`{a} degli Studi di Milano\\
Politecnico di Milano\\
\texttt{contact@andrew-jacobsen.com}
\And
Emmanuel Esposito\\
Universit\`{a} degli Studi di Milano\\
\texttt{emmanuel@emmanuelesposito.it}
\And
Hao Qiu\\
National University of Singapore\\
\texttt{qiuhaosai@gmail.com}
\And
Mengxiao Zhang\\
University of Iowa\\
\texttt{mengxiao-zhang@uiowa.edu}
}

\begin{document}

\maketitle

\begin{abstract}
We study strongly adaptive regret for Online Convex Optimization (OCO) with time-varying movement costs, a setting that evaluates a learner's performance over arbitrary sub-intervals. This framework captures complex sequential decision-making problems where both the environment and the cost of switching decisions fluctuate over time. We propose a novel algorithm which, for any interval $[a,b] \subseteq [T]$, achieves a data-dependent regret bound of $ \widetilde{\mathcal{O}}\left( \sqrt{(D^2+DP_{[a,b]})\sum_{t \in [a,b]}(\|g_t\|^2+\lambda_t\|g_t\|)} \right) $, where $D$ is the diameter of the decision domain, $P_{[a,b]}$ is the path length of the comparator sequence over $[a,b]$, $g_t$ is the gradient of the loss, and $\lambda_t$ is the movement-cost coefficient at round~$t$. Key to our approach is a novel reduction to sleeping experts with movement costs, a previously unexplored variant that is of independent interest. As a result of this development, we further establish the first strongly-adaptive guarantees under delayed feedback.  Our approach also naturally extends to OCO with time-varying memory, where we obtain  strongly-adaptive guarantees which improve the best-known bounds.
\end{abstract}

\input{sections/introduction}

\input{sections/setting}

\input{sections/movement_cost}

\input{sections/sleeping}

\input{sections/main-result}

\input{sections/applications}

\input{sections/discussion}

\begin{ack}
EE and AJ acknowledge the EU Horizon CL4-2022HUMAN-02 research and innovation action under grant agreement 101120237, project ELIAS (European Lighthouse of AI for Sustainability). HQ is supported by the Singapore National Research Foundation (NRF) under its AI Visiting Professorship programme.
\end{ack}

\DeclareRobustCommand{\VAN}[3]{#3} 
\bibliography{refs}
\bibliographystyle{abbrvnat}
\DeclareRobustCommand{\VAN}[3]{#2}

\newpage
\appendix
\crefalias{section}{appendix}
\crefalias{subsection}{appendix}
\crefalias{subsubsection}{appendix}

\input{sections/more_related}

\input{sections/appendix-samc}
\input{sections/appendix-sleeping}

\input{sections/appendix-pfmc}

\input{sections/appendix-adaptive-gd}

\input{sections/appendix-first-order-redux}

\input{sections/appendix-main}

\input{sections/appendix-applications}

\input{sections/appendix-lemmas}

\newpage
\input{checklist.tex}

\end{document}

%% file: command.tex
\newcommand{\tp}{\tilde{p}}

\def \R {\mathbb{R}}

\newcommand{\calA}{{\mathcal{A}}}

\newcommand{\calS}{{\mathcal{S}}}

\newcommand{\calW}{{\mathcal{W}}}

\newcommand{\cF}{{\mathcal{F}}}
\newcommand{\cA}{{\mathcal{A}}}

\newcommand{\cE}{{\mathcal{E}}}
\newcommand{\cI}{{\mathcal{I}}}

\newcommand{\cO}{{\mathcal{O}}}

\newcommand{\cT}{{\mathcal{T}}}

\newcommand{\cP}{\mathcal{P}}

\newcommand{\gtilde}{\Tilde{g}}
\newcommand{\wt}[1]{\widetilde{#1}}
\newcommand{\inner}[1]{ \left\langle {#1} \right\rangle }

\newcommand{\order}{\mathcal{O}}
\usepackage[vlined,algo2e]{algorithm2e}
\usepackage{algorithm}

\newcommand{\pref}[1]{\prettyref{#1}}

\newcommand{\savehyperref}[2]{\texorpdfstring{\hyperref[#1]{#2}}{#2}}
\newrefformat{eq}{\savehyperref{#1}{Eq. \textup{(\ref*{#1})}}}
\newrefformat{eqn}{\savehyperref{#1}{Eq.~(\ref*{#1})}}
\newrefformat{lem}{\savehyperref{#1}{Lemma~\ref*{#1}}}
\newrefformat{def}{\savehyperref{#1}{Definition~\ref*{#1}}}
\newrefformat{line}{\savehyperref{#1}{Line~\ref*{#1}}}
\newrefformat{thm}{\savehyperref{#1}{Theorem~\ref*{#1}}}
\newrefformat{corr}{\savehyperref{#1}{Corollary~\ref*{#1}}}
\newrefformat{cor}{\savehyperref{#1}{Corollary~\ref*{#1}}}
\newrefformat{sec}{\savehyperref{#1}{Section~\ref*{#1}}}
\newrefformat{app}{\savehyperref{#1}{Appendix~\ref*{#1}}}
\newrefformat{assum}{\savehyperref{#1}{Assumption~\ref*{#1}}}
\newrefformat{asm}{\savehyperref{#1}{Assumption~\ref*{#1}}}
\newrefformat{tab}{\savehyperref{#1}{Table~\ref*{#1}}}
\newrefformat{ex}{\savehyperref{#1}{Example~\ref*{#1}}}
\newrefformat{fig}{\savehyperref{#1}{Figure~\ref*{#1}}}
\newrefformat{alg}{\savehyperref{#1}{Algorithm~\ref*{#1}}}
\newrefformat{rem}{\savehyperref{#1}{Remark~\ref*{#1}}}
\newrefformat{conj}{\savehyperref{#1}{Conjecture~\ref*{#1}}}
\newrefformat{prop}{\savehyperref{#1}{Proposition~\ref*{#1}}}
\newrefformat{proto}{\savehyperref{#1}{Protocol~\ref*{#1}}}
\newrefformat{prob}{\savehyperref{#1}{Problem~\ref*{#1}}}
\newrefformat{event}{\savehyperref{#1}{Event~\ref*{#1}}}
\newrefformat{claim}{\savehyperref{#1}{Claim~\ref*{#1}}}
\newrefformat{ineq}{\savehyperref{#1}{Eq.~\ref*{#1}}}
\newrefformat{que}{\savehyperref{#1}{Question~\ref*{#1}}}
\newrefformat{op}{\savehyperref{#1}{Open Problem~\ref*{#1}}}
\newrefformat{fn}{\savehyperref{#1}{Footnote~\ref*{#1}}}

\DeclarePairedDelimiter{\abs}{|}{|}
\DeclarePairedDelimiter{\norm}{\|}{\|}

\newcommand{\lrb}[1]{\left(#1\right)}
\newcommand{\brb}[1]{\bigl(#1\bigr)}
\newcommand{\Brb}[1]{\Bigl(#1\Bigr)}

\newcommand{\lsb}[1]{\left[#1\right]}
\newcommand{\bsb}[1]{\bigl[#1\bigr]}

\newcommand{\lcb}[1]{\left\{#1\right\}}
\newcommand{\bcb}[1]{\bigl\{#1\bigr\}}

\DeclareMathOperator*{\argmin}{arg\,min}

\crefname{assumption}{assumption}{assumptions}
\Crefname{assumption}{Assumption}{Assumptions}

\newcommand{\dmax}{\ensuremath{d_{\mathrm{max}}}}
\newcommand{\dtot}{\ensuremath{d_{\mathrm{tot}}}}
\newcommand{\msize}{\ensuremath{b}}
\newcommand{\bmax}{\ensuremath{\msize_{\mathrm{max}}}}

\newcommand{\ww}{\mathcal{W}}
\newcommand{\cmp}{u}
\newcommand{\tpp}{{t+1}}
\newcommand{\tmm}{{t-1}}
\newcommand{\w}{w}

\newcommand{\wtpp}{\w_{\tpp}}
\newcommand{\wtmm}{\w_{\tmm}}
\newcommand{\sumtT}{\sum_{t=1}^T}
\newcommand{\gt}{g_{t}}
\newcommand{\Log}[1]{\log\left(#1\right)}
\newcommand{\grad}{\nabla}
\newcommand{\brac}[1]{\left(#1\right)}
\newcommand{\zeros}{\boldsymbol{0}}
\newcommand{\sbrac}[1]{\left[#1\right]}
\renewcommand{\wt}{\w_t}
\newcommand{\half}{\frac{1}{2}}
\newcommand{\Set}[1]{\left\{#1\right\}}
\newcommand{\minOp}{\wedge}
\newcommand{\maxOp}{\vee}
\newcommand{\cW}{\ww}

\newcommand{\pt}{p_{t}}
\newcommand{\pti}{p_{t,i}}
\newcommand{\ptj}{p_{t,j}}

\newcommand{\mt}{m_{t}}

\newcommand{\ptmm}{p_{\tmm}}

\newcommand{\meta}{\mathrm{Meta}}

\renewcommand{\tilde}{\widetilde}
\renewcommand{\hat}{\widehat}

\newcommand{\diff}{\mathop{}\!\mathrm{d}}

\newcommand{\defeq}{\triangleq}

\newcommand{\del}{\mathrm{del}}
\newcommand{\mem}{\mathrm{mem}}

\newcommand{\RM}{R^{\mem}}
\newcommand{\RD}{R^{\del}}

\newcommand{\elltilde}{\tilde{\ell}}

\newcommand{\hellt}{\widehat{\ell}_t}
\newcommand{\hellti}{\widehat{\ell}_\ti}

\newcommand{\z}{z}
\newcommand{\zt}{\z_t}

\newcommand{\zti}{z_{ti}}

\newcommand{\tgt}{\tilde{g}_{t}}

\newcommand{\ptmmj}{p_{\tmm,j}}

\newcommand{\gtau}{g_{\tau}}
\newcommand{\ztmm}{z_{\tmm}}
\newcommand{\ztmmi}{z_{\tmm,i}}
\newcommand{\ft}{f_{t}}
\newcommand{\sumab}{\sum_{t\in[a,b]}}

\newcommand{\Ohat}{\hat{\cO}}

\newcommand{\ab}{{[a,b]}}

\newcommand{\Active}{\mathrm{Active}}
\newcommand{\gs}{g_{s}}

\newcommand{\talpha}{\tilde{\alpha}}
\newcommand{\abp}{{[a',b']}}

\newcommand{\wtau}{\w_{\tau}}
\newcommand{\tptau}{\tilde{p}_{\tau}}
\newcommand{\tptauI}{\tilde{p}_{\tau,I}}
\newcommand{\telltauI}{\tilde{\ell}_{\tau,I}}
\newcommand{\tptaumm}{\tilde{p}_{\taumm}}
\newcommand{\twtaumm}{\tw_{\taumm}}

\newcommand{\taumm}{{\tau-1}}
\newcommand{\ellti}{\ell_\ti}

  \newcommand{\tell}{\tilde{\ell}}

  \newcommand{\si}{{si}}
  \newcommand{\taui}{{\tau,i}}
  \newcommand{\taupp}{{\tau+1}}
  \newcommand{\ztppi}{z_{\tpp,i}}
  \newcommand{\tcmp}{\tilde{\cmp}}
  \newcommand{\telltau}{\tell_{\tau}}
\newcommand{\tg}{\tilde{g}}
\newcommand{\tgtau}{\tg_{\tau}}
\newcommand{\tw}{\tilde{w}}
\newcommand{\twtau}{\tw_{\tau}}

\newcommand{\KL}{\mathrm{KL}}
\newcommand{\tppi}{{\tpp,i}}
\newcommand{\tlambda}{\widetilde{\lambda}}
\newcommand{\ti}{{ti}}
\theoremstyle{plain}
\newtheorem{theorem}{Theorem}[section]

\newtheorem{lemma}[theorem]{Lemma}

\theoremstyle{definition}

\newtheorem{assumption}[theorem]{Assumption}
\theoremstyle{remark}
\newtheorem{remark}[theorem]{Remark}

\newcommand{\tI}{{t,I}}
\newcommand{\tauI}{{\tau,I}}

\newcommand{\tppI}{{\tpp,I}}
\newcommand{\tmmi}{{\tmm,i}}

\newtheoremstyle{manualtheoremstyle}
  {3pt}      %
  {3pt}      %
  {\itshape} %
  {}         %
  {\normalfont\bfseries} %
  {.}        %
  {.5em}     %
  {}         %

\theoremstyle{manualtheoremstyle}
\newtheorem{manualtheoreminner}{Theorem}
\newenvironment{manualtheorem}[1]{%
  \renewcommand\themanualtheoreminner{#1}%
  \manualtheoreminner
}{\endmanualtheoreminner}

\newtheoremstyle{manuallemmastyle}
  {3pt}      %
  {3pt}      %
  {\itshape} %
  {}         %
  {\normalfont\bfseries} %
  {.}        %
  {.5em}     %
  {}         %

\theoremstyle{manuallemmastyle}
\newtheorem{manuallemmainner}{Lemma}
\newenvironment{manuallemma}[1]{%
  \renewcommand\themanuallemmainner{#1}%
  \manuallemmainner
}{\endmanuallemmainner}

\newcommand{\Indicator}{\boldsymbol{1}}

\usepackage{enumitem}

\newcommand{\SecMain}{Main Result}
\newcommand{\SecStronglyAdaptive}{Strong Adaptivity with Time-Varying Movement Costs}
\newcommand{\SecApplications}{Applications}
\newcommand{\SecSleeping}{Sleeping Experts with Movement Costs}
\newcommand{\atob}{{a:b}}

%% file: sections/introduction.tex
\section{Introduction}

Online convex optimization (OCO) \citep{cesa2006prediction, hazan2016introduction, orabona2019modern} is a fundamental framework for sequential decision making under uncertainty.
At each round, a learner selects a decision $\wt$ in a convex decision set $\ww$ before observing the current loss function $\ft:\ww\to\R$, and then incurring the corresponding loss $\ft(\wt)$.
The objective is to design algorithms whose cumulative loss is close to that of a benchmark comparator sequence chosen in hindsight.
Classic (static) regret guarantees compare the learner's performance against a single fixed comparator over the entire time horizon, and are therefore most appropriate in stationary environments.
However, many modern applications are inherently non-stationary: the best decision may evolve over time, the environment may undergo abrupt shifts, and the performance on short or intermediate time intervals can be just as important as performance over the full horizon. 

Several regret notions have been developed to capture such non-stationarity settings.
Dynamic regret \citep{zinkevich2003online, zhang2018adaptive} takes a global perspective by comparing the learner against a time-varying comparator sequence over the entire horizon, 
$
    R_T(\cmp_{1:T})\defeq\sumtT \ft(\wt)-\ft(\cmp_t). 
$
This allows the benchmark to change over time and provides a natural way to quantify performance in drifting environments.
By contrast, strongly-adaptive regret \citep{daniely2015strongly} requires the learner to achieve low static regret simultaneously on every contiguous time interval.
Rather than focusing only on aggregated performance, strongly adaptive regret measures how well the learner adapts locally to changes at arbitrary time scales:
\(
    R_\ab(\cmp)\defeq\sum_{t\in\ab}\ft(\wt)-\ft(\cmp).
\)
More generally, \citet{zhang2020minimizing} extends this definition to study \emph{simultaneous} dynamic regret and strongly-adaptive regret minimization, and defines
\(
    R_\ab(\cmp_{a:b})\defeq\sum_{t\in\ab}\ft(\wt)-\ft(\cmp_t),
\)
which captures both the classic definition of strongly-adaptive regret as well as dynamic regret as a special case. 

Another crucial challenge in OCO is that changing decisions can itself be costly.
For example, in optimal control  \citep{goel2019online}, video streaming \citep{joseph2012jointly}, and geographical load balancing \citep{lin2012online}, frequent changes may incur switching costs, transaction costs, or instability penalties, and these costs themselves may vary in time.
This motivates OCO with time-varying movement costs, where the learner is charged not only the loss but also a time-varying movement penalty associated with the decision.
This raises the following question:
\begin{center}
    \textit{Can we design algorithms for OCO with \textbf{time-varying movement costs} that achieve \textbf{strongly adaptive} regret against arbitrary time-varying comparator sequences?}
\end{center}

\paragraph{Contributions.}
In this paper, we answer this question affirmatively by developing the first strongly-adaptive algorithms for OCO with time-varying movement costs. Our main technical ingredient is a reduction that decomposes the regret of a mixture predictor into two components: the movement-cost regret of a base expert and the movement-cost regret of a meta-algorithm for the sleeping experts problem. This reduction allows us to combine interval-based specialists with a new algorithm for sleeping experts with movement costs, yielding strongly adaptive regret bounds of the form
$ \widetilde{\mathcal{O}}\Bigl( \sqrt{(D^2+DP_{[a,b]})\sum_{t \in [a,b]}(\|g_t\|^2+\lambda_t\|g_t\|)}\Bigr) $
simultaneously over every interval $[a,b]$, where $D$ is the diameter of the feasible domain, $P_{[a,b]}$ is the drift of the comparator within interval $[a,b]$, and $g_t$ and $\lambda_t$ are the gradient and the movement cost coefficient respectively at round $t$.
Our result improves upon the state-of-the-art work studying time-varying movement costs \citep{qiu2026parameterfree}, which obtains only dynamic regret guarantees, while our work obtains strongly-adaptive and dynamic regret guarantees simultaneously.
Moreover, even when specialized to the case of a fixed movement cost and a fixed comparator sequence, our results improve upon the best-known guarantees from prior work in this setting (see \Cref{sec:main} for a detailed discussion).

Similar to \citet{qiu2026parameterfree}, 
we further demonstrate that OCO with movement costs provides a useful primitive for handling delayed feedback in strongly-adaptive online learning.
The key observation is that the number of missing gradients at each round can be viewed as a time-varying movement-cost scale, which allows OCO with delayed feedback to be reduced to the movement-cost setting.
Leveraging a refined version of this reduction for interval regret, we obtain the first strongly adaptive regret bound of $\widetilde{\order} \Bigl(G\sqrt{(D^2+DP_\ab)\brb{|b-a| + \dtot^{\ab}}\!}\,\Bigr)$ for OCO with delayed feedback, where $\dtot^{\ab}$ is the total cumulative delay restricted to interval $\ab$.
While existing studies of delayed feedback have mainly only focused on dynamic regret \citep{wan2023non, qiu2026parameterfree}, our result establishes simultaneous strongly-adaptive and dynamic regret guarantees for delayed feedback.

We also extend our reduction to online learning with time-varying memory, showing that
memory-dependent losses can be handled through the same movement-cost perspective even when memory lengths change arbitrarily over time.
In particular, we obtain a strongly adaptive regret bound of
$\tilde\cO\Big(\sqrt{(D^2+DP_\ab)\big(H^{2}|b-a| + HG\sum_{t\in\ab}\brb{\msize_t^\ab}^2\Big)}\Big)$
where $\msize_{t}^{\ab}=\min\Set{\msize_{t},t-a}$ is the interval-clipped memory length, $G$ is the
coordinate-wise Lipschitz constant, and $H$ bounds the gradient norm of the unary losses.
While existing studies of OCO with memory have mainly focused on dynamic regret \citep{zhao2023non, qiu2026parameterfree}, our result simultaneously provides strongly adaptive and dynamic regret guarantees for OCO with memory.

\paragraph{Related work.}
Our work is closely related to three primary lines of research: (i) dynamic and adaptive regret, (ii) OCO with movement costs, and (iii) OCO with delayed feedback. As fundamental performance measures for non-stationary environments, dynamic and adaptive regret have been extensively studied. However, in the presence of movement costs, existing work typically achieves either dynamic or strongly adaptive regret, but rarely address complex time-varying scenarios with strongly adaptive guarantees. Specifically, \citet{zhang2022adversarial} study strongly adaptive regret restricted to fixed movement costs, while \citet{zhang2021revisiting,mhaisen2026partially}, and \citet{qiu2026parameterfree} focus exclusively on dynamic regret for fixed and time-varying movement costs, respectively. 

Beyond movement costs, delayed feedback presents another major challenge in non-stationary OCO. \citet{wan2023non} establish dynamic regret guarantees under delays. Subsequently, \citet{qiu2026parameterfree} proposed the first reduction from movement costs to delayed feedback, and a related idea was concurrently developed by \citet{ryabchenko2026reduction} in the context of bandit feedback. Relatedly, OCO with movement costs also extends to OCO with memory, a setting where prior works \citep{zhao2023non,qiu2026parameterfree} have again focused on dynamic regret for both fixed and time-varying memory lengths. Due to the space limitations, we defer additional related work to \Cref{app:more_related}.

\paragraph{Notations.}
Throughout the paper, we use $\mathbb{Z}_{+}$ to denote the set of positive integers and $[m]\triangleq\{1,\ldots,m\}$ for any $m\in\mathbb{Z}_{+}$.
By a slight abuse of notation, we often refer to a time interval between rounds $a,b \in \mathbb{Z}_{+}$ with $a \le b$ as $[a,b] = \{a,\dots,b\}$, which contains only the integer values between $a$ and $b$.
We denote the subdifferential set of a convex function $f:\ww\to\R$ at some point $w \in \ww$ as $\partial f(w)$. We write $\mathbf{0}$ for the all-zero vector in the appropriate dimension, $\Indicator\Set{x\in A}$ denotes the indicator function for a set $A$, and we use $\|\cdot\|$ to denote the Euclidean norm unless otherwise specified. The $\ell_1$ norm is denoted $\|\cdot\|_1$.
For two vectors $u,v\in \R^d$, denote the coordinate-wise product as $u\odot v$, whose $i$-th coordinate is $u_iv_i$. For a finite set $\calS$, we use $|\calS|$ to denote the cardinality of $\calS$. The Bregman divergence with respect to a differentiable function $\psi: \calW \to \R$ is $D_\psi(x \mid y) \defeq \psi(x)-\psi(y)- \langle\nabla \psi(y), x-y\rangle$. Define $\Delta_{N-1}$ as the $N$-dimensional probability simplex, and we denote $\Delta_\calS$ for the probability simplex over a finite set $\calS$. We use $\widetilde{\order}(\cdot)$ to suppress all $\log(\cdot)$ terms and use $\widehat{\order}(\cdot)$ to suppress only constant and $\log\log(\cdot)$ terms.

%% file: sections/setting.tex
\section{Problem Setting}%
\label{sec:setting}

We study online convex optimization with time-varying movement costs over a nonempty, closed, convex decision set $\ww\subseteq\mathbb R^d$. We assume throughout that the domain is bounded with diameter
$D=\sup_{x,y\in\ww}\norm{x-y}$, and we assume without loss of generality that $\zeros\in\ww$.
The interaction between the learner and the environment lasts for $T\in\mathbb{Z}_{+}$ rounds. 
On each round $t\in[T]$, the learner chooses a decision
$w_t\in\mathcal W$, then the environment selects a convex loss function
$f_t:\mathcal W\to\mathbb R$, and the learner incurs a cost of
$
    f_t(w_t)+\lambda_t\|w_t-w_{t-1}\|.
$
At the end of the same round $t$, the learner observes a subgradient
$g_t\in\partial f_t(w_t)$ and the next movement-cost coefficient
$\lambda_{t+1}$.
For every (nonempty) time interval $\ab \subseteq [T]$, we measure performance by the movement-cost dynamic regret:
\[
    R_{\ab}(u_{a:b},\lambda_{a:b})
    \triangleq
    \sum_{t\in\ab}
    \left(
        f_t(w_t)-f_t(u_t)\right)
        +
        \sum_{t=a+1}^b
        \lambda_t\|w_t-w_{t-1}\|
    \;,
\]
where $u_{a:b}=(u_a,\ldots,u_b)$ is any comparator sequence in  $\ww$. For ease of notation, when the interval is clear from context we will typically define $\lambda_t=0$ for $t\notin[a+1,b]$ so that we may write $\sumab\lambda_t\norm{\wt-\wtmm}$, or simply $\sum_t\lambda_t\norm{\wt-\wtmm}$. 
Similarly, for notational convenience we define pre-interval iterates $w_{a-1}=w_a$ and $u_{a-1}=u_a$ wherever relevant.

For a comparator sequence $u_{a:b}$, we define its path length over $\ab$ as
$
    P_{\ab}
    \triangleq
    \sum_{t=a+1}^{b}\|u_t-u_{t-1}\|.
$ For any time interval $\ab$, define
\(
    \lambda_{\max}^{\ab} \triangleq \max_{t\in\ab}\lambda_t,
\) as the largest movement-cost coefficient over $\ab$. 
This one-sided movement-cost regret follows the stronger objective considered
in prior works on OCO with movement costs
\citep{zhao2023non}. In contrast, a symmetric notion of
dynamic regret would also charge the comparator for its movement. More
precisely, if we define
\begin{align*}
        \mathcal R_{\ab}(u_{a:b},\lambda_{a:b})
    \triangleq
    \sumab\Bigl[\bigl(\ft(\wt)+\lambda_t\norm{\wt-\wtmm}\bigr) - \bigl(\ft(\cmp_t) +\lambda_t\norm{\cmp_t-\cmp_\tmm}\bigr)\Bigr] \;,
\end{align*}
then it clearly holds that
\[
    \mathcal{R}_{\ab}(\cmp_{a:b},\lambda_{a:b})
    \le
    R_{\ab}(u_{a:b},\lambda_{a:b})
    \le
    \mathcal{R}_{\ab}(\cmp_{a:b},\lambda_{a:b}) + \lambda_{\max}^{\ab} P_{\ab} \;.
\]
Therefore, controlling the one-sided regret $R_{\ab}(u_{a:b},\lambda_{a:b})$ immediately implies a guarantee for the
symmetric regret as well,
so we will focus on controlling the stronger one-sided regret $R_{\ab}(u_{a:b},\lambda_{a:b})$.
Throughout this work we will also make the following standard assumptions.
\begin{assumption} \label{asm:lipschitz}
    (\emph{Boundedness}) Given $G,\lambda, D > 0$, the movement coefficients satisfy $\lambda_t\in[0,\lambda]$, the domain 
    $\cW$ satisfies $\sup_{x,y\in\cW}\norm{x-y}\le D$, and the convex loss functions $f_1, \dots, f_T:\ww\to\R$ are $G$-Lipschitz, i.e., each $f_t$ satisfies $|f_t(x)-f_t(y)| \leq G\norm{x-y}$ for all $x,y\in\ww$.
\end{assumption}
Throughout the paper, we assume for simplicity that the bound on the movement coefficients $\lambda\ge \lambda_t$
is known \emph{a priori}, 
though in practice this 
factor can be estimated up to a constant factor 
using a standard doubling argument.

%% file: sections/movement_cost.tex
\section{\SecStronglyAdaptive}%
\label{sec:samc}

In this section, we introduce a general algorithm for obtaining strongly adaptive regret guarantees with time-varying movement costs. At a high level, our approach is a reduction to sleeping experts with movement costs and is based on a meta-base algorithm design. Each base learner is responsible for one interval in a geometric covering of the entire time horizon, and the meta-algorithm assigns weights over the currently active base learners. Specifically, we decompose the movement-cost regret of the final predictor into two parts: the regret of a base learner, which includes the movement cost incurred by that base learner, and the regret of the meta-algorithm, which includes the movement cost of changing the meta-algorithm's weights. This decomposition allows us to treat the meta-level problem as a sleeping-experts problem with its own movement costs.

We proceed in three steps. First, we introduce a general reduction from the strongly adaptive regret for OCO with time-varying movement costs to a meta-base regret decomposition of sleeping experts. %
Then, in \Cref{sec:sleeping}, we design and analyze the required
meta-algorithm for sleeping experts with movement costs. Finally, in \Cref{sec:main}, we combine this meta-algorithm with interval-specific base learners and a first-order adaptive-blocking reduction to obtain our strongly adaptive regret guarantee.

We begin by adapting the standard meta-base decomposition to the case of time-varying movement costs. Suppose that on round $t$ the
meta-algorithm outputs a distribution $p_t\in\Delta_{N-1}$ over $N$ experts,
and each expert $i \in [N]$ outputs a decision $w_t^{(i)}\in\calW$. The learner then plays
the convex combination $w_t \triangleq \sum_{i\in[N]}p_{t,i}w_t^{(i)}$. The following lemma shows that the regret of this mixture predictor decomposes
into a base regret term and a meta-regret term. The meta-loss vector contains not
only the linearized loss of each expert, but also a one-step-ahead movement term
for that expert. This one-step shift is what allows the movement cost of the
mixture predictor to be assigned cleanly between the base and meta levels.

\begin{restatable}{lemma}{MCRedux}\label{lemma:mc-redux}
  Let \Cref{asm:lipschitz} hold.
  Suppose on each round $t \in [T]$
  we play $\wt = \sum_{i\in[N]}\pti\wt^{(i)}$, where $\wt^{(i)}\in\ww$ is the output of
  expert $\cA_{i}$ on round $t$ and $\pt\in\Delta_{N-1}$ is the output of a meta-algorithm $\cA_\meta$. Then for any
  $i\in[N]$ and any sequence $\cmp_{1},\ldots,\cmp_{T} \in \ww$,
  \begin{equation*}
  R_{T}(\cmp_{1:T},\lambda_{1:T})
    \le
    \underbrace{\sumtT\inner{\gt, \wt^{(i)}\! - \cmp_{t}}+\lambda_t\norm{\wt^{(i)}\! - \wtmm^{(i)}}}_{\eqqcolon R_{T}^{\cA_{i}}(\cmp_{1:T}, \lambda_{1:T})}
      +\underbrace{\sumtT \inner{\ell_{t},\pt-e_{i}}+D\lambda_t\norm{\pt-\ptmm}_{1}}_{\eqqcolon R_{T}^{\meta}(e_{i}, \lambda_{1:T}^\meta)} ,
  \end{equation*}
  where $g_t \in \partial f_t(w_t)$, $\lambda_t^\meta=D\lambda_t$, 
  and the surrogate loss vector $\ell_t \in \mathbb{R}^N$ is defined coordinate-wise by
$
\ell_{t,j}
\coloneqq
\inner{g_t, w_t^{(j)}}
+
\lambda_t \norm{w_t^{(j)} - w_{\tmm}^{(j)}}, \forall j \in [N].
$
\end{restatable}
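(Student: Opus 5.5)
The plan is to handle the loss part and the movement part of $R_T(\cmp_{1:T},\lambda_{1:T})$ separately, and to add and subtract the chosen expert $i$'s terms so that each piece matches either $R_T^{\cA_i}$ or $R_T^\meta$.

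\emph{Loss part.} By convexity, $f_t(\wt)-f_t(\cmp_t)\le\inner{\gt,\wt-\cmp_t}$. Since $\wt=\sum_j\ptj\wt^{(j)}$, we have $\inner{\gt,\wt}=\sum_j\ptj\inner{\gt,\wt^{(j)}}$. Adding and subtracting $\inner{\gt,\wt^{(i)}}$ gives
\begin{equation*}
\inner{\gt,\wt-\cmp_t}=\inner{\gt,\wt^{(i)}-\cmp_t}+\sum_j(\ptj-e_{i,j})\inner{\gt,\wt^{(j)}} .
\end{equation*}

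\emph{Movement part.} This is the key step. Write
\begin{equation*}
\wt-\wtmm=\sum_j\ptj\brb{\wt^{(j)}-\wtmm^{(j)}}+\sum_j(\ptj-\ptmmj)\wtmm^{(j)} .
\end{equation*}
By the triangle inequality, the first sum has norm at most $\sum_j\ptj\norm{\wt^{(j)}-\wtmm^{(j)}}$. For the second sum, use $\sum_j(\ptj-\ptmmj)=0$ to subtract any fixed point; taking the point $\wtmm^{(1)}$, the sum equals $\sum_j(\ptj-\ptmmj)\brb{\wtmm^{(j)}-\wtmm^{(1)}}$, whose norm is at most $D\norm{\pt-\ptmm}_1$. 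Then add and subtract $\lambda_t\norm{\wt^{(i)}-\wtmm^{(i)}}$:
\begin{equation*}
\lambda_t\sum_j\ptj\norm{\wt^{(j)}-\wtmm^{(j)}}=\lambda_t\norm{\wt^{(i)}-\wtmm^{(i)}}+\lambda_t\sum_j(\ptj-e_{i,j})\norm{\wt^{(j)}-\wtmm^{(j)}} .
\end{equation*}

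\emph{Combining.} Summing over $t$, the linear terms in $\pt-e_i$ from both parts combine to exactly $\inner{\ell_t,\pt-e_i}$ by the definition of $\ell_{t,j}$. What remains is $\inner{\gt,\wt^{(i)}-\cmp_t}+\lambda_t\norm{\wt^{(i)}-\wtmm^{(i)}}$ plus $D\lambda_t\norm{\pt-\ptmm}_1$, which is the claimed bound.

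\emph{Boundary and obstacle.} At $t=1$ I will use the paper's convention that the pre-interval iterates coincide, $w_0=w_1$ and $w_0^{(j)}=w_1^{(j)}$; equivalently $\lambda_1=0$, so all boundary terms vanish. The only delicate point is the second movement term, where one must recentre using $\sum_j(\ptj-\ptmmj)=0$ to get the factor $D$ instead of $2\sup\norm{w}$. Note that the decomposition deliberately puts the current weights $\pt$ on the expert movements. This is why $\ell_t$ contains the movement cost $\lambda_t\norm{\wt^{(j)}-\wtmm^{(j)}}$ for the transition into round $t$.
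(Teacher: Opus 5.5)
Your proof is correct and follows the paper's argument essentially step for step: convexity, the split $w_t-w_{t-1}=\sum_j p_{t,j}(w_t^{(j)}-w_{t-1}^{(j)})+\sum_j(p_{t,j}-p_{t-1,j})w_{t-1}^{(j)}$, and absorbing both the linear terms and the $p_t$-weighted expert movements into $\inner{\ell_t,p_t-e_i}$ plus expert $i$'s own terms. The one difference is minor: the paper bounds the second sum via $\|w_{t-1}^{(j)}\|\le D$, which relies on the standing assumption $\mathbf{0}\in\mathcal{W}$, whereas your recentering with $\sum_j(p_{t,j}-p_{t-1,j})=0$ gets the factor $D$ without needing that assumption.
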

The full proof is deferred to \Cref{app:mc-redux}. 
Note that in comparison to the regret decomposition in \citet{qiu2026parameterfree}---which aggregates decisions by summing the outputs of various experts and hence requires comparator-adaptive base algorithms in unconstrained domains---our framework is designed for the more general constrained setting with arbitrary base algorithms $\cA_i$. In other words, our regret decomposition also applies to arbitrary convex feasible sets, ensuring that the learner's decisions remain valid. While this result is fairly simple, we believe that this decomposition is already of independent interest: it is relatively straight-forward to derive guarantees 
matching the ones derived in \citet{qiu2026parameterfree} but in constrained settings using this decomposition, which was not possible using their aggregation-via-iterate-adding approach.

To obtain strongly-adaptive guarantees, we further extend \Cref{lemma:mc-redux}
to the sleeping-experts setting with movement costs. We instantiate the
experts as interval specialists, where each expert $\calA_i$ is active only on
a prescribed interval, and the meta-algorithm is required to compete with any
active expert on that interval. Let $\alpha\in\{0,1\}^{T\times N}$ denote the
activation matrix, where $\alpha_{t,i}=1$ if expert $\calA_i$ is active on round
$t$ and $\alpha_{t,i}=0$ otherwise. For any comparator distribution
$\cmp\in\Delta_{N-1}$ over experts, we define the sleeping-experts regret with
movement costs, with respect to the activation matrix $\alpha$, as
\begin{equation}
    R_T^\alpha(\cmp, \lambda_{1:T})
    \triangleq
    \sumtT \sum_{i\in[N]}
    \alpha_{t,i}\cmp_i
    \bigl(
        \inner{\ell_t,\pt}
        - \ell_{t,i}
        + \lambda_t\norm{\pt-\ptmm}_1
    \bigr) \;.\label{eq:mc-sleeping-regret}
\end{equation}

The next lemma shows that the meta-base decomposition yields an
interval guarantee whenever the competing expert is active on that interval.
\begin{restatable}{lemma}{SAMCRedux}\label{lemma:sa-mc-redux}
  Under the same assumptions as \Cref{lemma:mc-redux},
  let $[a,b]\subseteq[T]$ and let 
  the active rounds of $\cA_i$ coincide with interval $[a,b]$, i.e., $\alpha_{t,i}=\Indicator\Set{t\in[a,b]}$ for all $t \in [T]$. 
  Then for any sequence $\cmp_{a:b}=(\cmp_{a},\ldots,\cmp_{b})$ over $\calW$,
  \begin{equation*}
    R_{\ab}(\cmp_{a:b}, \lambda_{a:b})
    \le
      R_{T}^{\alpha}(e_{i},\lambda_{1:T}^{\meta}) + R_{\ab}^{\cA_{i}}(\cmp_{a:b},\lambda_{a:b}) \;.
  \end{equation*}
\end{restatable}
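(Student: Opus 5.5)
The plan is to repeat the argument behind \Cref{lemma:mc-redux}, but round by round and only over $t\in[a,b]$. The activation pattern $\alpha_{t,i}=\Indicator\Set{t\in[a,b]}$ then makes the meta-level sum coincide with $R_T^\alpha(e_i,\lambda^\meta_{1:T})$. Throughout I use the meta-loss $\ell_t$ from \Cref{lemma:mc-redux}, built with the actual coefficients $\lambda_t$. I interpret $R_{\ab}^{\cA_i}(\cmp_{a:b},\lambda_{a:b})$ as $\sumab \inner{\gt,\wt^{(i)}-\cmp_t}+\lambda_t\norm{\wt^{(i)}-\wtmm^{(i)}}$ with those same coefficients.

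\textbf{Step 1 (linearize).} By convexity, $f_t(\wt)-f_t(\cmp_t)\le\inner{\gt,\wt-\cmp_t}$ for every $t\in[a,b]$.

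\textbf{Step 2 (split the mixture's movement).} For $t\in[a+1,b]$, the key inequality is
\[
\norm{\wt-\wtmm}\le\sum_{j}\ptj\norm{\wt^{(j)}-\wtmm^{(j)}}+D\norm{\pt-\ptmm}_1 .
\]
To obtain it, write
\[
\wt-\wtmm=\sum_j\ptj\bigl(\wt^{(j)}-\wtmm^{(j)}\bigr)+\sum_j(\ptj-p_{\tmm,j})\wtmm^{(j)} .
\]
The first sum is handled by the triangle inequality. For the second, note that $\sum_j(\ptj-p_{\tmm,j})=0$, so we may subtract any fixed point $x\in\ww$ and write it as $\sum_j(\ptj-p_{\tmm,j})(\wtmm^{(j)}-x)$. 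Each $\norm{\wtmm^{(j)}-x}$ is at most $D$, which gives the bound $D\norm{\pt-\ptmm}_1$.

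\textbf{Step 3 (regroup into meta and base terms).} The definition of $\ell_t$ gives
\[
\inner{\gt,\wt}+\lambda_t\sum_j\ptj\norm{\wt^{(j)}-\wtmm^{(j)}}=\inner{\ell_t,\pt}.
\]
Adding and subtracting $\ell_{t,i}=\inner{\gt,\wt^{(i)}}+\lambda_t\norm{\wt^{(i)}-\wtmm^{(i)}}$ then yields, for each $t$,
\[
\inner{\gt,\wt-\cmp_t}+\lambda_t\norm{\wt-\wtmm}\le\bigl(\inner{\ell_t,\pt}-\ell_{t,i}+D\lambda_t\norm{\pt-\ptmm}_1\bigr)+\bigl(\inner{\gt,\wt^{(i)}-\cmp_t}+\lambda_t\norm{\wt^{(i)}-\wtmm^{(i)}}\bigr).
\]
This inequality holds at every $t\in[a,b]$, including $t=a$, for whatever values $w_{a-1}^{(j)}$ and $p_{a-1}$ take.

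\textbf{Step 4 (sum over the interval).} Summing over $t\in[a,b]$ bounds $R_{\ab}(\cmp_{a:b},\lambda_{a:b})$ from above. The regret itself does not charge the mixture's movement at round $a$; the summed left-hand side only adds the nonnegative term $\lambda_a\norm{w_a-w_{a-1}}$, so the bound remains valid. Because $\alpha_{t,i}=\Indicator\Set{t\in[a,b]}$, the first bracket summed over $[a,b]$ is exactly $R_T^\alpha(e_i,\lambda^\meta_{1:T})$ with $\lambda^\meta_t=D\lambda_t$. The second bracket sums to $R_{\ab}^{\cA_i}$.

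\textbf{Main obstacle.} The only delicate point is boundary bookkeeping at $t=a$. The meta-loss $\ell_a$ contains the term $\lambda_a\norm{w_a^{(j)}-w_{a-1}^{(j)}}$, and the sleeping regret charges $D\lambda_a\norm{p_a-p_{a-1}}_1$. I must make sure these are matched consistently by the corresponding term in $R_{\ab}^{\cA_i}$, and are not silently dropped by the convention $\lambda_t=0$ outside $[a+1,b]$. Since $\ell_t$ is defined globally, I will keep the true $\lambda_a$ on both the meta and base sides. If instead one adopts the convention $\lambda_a=0$ together with $w^{(i)}_{a-1}=w^{(i)}_a$, the same per-round inequality holds verbatim and the $t=a$ movement terms vanish. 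Either way, the per-round inequality of Step 3 is the whole content of the proof.
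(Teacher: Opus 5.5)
Your proposal is correct and takes essentially the paper's route. The paper simply invokes \Cref{lemma:mc-redux} restricted to $[a,b]$ and then uses $\alpha_{t,i}=\Indicator\Set{t\in[a,b]}$ to rewrite $\sum_{t\in[a,b]}\bigl(\inner{\ell_t,p_t-e_i}+D\lambda_t\norm{p_t-p_{t-1}}_1\bigr)$ as $R_T^\alpha(e_i,\lambda^\meta_{1:T})$; you instead re-derive the per-round inequality behind that lemma, sum it, and make the same identification. Your explicit treatment of round $a$ fills in a detail the paper leaves implicit: you drop the nonnegative term $\lambda_a\norm{w_a-w_{a-1}}$ and use the same $\lambda_a$ in both $\ell_{a,i}$ and the base term. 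Likewise, your centering trick in Step 2 replaces the paper's use of $\zeros\in\ww$ to bound $\norm{w_{t-1}^{(j)}}\le D$.
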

The proof is deferred to \Cref{app:sa-mc-redux}. This lemma reduces the problem of strong adaptivity to two separate tasks: designing a sleeping-experts meta-algorithm and developing robust base learners for intervals, both of which incorporate movement costs. We address the meta-algorithm design first in \pref{sec:sleeping}.

%% file: sections/sleeping.tex
\subsection{\SecSleeping}%
\label{sec:sleeping}

Using the specialists reduction from \Cref{lemma:sa-mc-redux}, we know we can obtain strongly-adaptive guarantees
for movement-cost regret if we can control the 
movement-cost sleeping-experts regret $R_T^\alpha(\cmp,\lambda_{1:T})$ defined in \Cref{eq:mc-sleeping-regret}.
In particular, observe that if we can obtain an adaptive bound of the form
\begin{align}
    R_T^\alpha(\cmp,\lambda_{1:T})
    &=\tilde\cO\left(\sum_{i\in[N]}\cmp_i\sqrt{\sum_{t\in[T]} \alpha_{\ti}(\inner{\ell_t,e_i-\pt}^2+\lambda_t^2)}\right) \;,
    \label{eq:desired-bound}
\end{align}
then given a specialist $\cA_I$ defined only on the interval $I=\ab$, the meta-regret will automatically
restrict to that same interval---that is,
\(
R_T^\alpha(e_I,\lambda_{1:T})
=
\tilde\cO\Bigl(\sqrt{\sum_{t\in[T]} \alpha_\tI(\inner{\ell_t,e_i-\pt}^2+\lambda_t^2)}\Bigr)
\le 
\tilde\cO\Bigl(\sqrt{\sumab\inner{\ell_t,e_i-p_t}^2+\lambda_t^2}\Bigr)
\), so that 
the specialists decomposition 
in \Cref{lemma:sa-mc-redux} leads to 
$\tilde\cO(\sqrt{|b-a|})$ regret overall.
Our approach leverages a reduction to OLO similar to 
\citet{jun2017improved}, but our approach differs from theirs as we are able leverage FTRL-based arguments similar to \citet{jacobsen2022parameter} rather than challenging betting arguments; this is a key distinction, as it is not obvious how to extend the betting-based approach of \citet{jun2017improved} to incorporate adaptive movement costs or coordinate-wise Lipschitz hints $\hat L_\tppi$.
\begin{algorithm}[h]
  \textbf{Input:}~$L>0$, value $\epsilon_i>0,~\forall i\in[N]$, activation map $t\mapsto \alpha_{t,i}$ supported on $[a_i,b_i]$ for each $i\in[N]$\\
  \textbf{Initialize:}~ $z_{1}=\zeros\in\R^{N}$\\
  \textbf{Define:}~
  \(\psi(z; V,\beta)=3\int_{0}^{z}\min_{0<\eta\le 1/L}\sbrac{\frac{\Log{x /\beta+1}}{\eta}+\eta V} \diff x\)\\
  \For{$t=1,\dots,T$}{
    Play
    $\pt = \frac{\zt\odot\alpha_t}{\norm{\zt\odot\alpha_t}_1}$ if $\norm{\zt\odot\alpha_t}_{1}\ne 0$ and $p_t=\tfrac{\alpha_t}{\norm{\alpha_t}_1}$ otherwise\\
    Observe loss vector $\ell_{t}\in\R^{N}$, movement coefficient $\lambda_\tpp\ge 0$\\
    \For{$i=1,\ldots,N$}{
        Define $\ellti^\lambda = \alpha_{ti}(\inner{\ell_t,e_i-\pt}-\lambda_t\norm{\pt-\ptmm}_1)+\alpha_\ti(1-\alpha_\tppi)2\lambda_\tpp,~\forall i\in[N]$\\
        Set $\lambda_\tpp^{(i)}=2\alpha_\ti\alpha_\tppi\lambda_\tpp,\quad V_\tppi^\lambda = 4L^2 + \sum_{\tau=1}^t(\abs{\ellti^\lambda}+\lambda_\tpp^{(i)})^2$,\\
        \hphantom{Set }and $\beta_{\tppi}=\frac{\epsilon_i L}{\sqrt{V_\tppi^\lambda}\log^2(V_\tppi^\lambda/L^2)}$, $\forall i\in[N]$\\
        Update \(\ztppi = \argmin_{\z\in [0,1]}\sbrac{\brac{\sum_{s=1}^{t}\ell_\si^\lambda}\z_{i}+\psi(\z_{i}; V_{\tpp,i}^\lambda,\beta_\tppi)}\)
    }
  }
  \caption{Adaptive Sleeping Experts with Movement Costs}
  \label{alg:sleeping-mvmt-costs}
\end{algorithm}

The required adaptive sleeping-experts algorithm that is also robust to movement costs is provided by \Cref{alg:sleeping-mvmt-costs}.
One can show that it does indeed guarantee a regret bound as stated in \Cref{thm:sleeping-mvmt-costs} below.

\begin{restatable}{theorem}{SleepingMvmtCosts}\label{thm:sleeping-mvmt-costs}
  Suppose that $\norm{\ell_{t}}_{\infty}\le G_{\infty}$ and $\lambda_{t}\le \lambda$
  for all $t$.
  Then
  for any $\cmp\in \Delta_{N-1}$,
  \Cref{alg:sleeping-mvmt-costs} with
  $L=2G_{\infty}+5\lambda$ guarantees
  \begin{align*}
     R_T^{\alpha}(\cmp, \lambda_{1:T}) 
     &\le 
     \tilde\cO\Bigg(
     \Gamma(\epsilon)+
     \sum_{i\in[N]}u_i\sqrt{\sum_{t\in[T]} \alpha_\ti\Big((\ellti-\inner{\ell_t,\pt})^2+\lambda_t^2\Big)\Log{\tfrac{u_i \sqrt{V_{T+1,i}^\lambda}}{\epsilon_i L}+1}}
     \Bigg),
  \end{align*}
  where
  $\Gamma(\epsilon)= L\big(1+\sum_{i}\epsilon_{i}\big)$.
\end{restatable}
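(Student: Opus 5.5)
The plan is to reduce the sleeping-experts regret $R_T^\alpha(\cmp,\lambda_{1:T})$ to $N$ independent one-dimensional online linear optimization problems on $[0,1]$, one per coordinate $i$. Each is played by the FTRL update of \Cref{alg:sleeping-mvmt-costs} on the surrogate losses $\ell_{t,i}^\lambda$. The reduction follows \citet{jun2017improved}, but the potential and the movement-cost bookkeeping will be handled with the FTRL analysis of \citet{jacobsen2022parameter}.

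\emph{Step 1: the sign identity.} First I will show $\sum_i \alpha_{t,i} z_{t,i}\bigl(\inner{\ell_t,e_i-\pt}\bigr)=0$, which is immediate from $\pt\propto z_t\odot\alpha_t$; in the degenerate case $\|z_t\odot\alpha_t\|_1=0$ every term vanishes. It follows that $\sum_i z_{t,i}\ell^\lambda_{t,i}$ equals the nonpositive movement part plus the correction $\sum_i z_{t,i}\alpha_{t,i}(1-\alpha_{t+1,i})2\lambda_{t+1}$. I then rewrite the target regret per coordinate as
\[
\sum_t \alpha_{t,i}\cmp_i\bigl(\inner{\ell_t,\pt}-\ell_{t,i}+\lambda_t\|\pt-\ptmm\|_1\bigr) = -\cmp_i\sum_t \ell_{t,i}^\lambda + (\text{correction terms}).
\]
Summing over $i$ and adding the sign identity gives
\[
R_T^\alpha(\cmp,\lambda_{1:T})\le \sum_i \sum_t \ell^\lambda_{t,i}(z_{t,i}-\cmp_i) + \text{(movement of } \pt\text{ vs. movement of } z_t\text{)}.
\]

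\emph{Step 2: controlling $\lambda_t\|\pt-\ptmm\|_1$.} Here I need $\|\pt-\ptmm\|_1$ to be bounded by a multiple of $\sum_i |z_{t,i}-z_{t-1,i}|$, using the normalization $\|z_t\odot\alpha_t\|_1$. Experts that wake up or fall asleep make this fail. The $\alpha_{ti}(1-\alpha_{t+1,i})2\lambda_{t+1}$ term in $\ell^\lambda$ and the coefficient $\lambda^{(i)}_{t+1}=2\alpha_{ti}\alpha_{t+1,i}\lambda_{t+1}$ are there precisely to pay for the expert-set changes and the persistent movement, respectively. So the per-coordinate problem becomes one-dimensional OLO with movement cost $\lambda^{(i)}_{t+1}|z_{t+1,i}-z_{t,i}|$.

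\emph{Step 3: the one-dimensional guarantee.} For each $i$ I will prove an FTRL bound with time-varying potential $\psi(\cdot;V_{t+1,i}^\lambda,\beta_{t+1,i})$. The standard FTRL-with-movement lemma gives linear regret plus $\sum_t\lambda^{(i)}_{t+1}|z_{t+1,i}-z_{t,i}|$. It is bounded by
\[
\psi(\cmp_i;V_{T+1,i}^\lambda,\beta_{T+1,i})+\sum_t\Bigl[\text{stability terms}\Bigr]+\sum_t\bigl[\psi_{t+1}-\psi_t\bigr](z_{t+1,i}).
\]
The potential's $\min_\eta$ form gives $\psi(\cmp_i)\lesssim \cmp_i\sqrt{V_{T+1,i}^\lambda\log(\cmp_i/\beta_{T+1,i}+1)}$ plus $L\cmp_i\log(\cdot)$. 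Since $\beta_{T+1,i}$ contains $\sqrt{V}/(\epsilon_iL)$ up to $\log^2$, this gives the stated log factor. The stability terms are handled by noting that $\psi''(z)\gtrsim 1/(\eta(z+\beta))$ and $|\ell^\lambda_{t,i}|+\lambda^{(i)}_{t+1}\le L$, which is why $L=2G_\infty+5\lambda$ is chosen. Under these conditions, the sum of local-norm terms $(|\ell^\lambda|+\lambda^{(i)})^2/\psi''$ telescopes against the increase in $V$. The residual from the decreasing $\beta$ is summable thanks to the $\log^2$ in the denominator, which contributes $O(\epsilon_iL)$ and hence $\Gamma(\epsilon)$. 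Finally, $(|\ell^\lambda_{t,i}|+\lambda^{(i)}_{t+1})^2\lesssim\alpha_{ti}((\ell_{t,i}-\inner{\ell_t,\pt})^2+\lambda_t^2+\lambda_{t+1}^2)$. After index shifting, this lets me rewrite $V$ in the form appearing in the statement.

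\emph{Main obstacle.} I expect Step 2 to be the crux. The bound $\|\pt-\ptmm\|_1\lesssim \sum_i|z_{t,i}-z_{t-1,i}|/\|z_t\odot\alpha_t\|_1$ can blow up when the normalizer is small. My first attempt is to charge this movement entirely within the $z$-problem, using the fact that the sign identity holds with the $z$-weights rather than the normalized weights. Concretely, I would multiply the movement term by $\|z_t\odot\alpha_t\|_1$, which is possible because the regret is linear in that scaling. The $\alpha_{ti}(1-\alpha_{t+1,i})$ correction then absorbs the sleeping transitions.
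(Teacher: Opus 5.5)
Your proposal is correct and follows essentially the same route as the paper. The $z$-weighted sign identity of your Step~1 already yields the movement term $\lambda_t\|p_t-p_{t-1}\|_1\|z_t\odot\alpha_t\|_1$, so the normalizer never sits in a denominator. The paper bounds this term by $2\lambda_t\|z_t^\alpha-z_{t-1}^\alpha\|_1$, writing $z_t^\alpha=z_t\odot\alpha_t$ and using $\|z_t^\alpha\|_1\|p_t-p_{t-1}\|_1=\|z_t^\alpha-\|z_t^\alpha\|_1p_{t-1}\|_1$ together with the triangle inequality. It then splits the result into persistent movement, charged through $\lambda^{(i)}_t$, and fall-asleep terms, absorbed into $\ell^\lambda_{t,i}$ at a cost of $2\sum_i u_i\lambda_{b_i+1}\le 2\lambda$, and finally applies the parameter-free FTRL-with-movement-costs theorem coordinate-wise. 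One correction to your description: the $\alpha_{t,i}(1-\alpha_{t+1,i})$ correction pays only for falling asleep. Wake-ups cost nothing because $z_{a_i,i}=0$, since all earlier surrogate losses for expert $i$ vanish and $\psi$ is minimized at $0$.
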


Proof and full statement of the bound can be found in
\Cref{app:sleeping-mvmt-costs}.
The guarantee matches the desired bound in \Cref{eq:desired-bound} and it is, to
our knowledge, the \emph{first} second-order regret guarantee for sleeping
experts, and further handles time-varying movement costs while adapting to the corresponding coefficients $\lambda_t$.
Our bound can be seen as a generalization of the one obtained by the CBCE algorithm of \citet{jun2017improved} to account for movement costs, leverage Lipschitz-bound hints $\hat L_\tpp$, and achieve data-adaptive logarithmic penalties. In fact, as we show in \Cref{app:relationtojun}, our bound is somewhat stronger than the more familiar $\tilde\cO\Bigl(\sqrt{\sum_{ti}u_i \inner{\ell_t,e_i-\pt}^2(\textrm{KL}(u|\epsilon)+1)}\Bigr)$ bound of \citet{jun2017improved}, as our result can be shown to imply this bound via Cauchy-Schwarz inequality.

%% file: sections/main-result.tex
\section{\SecMain}
\label{sec:main}

In this section we present the main result of the paper: a strongly-adaptive algorithm for movement-cost dynamic regret achieving $\tilde\cO\Brb{\sqrt{(D^2+DP_\ab)\sumab(\norm{\gt}^2+\lambda_t\norm{\gt})}}$ regret for every time interval $\ab\subseteq[T]$ simultaneously.

We first introduce one additional reduction.
A direct application of \Cref{thm:sleeping-mvmt-costs} in the context of the specialists reduction in \Cref{lemma:sa-mc-redux} naturally leads to bounds depending on
$\sum_t(\|g_t\|^2+\lambda_t^2)$. For time-varying movement costs, the sharper
quantity is instead
\(
    \sum_t (\|g_t\|^2+\lambda_t\|g_t\|),
\)
which can always be bounded in terms of the former quantity via AM-GM inequality, but can be significantly tighter when $\norm{\gt}\ll \lambda_t$ on average, meaning cases where the learner's actions eventually become sufficiently close to being optimal while movements tend to be quite expensive. Moreover, as shown by 
\citet{qiu2026parameterfree}, obtaining the first-order dependence on a time-varying sequence of $\lambda_t$
is crucial for obtaining optimal guarantees in the delayed feedback setting, which we later discuss in \Cref{sec:delays}.
To obtain this dependence, we use an adaptive first-order blocking strategy inspired by the approach of \citet{qiu2026parameterfree} and provided in \Cref{alg: dyn_para_meta_2}. The
algorithm groups consecutive rounds into epochs and passes the aggregate
gradient of each completed epoch to the base learner. Note that 
the blocking condition we use is based 
on $\sum_{t}\|g_t\|$ rather than the $\norm{\sum_t\gt}$ used by \citet{qiu2026parameterfree}.
This is crucial in the strongly-adaptive setting because an arbitrary interval $[a,b]$ may begin or end in the middle of an epoch, and the non-monotonic $\norm{\sum_t\gt}$ condition makes it difficult to ensure that these partial epochs do not ruin the bound.

The following lemma formalizes the key property of this blocking reduction
needed in the main analysis. It shows that the regret on an arbitrary interval
can be reduced to regret over the full epochs contained in that interval, up to
a lower-order boundary term. The proof of this lemma can be found in \pref{app:firstorderreduction}~.

\begin{restatable}{lemma}{SABlocking}\label{lemma:sa-blocking}
    Let \Cref{asm:lipschitz} hold. 
    Let $(I_\tau)_{\tau}$ be the set of intervals defined during the execution of the blocking reduction (\Cref{alg: dyn_para_meta_2}), and let 
    $\cE_\ab=\Set{\tau: I_\tau\subseteq\ab}$ denote the set of 
    complete epochs contained within $\ab$.
    For any comparator sequence $(u_t)_{t\in\ab}$ in $\calW$, \Cref{alg: dyn_para_meta_2} with any base algorithm $\calA$ guarantees that
    \begin{align*}
        R_\ab(\cmp_{a:b},\lambda_{a:b})\le 3(D+P_\ab)(G+\lambda_{\max}^\ab)+\sum_{\tau\in\cE_\ab}\inner{\tgtau,\twtau-\tcmp_\tau} + \tlambda_\tau\norm{\twtau-\tw_\taumm}.
    \end{align*}
    where $\widetilde{u}_\tau \in \argmin_{u \in \{u_t : t \in I_\tau\}} \norm{u}$ is any minimum-norm comparator in epoch $\tau\in\cE_\ab$, $\widetilde{w}_\tau$ and $\tlambda_\tau$ are respectively the action played by $\calA$ and the movement coefficient in epoch $\tau$, and $\widetilde{g}_\tau = \sum_{t \in I_\tau} g_t$ with $g_t \in \partial f_t(w_t)$ for all $t \in [T]$.
\end{restatable}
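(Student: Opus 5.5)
We do not see \Cref{alg: dyn_para_meta_2} itself, so the plan relies on the structure described in the text. The algorithm partitions $[T]$ into consecutive epochs $I_1,I_2,\dots$ and plays $\wt=\twtau$ for all $t\in I_\tau$, so there is no movement inside an epoch. An epoch closes once the accumulated $\sum_{t\in I_\tau}\norm{\gt}$ reaches a threshold of order $D$-independent size comparable to $G+\lambda$. Each closed epoch feeds $\tgtau=\sum_{t\in I_\tau}\gt$ and some coefficient $\tlambda_\tau$ to the base learner $\calA$. We assume $\tlambda_\tau$ is at least the coefficient charged at the epoch's first round, or more generally that $\tlambda_\tau\ge\lambda_{t}$ for the round at which the switch is paid.

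The interval $\ab$ splits into three pieces: a prefix lying in a partial epoch at the start, the union of the complete epochs $\cE_\ab$, and a suffix lying in a partial epoch at the end.

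\textbf{Step 1 (linearize).} By convexity,
\[
R_\ab\le\sumab\inner{\gt,\wt-\cmp_t}+\sum_{t=a+1}^b\lambda_t\norm{\wt-\wtmm}.
\]

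\textbf{Step 2 (boundary pieces).} On the prefix and suffix, bound the linear terms crudely by $\sum\norm{\gt}\,\norm{\wt-\cmp_t}\le D\sum\norm{\gt}$. The key fact is that a partial epoch has accumulated less than the threshold of $\sum\norm{\gt}$, because the blocking condition uses the monotone quantity $\sum_t\norm{\gt}$. This is exactly why a partial piece is controlled by one threshold plus one extra gradient, i.e.\ $\cO(G)$ times a constant, giving a contribution of order $D\,G$.

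For movement, the iterate changes only at epoch boundaries. Inside $\ab$ the only switches not charged to some $\tlambda_\tau\norm{\twtau-\tw_\taumm}$ with $\tau\in\cE_\ab$ are the switch into the first complete epoch (from the prefix epoch's iterate) and the switch into the suffix epoch. Each costs at most $\lambda_{\max}^\ab D$.

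\textbf{Step 3 (complete epochs).} For $\tau\in\cE_\ab$, since $\wt=\twtau$ on $I_\tau$,
\[
\sum_{t\in I_\tau}\inner{\gt,\wt-\cmp_t}=\inner{\tgtau,\twtau-\tcmp_\tau}+\sum_{t\in I_\tau}\inner{\gt,\tcmp_\tau-\cmp_t}.
\]
The last sum is at most $\sum_{t\in I_\tau}\norm{\gt}\,\norm{\tcmp_\tau-\cmp_t}$, and $\norm{\tcmp_\tau-\cmp_t}\le P_{I_\tau}$, the path length inside the epoch. Since each epoch's total $\sum_{t\in I_\tau}\norm{\gt}$ is at most threshold plus $G$, i.e.\ $\cO(G+\lambda)$, summing over $\tau$ gives $\cO((G+\lambda^\ab_{\max})P_\ab)$.

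Here a subtlety arises: the threshold must be tied to $\lambda_{\max}^\ab$ rather than the global $\lambda$. I would check how the algorithm sets it. Possibly the closing rule is $\sum\norm{\gt}\ge\lambda_t$ or similar, which would make the bound per epoch at most $G+\lambda_t\le G+\lambda^\ab_{\max}$.

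\textbf{Step 4 (collect constants).} Combining the two boundary pieces $\cO(D(G+\lambda_{\max}^\ab))$, the two uncharged switches $2D\lambda^\ab_{\max}$, and the drift term $\cO(P_\ab(G+\lambda^\ab_{\max}))$ yields $3(D+P_\ab)(G+\lambda^\ab_{\max})$, after also matching the within-epoch movement coefficients against $\tlambda_\tau$.

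\textbf{Main obstacle.} The main difficulty is Step 2/3's reliance on the exact threshold and on the definition of $\tlambda_\tau$. The boundary and drift terms must be $\cO(G+\lambda_{\max}^\ab)$ per epoch, not $\cO(\lambda)$. The movement term $\lambda_t\norm{\wt-\wtmm}$ at the epoch's first round must also be dominated by $\tlambda_\tau$. I would first try to prove the one-epoch inequality $\sum_{t\in I_\tau\setminus\{\text{last}\}}\norm{\gt}<\lambda_{t}$ (or the stated threshold), and then verify that the constant $3$ survives the accounting.
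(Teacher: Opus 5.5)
Your plan is correct and follows the paper's skeleton. You linearize, bound the (at most two) partial epochs meeting $\ab$ by $D$ times their gradient mass, rewrite each complete epoch as $\inner{\tgtau,\twtau-\tcmp_\tau}$ plus a drift term, and charge interior switches to the $\tlambda_\tau$ terms. The details you were missing resolve as you hoped. The blocking reduction closes epoch $\tau$ after round $t$ iff $\sum_{s=s_\tau}^{t}\norm{g_s}>\lambda_{t+1}$, and sets $\tlambda_{\tau+1}=\lambda_{t+1}=\lambda_{s_{\tau+1}}$. So the switch cost at $s_\tau$ is exactly $\tlambda_\tau\norm{\twtau-\tw_{\tau-1}}$, and $\sum_{t=s_\tau}^{s-1}\norm{g_t}\le\lambda_s$ for every $s\in(s_\tau,e_\tau]$. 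Hence any epoch, or its truncation at $b$, has gradient mass at most $G+\lambda_{\max}^\ab$. Each boundary piece therefore costs $D(G+\lambda_{\max}^\ab)$, not ``order $DG$'' as your Step 2 says.

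The genuine difference is the drift term. The paper splits $\sum_{t\in I_\tau}\inner{g_t,\tcmp_\tau-u_t}$ into two pieces:
\begin{itemize}
\item $A=\sum_{t\in I_\tau}\inner{g_t,u_{e_\tau}-u_t}$, which summation by parts bounds by $\sum_s\lambda_s\norm{u_s-u_{s-1}}$;
\item $B=\inner{\tgtau,\tcmp_\tau-u_{e_\tau}}\le\norm{\tgtau}P_{I_\tau}$.
\end{itemize}
Together these give $(G+2\lambda_{\max}^\ab)P_\ab$. Your one-line bound $\bigl(\sum_{t\in I_\tau}\norm{g_t}\bigr)P_{I_\tau}\le(G+\lambda_{\max}^\ab)P_{I_\tau}$ is simpler and slightly tighter. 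It does need the per-epoch mass $\sum_{t\in I_\tau}\norm{g_t}$ to be controlled, which is exactly what this paper's $\sum_t\norm{g_t}$ blocking rule provides. The paper's split only needs $\norm{\sum_{t=s_\tau}^{s-1}g_t}\le\lambda_s$ and $\norm{\tgtau}\le G+\lambda_{\max}^\ab$. That is why it also survives the $\norm{\sum_t g_t}$ rule of \citet{qiu2026parameterfree}, and why $A$ isolates the finer comparator movement cost $\sum_s\lambda_s\norm{u_s-u_{s-1}}$ before it is relaxed.

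One bookkeeping slip breaks the constant $3$ if taken literally. The switch into the first complete epoch $\tau_0$ is not uncharged: it is exactly the $\tau_0$ summand of $\sum_{\tau\in\cE_\ab}\tlambda_\tau\norm{\twtau-\tw_{\tau-1}}$ on the right-hand side. Only the switch into the trailing partial epoch is uncharged. With your count of two, the coefficient of $D$ becomes $2G+4\lambda_{\max}^\ab$, which exceeds $3(G+\lambda_{\max}^\ab)$ whenever $\lambda_{\max}^\ab>G$ and $P_\ab=0$. With the correct count you get $2D(G+\lambda_{\max}^\ab)+D\lambda_{\max}^\ab+(G+\lambda_{\max}^\ab)P_\ab\le 3(D+P_\ab)(G+\lambda_{\max}^\ab)$, as stated.
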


Now applying this reduction on top of the 
specialists reduction in \Cref{sec:samc} with the sleeping-experts algorithm of \Cref{sec:sleeping}, we are ready to derive the main result.
We proceed by constructing 
a set of experts $\Set{\cA_{I} : I\in\cI}$ 
such that $\cA_{I}$ is active 
only on interval $I\in \cI$, where $\cI$ is the standard
set of geometric covering intervals \citep{daniely2015strongly}:
\[
    \cI = \bigcup_{k\in\Set{0,1,\ldots}}\Set{[i 2^k,(i+1)2^k-1]:i=1,2,\ldots} \;.
\]
The set of active intervals at time $t$ is then
$\Active(t)=\Set{I\in\cI : t\in I}$, and we will set $\alpha_{t,I}=\Indicator\Set{I\in \Active(t)}$ as the activation indicator. 
For our purposes here, it is sufficient to take each $\cA_I$ to be an instance of adaptive (sub)-gradient descent  applied on interval $I$ (see \Cref{app:adaptive-gd}). Note
that an expert is updated 
only if they are active \emph{and}
were also active at the start of the epoch;
 this avoids contaminating the feedback of an expert $\cA_\ab$ with feedback from
 outside $\ab$.
Composing these results leads to the design of \Cref{alg:ol-mvmt-costs} with the following strongly-adaptive guarantee.

\begin{algorithm}[t!]
  \textbf{Input:} 
   values $L>0$, 
  Geometric covering intervals $\cI$, algorithm $\cA_I$ 
  for each $I\in\cI$, meta-algorithm\\
  \hphantom{Input: }$\cA_\meta$ defined on $\Delta_{\cI}$ with parameters $\epsilon_I$ for each $I\in\cI$\\
  \textbf{Initialize:}~Initialize $\cA_\ab$ and initialize $\cA_\meta$ 
  with $\epsilon_{[a,b]} = \tfrac{1}{a^2 (1+\lfloor\log_2a\rfloor)}$ for each $[a,b]\in \cI$,\\
  \hphantom{Initialize: }
  $\tau = 1$, $\tlambda_\tau=0$, $\twtau=\zeros$, $\tg_\tau=\zeros$, $H_\tau=\mathbf{0}$, $\talpha_{\tau}=(\alpha_{\tau,I})_{I\in\cI}$, $s_\tau=1$\\
    \For{$t = 1, 2, \dots$}{
        Play $\wt = \twtau$, observe $\gt\in\partial\ft(\wt)$ and $\lambda_{\tpp}$\\
        Update $\tgtau \leftarrow \tgtau+ g_t $ and $H_\tau \leftarrow H_{\tau}+\norm{\gt}$\\
        \If{$ H_{\tau} > \lambda_{t+1}$}{

        Set $\widetilde\lambda_{\tau+1} = \lambda_{t+1},\quad\talpha_{\taupp}=(\alpha_{\tppI})_{I\in\cI},\quad s_{\tau+1}=t+1$\\
        Send $(\tgtau,\tlambda_{\tau+1})$
        to $\cA_I$ and retrieve $\wtpp^{(I)}$ for each $I\in\Active(t)\cap \Active(s_\tau)$\\
        Send 
        $(\telltau, D\tlambda_{\taupp})$ to $\cA_\meta$ where $\telltauI = \talpha_{\tauI}\brac{\inner{\tgtau,\twtau^{(I)}}+\tlambda_{\tau}\norm{\twtau^{(I)}-\twtaumm^{(I)}}}$\\
        Update
        $\tilde w_{\tau+1}=\sum_{I\in\cI} p_{\tau+1,I}w_{\taupp}^{(I)},\quad
          H_{\tau+1}=0,\quad\tg_\taupp = \zeros,\quad\text{and}\quad\tau \gets \tau + 1$
        }
    }
    \caption{Strongly-Adaptive Online Learning with Time-varying Movement Costs}
  \label{alg:ol-mvmt-costs}
\end{algorithm}

\begin{restatable}{theorem}{Main}\label{thm:main}
  Let \Cref{asm:lipschitz} hold and let $L=D(2G+9\lambda)$. 
  \Cref{alg:ol-mvmt-costs} applied with
  base algorithms $\cA_I$ implementing \Cref{alg:adaptive-gd} and $\cA_\meta$ implementing \Cref{alg:sleeping-mvmt-costs}
  guarantees, for any $\ab\subseteq[T]$,  
  any $\cmp_a,\ldots,\cmp_b \in \ww$, and any $\lambda_{a},\ldots,\lambda_b \in [0,\lambda]$, that
   \begin{align*}
       &R_\ab(\cmp_{a:b},\lambda_{a:b})
       \le
        \hat \cO\Bigg(
        (D+P_\ab)\log^2(b)(G+\lambda)\\
        &\qquad\qquad
        + \min\Bigg\{\sqrt{(D^2+DP_\ab)(G^2+G\lambda)\abs{b-a}\log(b)},
        \sqrt{(D^2+DP_\ab)V_\ab^\lambda}\log(b)\Bigg\}
        \Bigg),
   \end{align*}
   where $V_\ab^\lambda=\sum_{t\in \ab}(\norm{\gt}^2+\lambda_t\norm{\gt})$ and $P_\ab=\sum_{t=a+1}^b\norm{\cmp_t-\cmp_\tmm}$.
\end{restatable}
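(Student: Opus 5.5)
We chain the three reductions in order: \Cref{lemma:sa-blocking}, then \Cref{lemma:sa-mc-redux} on the epoch sequence, then \Cref{thm:sleeping-mvmt-costs} for the meta part and the adaptive gradient descent guarantee of \Cref{app:adaptive-gd} for the base part.

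\emph{Step 1 (blocking).} Fix $\ab$ and $\cmp_{a:b}$. \Cref{lemma:sa-blocking} bounds $R_\ab$ by the boundary term $3(D+P_\ab)(G+\lambda)$ plus the epoch-level movement-cost regret $\sum_{\tau\in\cE_\ab}\inner{\tgtau,\twtau-\tcmp_\tau}+\tlambda_\tau\norm{\twtau-\tw_\taumm}$. The epochs in $\cE_\ab$ form a contiguous block $[\tau_a,\tau_b]$. The blocking rule gives two properties we will use. First, every completed epoch has $\norm{\tgtau}\le\sum_{t\in I_\tau}\norm{\gt}\le \tlambda_{\taupp}+G$, because the epoch closes at the first round where $H_\tau$ exceeds $\lambda_{t+1}$, so before the last gradient $H_\tau\le \lambda$. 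Second, a single epoch satisfies $\norm{\tgtau}^2+\tlambda_{\taupp}^2\lesssim \norm{\tgtau}^2+\tlambda_{\taupp}H_\tau\lesssim (G+\lambda)\sum_{t\in I_\tau}\norm{\gt}+\sum_{t\in I_\tau}\norm{\gt}^2$. Summing over $\tau\in\cE_\ab$, the epoch-level second-order quantity $\tilde V\defeq\sum_{\tau}(\norm{\tgtau}^2+\tlambda_\tau^2)$ is at most a constant times $V^\lambda_\ab$, up to one lower-order $(G+\lambda)^2$ term. In exactly this step the first-order quantity $\lambda_t\norm{\gt}$ replaces $\lambda_t^2$.

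\emph{Step 2 (specialists).} Cover the epoch range $[\tau_a,\tau_b]$ by the standard $O(\log b)$ geometric covering intervals $J_1,\dots,J_m$, consecutive and disjoint. Each $J_k$ is handled by an expert $\cA_{J_k}$ that was active and fresh at the start of its block; this is the reason for the $\Active(t)\cap\Active(s_\tau)$ rule. Apply \Cref{lemma:sa-mc-redux} on each $J_k$ with the meta-algorithm run at the epoch level, where the losses are $\telltau$ with $\norm{\telltau}_\infty\le D(G+2\lambda)$ and the meta coefficients are $D\tlambda$. Summing over $k$, the transitions between pieces cost at most $O(\log b)(D+P)(G+\lambda)$. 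The meta part then follows from \Cref{thm:sleeping-mvmt-costs} with $u=e_{J_k}$ and $L=D(2G+9\lambda)$. The prior weight gives $\log(1/\epsilon_{J_k})=O(\log b)$, and $\Gamma(\epsilon)=O(L)$ because $\sum_I\epsilon_I\le\sum_a \tfrac{1}{a^2}$. Since $(\ell_{t,i}-\inner{\ell_t,p_t})^2\le D^2\norm{\tgtau}^2+D^2\tlambda^2$, this yields $\tilde\cO\bigl(D\sqrt{\tilde V_{J_k}\log b}\bigr)$ for each piece.

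\emph{Step 3 (base learner).} Run adaptive GD with movement costs on $J_k$ against a dynamic comparator. Using the path length $\tilde P\le P_\ab$ of the minimum-norm epoch comparators, this should give $\cO\bigl(\sqrt{(D^2+D\tilde P_{J_k})\tilde V_{J_k}}\bigr)+D(G+\lambda)$. Summing the $m=O(\log b)$ pieces with Cauchy–Schwarz gives $\sqrt{m\sum_k(D^2+DP_k)\tilde V_k}\le\sqrt{\log b\,(D^2+DP_\ab)\tilde V}$, and hence the $\sqrt{(D^2+DP_\ab)V^\lambda_\ab}\log b$ branch. The other branch of the minimum comes from the crude bound $V^\lambda_\ab\le(G^2+G\lambda)\abs{b-a}$ combined with a sharper summation, in which the covering pieces have geometrically growing lengths. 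The $\log^2 b$ lower-order term collects the boundary and transition costs together with the $\log$ factors from the $\beta$-dependent penalties.

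The main obstacle is Step 1's translation of epoch-level quantities, especially the sleeping-expert term $\tlambda_\tau^2$, into $\sum\lambda_t\norm{\gt}$. The difficulty is that $\tlambda_{\taupp}$ is the coefficient at the epoch's end, while the blocking threshold compares $H_\tau$ against it. The plan is to use $\tlambda_{\taupp}<H_\tau+$ (the last gradient), charging each epoch's $\tlambda^2$ to $\tlambda H_\tau$. Epochs whose $H_\tau$ only slightly exceeds $\tlambda_{\taupp}$ are then handled directly via $\tlambda_{\taupp}^2\le \tlambda_{\taupp}H_\tau$, which needs no correction at all because $H_\tau>\tlambda_{\taupp}$ holds exactly at closing.
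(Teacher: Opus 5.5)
Your proposal has two genuine gaps: the first-order conversion in Step 1 is not established, and the $\sqrt{D^2+DP_\ab}$ dependence is not obtained by your Step 3.

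\textbf{Step 1.} You do not actually show that $\sum_{\tau}(\norm{\tgtau}^2+\tlambda_{\taupp}^2)\lesssim V^\lambda_\ab$. Your chain ends at $(G+\lambda)\sum_{t\in I_\tau}\norm{\gt}+\sum_{t\in I_\tau}\norm{\gt}^2$. That expression involves the global constants $G,\lambda$ rather than the per-round $\lambda_t$, so it does not sum to $\cO(V^\lambda_\ab)$. For example, if $\lambda_t\equiv 0$ and all $g_t\neq 0$, every epoch is a single round and the true quantity is $\sum_t\norm{\gt}^2$, while your bound gives $(G+\lambda)\sum_t\norm{\gt}$.

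The fix in your last paragraph does not close this. The product $\tlambda_{\taupp}H_\tau=\lambda_{e_\tau+1}\sum_{t\in I_\tau}\norm{\gt}$ pairs the post-epoch coefficient with the whole epoch's gradient mass, so it is not of the form $\sum_t\lambda_t\norm{\gt}$. You would also still need to control $\norm{\tgtau}^2$.

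The missing observation (\Cref{lemma:blocking-gradient-sum}) is this: for any $t\in I_\tau$ with $t>s_\tau$, the epoch did not close at round $t-1$. Hence the within-epoch prefix sums obey $\sum_{s\in I_\tau,\,s<t}\norm{g_s}\le\lambda_t$. Using $\norm{\tgtau},\tlambda_{\taupp}\le H_\tau$ and expanding $H_\tau^2=\sum_{t}\norm{\gt}^2+2\sum_{t}\norm{\gt}\sum_{s<t}\norm{g_s}$ gives $\norm{\tgtau}^2+\tlambda_{\taupp}^2\le\sum_{t\in I_\tau}(2\norm{\gt}^2+4\lambda_t\norm{\gt})$. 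Relatedly, your first property should read $H_\tau\le\lambda_{e_\tau}+\norm{g_{e_\tau}}\le\lambda+G$, not $\tlambda_{\taupp}+G$.

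\textbf{Step 3.} The per-piece base bound $\cO\bigl(\sqrt{(D^2+D\tilde P_{J_k})\tilde V_{J_k}}\bigr)$ is not what adaptive GD provides. The path length is unknown and $\alpha$ is fixed for all comparators, so the only admissible tuning is $\alpha=D$. Then \Cref{thm:adaptive-gd} gives $(2D+\tilde P_{J_k})\sqrt{2\tilde V_{J_k}}$. Even after repairing Step 1, your summation therefore ends at $(D\log b+P_\ab)\sqrt{V^\lambda_\ab}$, which is linear rather than square-root in $P_\ab$.

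The paper obtains $\sqrt{D^2+DP_\ab}$ through an extra layer you are missing:
\begin{enumerate}
\item It first proves, for \emph{every} interval $I$, a bound of the form $\hat{\cO}\bigl((D\log^2 b+P_I)(G+\lambda)+(D+P_I)\sqrt{V^\lambda_I}\log b\bigr)$. This comes from blocking plus specialists on each geometric-covering interval, followed by Daniely's partition of an arbitrary interval.
\item It then uses \Cref{lemma:path-length-partition} to split $\ab$ into $K\le 1+P_\ab/D$ pieces, each of path length at most $2D$, so on each piece the $P_{I_k}$ factor is absorbed into $D$.
\item Cauchy--Schwarz gives $\sum_k D\sqrt{V^\lambda_{I_k}}\le D\sqrt{K V^\lambda_\ab}\le\sqrt{(D^2+DP_\ab)V^\lambda_\ab}$, and likewise for the $\abs{b-a}$ branch.
\item The $K$ additive terms $D(G+\lambda)\log^2 b$ sum to the $(D+P_\ab)(G+\lambda)\log^2 b$ term of the statement.
\end{enumerate}
It is precisely the simultaneity of the guarantee over all sub-intervals that makes this work without knowing $P_\ab$.

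Your ordering in Step 2 (block once, then cover the epochs) is otherwise workable. You must, however, account for epochs that start in $J_k$ but end in $J_{k+1}$. Such epochs are active for $\cA_{J_k}$ in the meta problem but are never fed to it, and each costs $D(G+\lambda)$.
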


The proof of \Cref{thm:main} can be found in \Cref{app:main}.
Observe that the bound achieves the minimum of two similar terms,
one involving a worst-case penalty scaling with the interval length $\abs{b-a}$ and one involving a data-adaptive penalty $V_\ab^\lambda=\sumab(\norm{\gt}^2+\lambda_t\norm{\gt})$. 
The worst-case bound recovers the best-known strongly-adaptive static and dynamic guarantees
of order $\cO\brb{G\sqrt{(D^2+DP_\ab)\abs{b-a}\log(b)}}$ in the standard OCO setting where the movement coefficients are $\lambda_t=0$ for all $t$ \citep{jun2017improved,zhang2021dual}. Moreover, when movement penalties are present our bound improves upon the bound of \citet{zhang2022adversarial}, which incurs a sub-optimal penalty of $\log(b)$ rather than the best-known dependency of $\sqrt{\log(b)}$.

On the other hand, our adaptive bound $\cO\Brb{\sqrt{(D^2+DP_\ab)\sumab\norm{\gt}^2+\lambda_t\norm{\gt}}\log(b)}$ extends the best-known data-adaptive bound achieved in \citet{qiu2026parameterfree} to hold on every sub-interval, but incurs an additional $\sqrt{\log(b)}$ penalty over the non-adaptive guarantee. Despite this, our bound still improves upon the state-of-the-art strongly-adaptive bound: \citet{zhang2022adversarial} achieves an
interval \emph{static} regret guarantee of $R_\ab(u,\lambda_{a:b})\le \tilde \cO\Bigl(\sqrt{\lambda\maxOp G\sum_\ab\norm{\gt}}\Log{b}\Bigr)$, which our bound is a strict improvement of even for a fixed comparator. Our bound is further adaptive to a \emph{time-varying} sequence of movement coefficients, and obtains optimal \emph{dynamic regret} guarantees on any $\ab$, both of which are non-trivial to achieve using the framework of \citet{zhang2022adversarial}.

%% file: sections/applications.tex
\section{\SecApplications}\label{sec:applications}

In this section, we derive strongly adaptive regret guarantees for two further problem settings: OCO with delayed feedback and OCO with time-varying memory.
These results are obtained via extended reductions of the respective problems to OCO with time-varying movement costs, and generalize the reductions of \cite{qiu2026parameterfree} to hold on arbitrary sub-intervals.

\subsection{Strongly Adaptive Regret with Delayed Feedback}%
\label{sec:delays}

We first study strongly adaptive regret for OCO with delayed feedback.
The online protocol proceeds as follows.
On each round $t \in [T]$, the learner chooses a decision $w_t \in \mathcal{W}$, while the environment simultaneously selects a convex loss function $f_t:\mathcal{W}\to\mathbb{R}$.
The learner then suffers loss $f_t(w_t)$. In contrast to the standard OCO setting, however, the gradient
$
    g_t \in \partial f_t(w_t)
$
is not necessarily observed immediately. Instead, it is revealed only at the end of round $t+d_t$, where $d_t \ge 0$ denotes an arbitrary delay that is unknown to the learner. Without loss of generality, we assume that $t+d_t \le T$ for every $t \in [T]$, so that all feedback is received by the end of the time horizon. The learner’s performance on an interval $\ab \subseteq [T]$ is measured by the strongly adaptive regret
\(
    \RD_\ab(u_{a:b})
    =
    \sum_{t\in\ab} \bigl(f_t(w_t)-f_t(u_t)\bigr)
\),
where $u_{a:b}=(u_a,\ldots,u_b)$ is any comparator sequence in this interval. We use the superscript ``$\del$'' to emphasize that the regret is evaluated in the OCO setting with delayed feedback. 
For each round \(t\), let \(o_t \defeq \{\tau \in \mathbb{Z}_+ : \tau+d_\tau < t\} \subseteq [t-1]\) denote the set of past rounds whose gradients have already been observed before round \(t\).
We further define
\(m_t \defeq [t-1]\setminus o_t\) as the set of past rounds whose gradients are still outstanding before round \(t\), and 
define $\sigma=\max_t\abs{m_t}$.

With this setup, the delayed-feedback protocol can be equivalently described by saying that, at the end of round \(t\), the learner receives the batch of newly available gradients \(\{g_\tau\}_{\tau\in o_{t+1}\setminus o_t}\). The following 
lemma in fact shows that the regret on any time interval $\ab$ under delayed feedback can be bound in terms of
the undelayed movement-cost regret against the batch losses $\tgtau=\sum_{t\in I_\tau}\gt$.

\begin{restatable}{lemma}{DelayedRedux}\label{lemma:delayed-redux}
  Let \Cref{asm:lipschitz} hold and let $[a,b] \subseteq [T]$ be any interval.
  Then
  \begin{align*}
    \RD_{[a,b]}(u_{a:b})
    &\le
     G(2D+P_{[a,b]})\sigma+
     \sum_{t=a}^b \inner{h_t, \wt-\cmp_{t}} + \sum_{t=a+1}^b \lambda_t\norm{\wt-\wtmm},
  \end{align*}
  where $h_{t}=\sum_{\tau\in o_{\tpp}\setminus o_{t}}\gtau$ and $\lambda_{t}=G\abs{m_{t}}$. 
\end{restatable}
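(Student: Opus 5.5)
We first linearize by convexity: $f_t(\wt)-f_t(\cmp_t)\le\inner{\gt,\wt-\cmp_t}$, so $\RD_\ab(\cmp_{a:b})\le\sumab\inner{\gt,\wt-\cmp_t}$. The idea is to rewrite $\sumab\inner{\gt,\wt-\cmp_t}$ in terms of the observed batches $h_t$. Round $s$'s gradient is contained in $h_{s+d_s}$; we will pair it with the decision $w_{s+d_s}$ at the time it is observed instead of the decision $w_s$ at which it was generated. Since $s\in m_t$ exactly when $s<t\le s+d_s$, telescoping gives
\[
\inner{g_s,w_s}=\inner{g_s,w_{s+d_s}}-\sum_{t=s+1}^{s+d_s}\inner{g_s,\wt-\wtmm}.
\]
Do the same on the comparator side, with $\cmp_{s+d_s}$ in place of $\cmp_s$. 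We adopt the convention $\cmp_t=\cmp_b$ for $t>b$, so that comparator movement outside $\ab$ is zero.

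Next we regroup by the observation time. Sum over $s\in\ab$ and exchange the order of summation in the double sum. The first term becomes $\sum_t\inner{h_t,\wt-\cmp_t}$, restricted to those $s$ with $s\ge a$. The correction term is
\[
-\sum_t\inner{\textstyle\sum_{s\in m_t,\,s\ge a}g_s,\ \wt-\wtmm},
\]
and Cauchy--Schwarz bounds it by $\sum_t G\abs{m_t}\norm{\wt-\wtmm}=\sum_t\lambda_t\norm{\wt-\wtmm}$. The comparator correction is at most $G\abs{m_t}\norm{\cmp_t-\cmp_\tmm}$ per round, which sums to at most $G\sigma P_\ab$.

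The main obstacle is the boundary bookkeeping, which contributes the $2GD\sigma$ term. There are two mismatches.
\begin{itemize}[leftmargin=*]
\item Gradients from $s\in\ab$ observed after $b$ (that is, $s+d_s>b$) fall outside $\sum_{t=a}^b$. Each such $s$ lies in $m_{b+1}$, so there are at most $\sigma$ of them. For these rounds we do not telescope at all and simply bound $\inner{g_s,w_s-\cmp_s}\le GD$.
\item The batches $h_t$ for $t\in\ab$ also contain gradients from rounds $s<a$. These belong to $m_a$, so again there are at most $\sigma$ of them. Adding and subtracting their terms $\inner{g_s,\wt-\cmp_t}$ costs at most $GD$ each.
\end{itemize}
This yields the stated $2GD\sigma$. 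The movement sums then need to be restricted to $t\in[a+1,b]$. Movements at $t\le a$ only arise from rounds $s<a$, which we have excluded, and $t>b$ arises only for rounds handled directly in the first case. Some care is needed so that the first case does not also generate movement terms beyond $b$. Collecting terms gives $G(2D+P_\ab)\sigma+\sum_{t=a}^b\inner{h_t,\wt-\cmp_t}+\sum_{t=a+1}^b\lambda_t\norm{\wt-\wtmm}$ with $\lambda_t=G\abs{m_t}$, possibly after using the bound $\abs{m_t\cap[a,t-1]}\le\abs{m_t}$.
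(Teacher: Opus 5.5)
Your proposal is correct and takes essentially the same route as the paper. The paper also drops the at most $\abs{m_{b+1}\cap[a,b]}\le\sigma$ rounds whose feedback arrives after $b$ at cost $GD$ each, and telescopes the remaining gradients to their arrival times; it imports that step as \citet[Lemma 5.1]{qiu2026parameterfree} applied on $[a,b]$ instead of writing out the exchange of summation as you do. It then pays $GD\abs{m_a\setminus m_{b+1}}\le GD\sigma$ to add back the pre-$a$ gradients, turning the interval-restricted batches into $h_t$, which yields the same $\lambda_t=G\abs{m_t}$ movement term, $G\sigma P_{[a,b]}$ comparator term, and total $G(2D+P_{[a,b]})\sigma$.
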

The result generalizes the analogous reduction from \citet{qiu2026parameterfree} to hold on an arbitrary sub-interval $\ab$, and requires 
additional care to handle boundary conditions unique to the strongly-adaptive setting. A direct application of \Cref{thm:main} then leads to the following guarantee for the delayed feedback setting.

\begin{restatable}{theorem}{MainAnyABDelay}\label{thm:main-delay}
    Let \Cref{asm:lipschitz} hold and let $H \ge G(\sigma+1)$ and let $\lambda\ge G\sigma$.
    Initializing \Cref{alg:ol-mvmt-costs} with $L=D(2H+9\lambda)$, and feeding it gradients $h_t = \sum_{\tau \in o_{t+1} \setminus o_t} g_\tau$ and movement coefficients $\lambda_t = G|m_t|$,
    guarantees that for any $\ab\subseteq[T]$ and any $\cmp_a,\ldots,\cmp_b \in \ww$ that
    \begin{align*}
        \RD_{\ab}(u_{a:b})
        &\le \tilde\cO\biggl(
            (D + P_{\ab})  L_{\sigma,a}
            + G\sqrt{(D^2+DP_\ab)\brb{|b-a| + \dtot^{\ab}}} 
        \biggr) \;,
    \end{align*}
    where $L_{\sigma,a} = H \lor G \brb{\sigma + \dmax^{[1,a-1]}}$ and $P_\ab=\sum_{t=a+1}^b\norm{\cmp_t-\cmp_\tmm}$. 
\end{restatable}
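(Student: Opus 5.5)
The plan is to chain \Cref{lemma:delayed-redux} with \Cref{thm:main}, and then convert the movement-cost quantities into delay quantities.

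\textbf{Step 1 (reduction).} By \Cref{lemma:delayed-redux}, $\RD_\ab(u_{a:b})$ is at most $G(2D+P_\ab)\sigma$ plus the movement-cost regret $R_\ab(u_{a:b},\lambda_{a:b})$. The latter is the regret of \Cref{alg:ol-mvmt-costs} run on the linear losses $w\mapsto\inner{h_t,w}$ with coefficients $\lambda_t=G\abs{m_t}$. Two facts are needed to invoke \Cref{thm:main}:
\begin{itemize}
\item the fed gradient at round $t$ is observable at the end of round $t$, which holds by definition of $o_{t+1}$;
\item the Lipschitz and movement bounds hold with $H$ and $\lambda$ in place of $G$ and $\lambda$, i.e.\ $\norm{h_t}\le G\abs{o_{t+1}\setminus o_t}$ and $\lambda_t\le G\sigma\le\lambda$.
\end{itemize}
For the first bound, $o_{t+1}\setminus o_t\subseteq m_t\cup\{t\}$, so $\norm{h_t}\le G(\abs{m_t}+1)\le G(\sigma+1)\le H$. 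Hence \Cref{thm:main} applies with $L=D(2H+9\lambda)$ and gives
\[
R_\ab \le \hat\cO\Big((D+P_\ab)\log^2(b)(H+\lambda)+\sqrt{(D^2+DP_\ab)V^\lambda_\ab}\log b\Big),
\qquad V_\ab^\lambda=\sumab\brb{\norm{h_t}^2+G\abs{m_t}\norm{h_t}}.
\]

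\textbf{Step 2 (bounding $V^\lambda_\ab$ by delays).} This is the main obstacle. A naive bound $\norm{h_t}^2\le H^2$ is too weak, because $V^\lambda_\ab$ must be controlled by $G^2(\abs{b-a}+\dtot^{\ab})$ up to a boundary term.

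\emph{Cross term.} Write $\abs{m_t}\norm{h_t}\le G\sum_{\tau\in o_{t+1}\setminus o_t}\abs{m_t}$. For each $\tau$ arriving at time $t=\tau+d_\tau$, charge $\abs{m_t}$ to the rounds outstanding at that time. The standard double-counting identity $\sum_t\abs{m_t}=\sum_\tau d_\tau$ (each $\tau$ is outstanding during exactly $d_\tau$ rounds), and its interval analogue, should give
\[
\sumab \abs{m_t}\abs{o_{t+1}\setminus o_t}\lesssim \sigma\cdot(\text{arrivals in }\ab)+\cdots .
\]
This is too crude, so I would instead use $\norm{h_t}\le G\abs{o_{t+1}\setminus o_t}$ and bound $\sumab\abs{o_{t+1}\setminus o_t}^2$. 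A batch arriving at $t$ of size $k$ contains rounds with delays $0,1,\dots,\geq k-1$, so $k^2\lesssim k+\sum_{\tau\in\text{batch}} d_\tau$. Likewise, $\abs{m_t}\abs{o_{t+1}\setminus o_t}$ is at most the sum of delays of the outstanding rounds, restructured via the pairing argument of \citet{qiu2026parameterfree}.

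\emph{Boundary.} Rounds $\tau<a$ whose gradients arrive inside $\ab$ have delays not counted in $\dtot^{\ab}$. Such a $\tau$ contributes only if $\tau\ge a-\dmax^{[1,a-1]}$, so at most $\dmax^{[1,a-1]}$ of them can occur. Their contribution to $\abs{m_t}$ is at most $\dmax^{[1,a-1]}$ per round, and is absorbed either into $V$ as a lower-order term or, preferably, into the first-order term $(D+P_\ab)L_{\sigma,a}$ via the additive structure of the bound. Here I would define $\dtot^\ab$ to count $\min\{d_\tau,b-\tau\}$ for $\tau\in\ab$. The result should be
\[
V^\lambda_\ab\lesssim G^2\brb{\abs{b-a}+\dtot^\ab}+G^2\brb{\sigma+\dmax^{[1,a-1]}}^2\cdot\text{(lower order)}.
\]

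\textbf{Step 3 (assembling).} Combine Steps 1 and 2, using $\sqrt{x+y}\le\sqrt x+\sqrt y$. The terms $(D+P_\ab)(H+\lambda)$ and the $G(2D+P_\ab)\sigma$ term are all $\tilde\cO((D+P_\ab)L_{\sigma,a})$, taking $\lambda\asymp G\sigma$. The remaining boundary term, of the form $\sqrt{(D^2+DP_\ab)}\,G(\sigma+\dmax^{[1,a-1]})$, is bounded by $(D+P_\ab)L_{\sigma,a}$ via $\sqrt{D^2+DP}\le D+P$. Suppressing logs then gives the stated bound.
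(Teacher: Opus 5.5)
\paragraph{Verdict.} There is a gap: the cross term $\sum_{t\in[a,b]}\abs{m_t}k_t$, with $k_t=\abs{o_{t+1}\setminus o_t}$, is never actually bounded. This is the one nontrivial estimate in the proof, and you leave it to an unspecified ``pairing argument''.

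\paragraph{What matches the paper.} Your overall route is the paper's. You apply \Cref{lemma:delayed-redux}, then \Cref{thm:main} using $\norm{h_t}\le G(\abs{m_t}+1)\le H$. You then convert $V^\lambda_{[a,b]}$ into delay quantities, and absorb the boundary term via $\sqrt{D^2+DP_{[a,b]}}\le D+P_{[a,b]}$. Your batch argument for the square term is also correct. The rounds arriving at $t$ have distinct delays $t-\tau$, so $k_t^2\le k_t+2\sum_{\tau:\tau+d_\tau=t}d_\tau$, which sums over $[a,b]$ to $\cO\brb{\abs{b-a}+\dtot^{[a,b]}+(\dmax^{[1,a-1]})^2}$.

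\paragraph{Why your stated bound fails.} It is true pointwise that $\abs{m_t}k_t\le\abs{m_t}(\abs{m_t}+1)\le 2\sum_{\tau\in m_t}(t-\tau)$. But summing over $t$ charges each round $\tau$ once for each of the $d_\tau$ rounds during which it is outstanding. This yields $\sum_\tau d_\tau(d_\tau+1)/2$, which is quadratic rather than linear in the delays. The literal ``sum of delays of the outstanding rounds'', $\sum_{\tau\in m_t}d_\tau$, likewise sums to $\sum_\tau d_\tau^2$.

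\paragraph{The missing idea.} The paper uses the exact identity $k_t=\abs{m_t}+1-\abs{m_{t+1}}$. Substituting it and summing by parts gives
\[
\sum_{t=a}^{b}\abs{m_t}k_t=\sum_{t=a}^{b}\abs{m_t}+\abs{m_a}^2-\abs{m_b}\abs{m_{b+1}}+\sum_{t=a+1}^{b}\brb{\abs{m_t}-\abs{m_{t-1}}}\abs{m_t}\le 2\sum_{t=a}^{b}\abs{m_t}+\abs{m_a}^2,
\]
where the inequality uses $\abs{m_t}\le\abs{m_{t-1}}+1$. Double counting then gives $\sum_{t\in[a,b]}\abs{m_t}\le\dtot^{[a,b]}+\sum_{\tau\in m_a}d_\tau\le\dtot^{[a,b]}+(\dmax^{[1,a-1]})^2$. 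Together these are \Cref{lem:delay-sum-interval}.

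A genuine pairing proof also works. Count pairs $(\tau,\tau')$ with $\tau$ arriving at $t$ and $\tau'\in m_t$:
\begin{itemize}
\item if $\tau'>\tau$, charge the pair to $d_\tau$;
\item if $\tau'<\tau$, charge it to $d_{\tau'}$, since then $\tau\in(\tau',\tau'+d_{\tau'}]$;
\item the diagonal pairs $\tau'=\tau$ number at most the arrivals.
\end{itemize}
The total is then linear in the (clipped) delays plus a boundary term.

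\paragraph{Consequences for the rest of your plan.} Once the cross term is bounded, your separate square-term argument is unnecessary, since $k_t\le\abs{m_t}+1$ gives $k_t^2\le k_t(\abs{m_t}+1)$.

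Your boundary remark is also insufficient as stated. You say pre-interval rounds add at most $\dmax^{[1,a-1]}$ to $\abs{m_t}$ ``per round''. Multiplied by $k_t$ and summed over $[a,b]$, that gives a term of order $\dmax^{[1,a-1]}\abs{b-a}$, which is not lower order and does not appear in the theorem. You need that each $\tau\in m_a$ is outstanding during at most $d_\tau$ rounds of $[a,b]$; this is exactly what produces the $(\dmax^{[1,a-1]})^2$ term.
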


The proofs of both \Cref{lemma:delayed-redux} and \Cref{thm:main-delay} can be found in \Cref{apx:delays}. To the best of our knowledge, \Cref{thm:main-delay} yields the first strongly adaptive regret and dynamic guarantees under delayed feedback. The
dominant term
$
    G\sqrt{(D^2+DP_\ab)\brb{|b-a|+\dtot^\ab}}
$
has the standard delay-dependent form, with $|b-a|$ corresponding to the
length of the interval and $\dtot^\ab$ measuring the total delay restricted to
that interval. In the absence of delays, this recovers the standard
strongly-adaptive and dynamic guarantees $\tilde \cO\bigl(G\sqrt{(D^2+DP_\ab)|b-a|}\bigr)$. Here, $d_{\mathrm{tot}}^{[a,b]}=\sum_{t=a}^b d_t^{[a,b]}$ denotes the interval-restricted total delay, where $d_t^{[a,b]}=\min\{d_t,b-t\}$ clips each delay at the endpoint $b$. Thus, $d_{\mathrm{tot}}^{[a,b]} \le \sum_{t=a}^b d_t$, and the strongly adaptive bound depends only on the delay mass relevant to the interval $[a,b]$. By contrast, \citet{wan2023non} and \citet{qiu2026parameterfree} only obtain a dynamic regret bound of
$\widetilde{\mathcal{O}}\Bigl(\sqrt{\bigl(D^2+DP_T\bigr)\bigl(T+ \sum_{t=1}^T d_t\bigr)}\Bigr)$,
which is not strongly adaptive over every interval $[a,b]\subseteq[T]$.

\begin{remark}
    The result in \Cref{thm:main-delay} requires prior knowledge of $\lambda\ge G\sigma=G\max_t\abs{m_t}$, 
    which may not be known in practice. However, this parameter can be estimated 
    up to a constant factor 
    using the standard doubling argument discussed in \Cref{sec:setting}, applied after the delays-to-movement-costs reduction (\Cref{lemma:delayed-redux}).
\end{remark}

\subsection{Strongly Adaptive Regret with Time-Varying Memory}%
\label{sec:memory}
In this section, we establish strongly adaptive regret guarantees for OCO with time-varying memory.
The interaction follows the standard $T$-round protocol.
In each round $t$, the learner chooses a decision $w_t \in \mathcal{W}$.
At the same time, the environment selects a loss function
$
f_t:\mathcal{W}^{\msize_t+1}\to\mathbb{R},
$
which depends on the current decision together with the preceding $\msize_t$ decisions, where
$\msize_t\in\{0,\ldots,t-1\}$ denotes the memory length at round $t$.
We assume that the full sequence of memory lengths $\{\msize_t\}_{t=1}^T$ is known to the learner in advance, and define the maximum memory length by $B\triangleq\max_t b_t$.
The learner then suffers the loss $f_t(w_{t-\msize_t:t})$, where
$
w_{t-\msize_t:t}\defeq (w_{t-\msize_t},\ldots,w_t),
$
and subsequently observes the entire loss function $f_t$.
The learner aims to minimize strongly adaptive regret against an arbitrary comparator sequence over any interval.
This regret is defined as
\(
    \RM_\ab(u_{1:T})
    \defeq
    \sumab
    \brb{
        f_t(w_{t-\msize_t:t}) -
        f_t(u_{t-\msize_t:t})
    }
\)
for any interval $\ab \subseteq [T]$; note that it takes the entire comparator sequence $\cmp_{1:T}$ as argument since the loss at some round might depend on points relative to rounds before the start of the interval.

For an interval $\ab$, we also define the clipped memory length at round $t$ by
$
    \msize_t^\ab
    \defeq
    \min\{\msize_t,t-a\}.
$
Thus, $\msize_t^\ab$ counts only the portion of the memory window that lies within the interval $\ab$.
We also define the maximum original memory length over $\ab$ as
$
    B_\ab
    \defeq
    \max_{t\in[a,b]} \msize_t,
$
and the maximum clipped memory length on $[a,b]$ as
$
    \bmax^{[a,b]}
    \defeq
    \max_{t\in[a,b]} \msize_t^{[a,b]}.
$
By definition, $\bmax^{[a,b]}\le B_{[a,b]}$. 

Following \citet{anava2015online,zhao2023non}, 
 we place the following structural assumptions on the loss functions: we assume that the losses are coordinate-wise Lipschitz, and 
 that the induced 
 unary losses $\hat \ft(w)\triangleq\ft(w,\ldots,w)$
 are convex and $H$-Lipschitz.
\begin{assumption}[Coordinate-wise Lipschitzness]
\label{asm:coordwise-lipschitz}
For every $t\in[T]$, the loss function $f_t$ is coordinate-wise $G$-Lipschitz for some $G\ge 0$.
That is, for any
$
x=(x_0,x_1,\ldots,x_{\msize_t})
$
and
$
y=(y_0,y_1,\ldots,y_{\msize_t})
$
in $\calW^{\msize_t+1}$,
$
    \left|
        f_t(x_0,\ldots,x_{\msize_t})
        -
        f_t(y_0,\ldots,y_{\msize_t})
    \right|
    \le
    G \sum_{i=0}^{\msize_t} \left\|x_i-y_i\right\|
$~.
\end{assumption}

\begin{assumption}[Properties of unary losses]
\label{asm:unary-properties}
The induced unary loss $\widehat f_t$ is convex.
Moreover, the gradients of the unary losses are uniformly bounded: there exists $H > 0$ such that $
    \sup_{w\in\ww} \norm{\nabla \widehat f_t(w)} \le H$ for all $t\in[T]$.
\end{assumption}

Note that by \Cref{asm:coordwise-lipschitz}, we may assume without loss of generality that $H\le G(B+1)$.

The following lemma reduces regret on any interval to an instance of OCO with time-varying movement costs plus some additive lower-order terms.
\begin{restatable}{lemma}{MemoryReduction}\label{lemma:memory-redux}
    Let $[a,b] \subseteq [T]$ be any interval.
    Then
    \[
        \RM_{[a,b]}(u_{1:T}) \le GP_{[a,b]}(\bmax^{[a,b]})^2 + 2GDB_{[a,b]}^2 + \sum_{t=a}^b \inner{h_t, \wt-\cmp_t} + \sum_{t=a+1}^b \lambda_t \norm{\wt-\wtmm} \;,
    \]
    where $h_t = \nabla\hat f_t(w_t)$, $\xi_t = \sum_{s=t}^T \mathbb{I}\bcb{t \ge s-\msize_s+1}\brb{t-\brb{s-\msize_s}}$, and $\lambda_t = G\xi_t$.
\end{restatable}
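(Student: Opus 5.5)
The plan is to compare the loss with memory at the learner's past iterates against the unary loss at the current iterate, following \citet{anava2015online,zhao2023non}. Then we linearize the unary losses and bound the memory discrepancy by a movement-cost term plus boundary terms coming from memory windows that reach outside $\ab$. Concretely, for each $t\in\ab$ we write
\[
f_t(w_{t-\msize_t:t})-f_t(u_{t-\msize_t:t}) = \bigl[f_t(w_{t-\msize_t:t})-\hat f_t(w_t)\bigr] + \bigl[\hat f_t(w_t)-\hat f_t(u_t)\bigr] + \bigl[\hat f_t(u_t)-f_t(u_{t-\msize_t:t})\bigr].
\]
By convexity of $\hat f_t$ (\Cref{asm:unary-properties}), the middle bracket is at most $\inner{h_t,\wt-\cmp_t}$ with $h_t=\nabla\hat f_t(w_t)$.

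For the first bracket we use \Cref{asm:coordwise-lipschitz}, which gives $G\sum_{j=1}^{\msize_t}\norm{w_{t-j}-w_t}$. Each distance is bounded by the triangle inequality as $\norm{w_{t-j}-w_t}\le\sum_{r=t-j+1}^{t}\norm{w_r-w_{r-1}}$. We split the sum over $j$ according to whether $t-j\ge a$ or $t-j<a$.

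The in-interval part, after exchanging the order of summation over $t$ and $r$, is $\sum_{r=a+1}^b\norm{w_r-w_{r-1}}$ weighted by $G$ times the number of pairs $(s,j)$ with $s-j<r\le s$. This weight is at most $G\sum_{s\ge r,\, s-\msize_s+1\le r}(r-(s-\msize_s))=G\xi_r=\lambda_r$. This is exactly the definition of $\xi_t$ in the statement, and the extension of the sum over $s$ to all of $[t,T]$ only increases the weight, which is harmless.

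The out-of-interval part, with $t-j<a\le t$, is bounded crudely by $\norm{w_{t-j}-w_t}\le D$. The number of such pairs with $t\in\ab$ is at most $\sum_{t=a}^{a+B_\ab}B_\ab$, roughly $B_\ab^2$. This gives at most $GDB_\ab^2$.

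The third bracket is handled similarly. It is at most $G\sum_{j}\norm{u_{t-j}-u_t}$. Pairs with $t-j\ge a$ give distances bounded by the path length, $\norm{u_{t-j}-u_t}\le\sum_{r=t-\msize_t^\ab+1}^t\norm{u_r-u_{r-1}}$. Summing over $t$ counts each comparator movement $\norm{u_r-u_{r-1}}$ at most $\sum_{j\le\bmax^\ab}j\lesssim(\bmax^\ab)^2$ times. The reason is that $u_r$ vs $u_{r-1}$ appears for the pair $(t,j)$ only if $t-j<r\le t$ with $j\le\bmax^\ab$. Summing this multiplicity over the relevant $t$ and $j$ gives at most $(\bmax^\ab)^2$ per $r$, so the total is $GP_\ab(\bmax^\ab)^2$. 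Pairs reaching before $a$ again cost at most $D$ each, giving another $GDB_\ab^2$. Together the two boundary contributions give the stated constant $2GDB_\ab^2$.

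The main obstacle is the careful counting. We must show that the multiplicity of each learner movement is exactly captured by $\xi_r$ (not double-counting in the definition), and that the comparator multiplicity is bounded by $(\bmax^\ab)^2$ rather than something like $(\bmax^\ab)^2+\bmax^\ab$. If the latter appears, I would absorb it using $j\le \msize_t^\ab$ and the bound $\sum_{j=1}^{m}j\le m^2$. I would first verify the exchange-of-summation identity on a small example to fix the off-by-one in $\xi_t$.
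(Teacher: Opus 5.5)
Your proposal is correct and follows the same route as the paper's proof: the three-term split through $\hat f_t$, convexity for the middle term, coordinate-wise Lipschitzness plus telescoping for the in-interval lags (weight $\xi_r^{[a,b]}\le\xi_r$ for the learner, and at most $(\bmax^{[a,b]})^2$ for the comparator), and a crude $D$ bound on the lags that reach before $a$. The one place to tighten is the boundary count: the number of out-of-interval pairs is exactly $\sum_{t\in[a,b]}(\msize_t-\msize_t^{[a,b]})$. This is nonzero only for $t\le a+B_{[a,b]}-1$ (not $a+B_{[a,b]}$), so it is at most $B_{[a,b]}^2$, as in \Cref{lem:clipped-memory-error}, which gives the exact constant $2GDB_{[a,b]}^2$.
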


The lemma plays a role analogous to that of \Cref{lemma:delayed-redux} from \Cref{sec:delays}, reducing the OCO with memory 
problem to a certain OCO with movement costs problem.
Applying \Cref{thm:main} in this movement costs problem leads to the following guarantee.

\begin{minipage}{\columnwidth}
\begin{restatable}{theorem}{MainAnyABMemory}\label{thm:main-memory}
    Initializing \Cref{alg:ol-mvmt-costs} from \Cref{thm:main} with Lipschitz constant $H$ and $\lambda\ge GB^2$, and feeding it gradients $h_t = \nabla \hat f_t(w_t)$ and movement coefficients $\lambda_t = G\xi_t$ where $\xi_t$ is defined as in \Cref{lemma:memory-redux},
    guarantees for any $\ab\subseteq[T]$ and any $\cmp_1,\ldots,\cmp_T \in \ww$ that
    \begin{align*}
    \RM_{\ab}(u_{1:T})
    &\le \tilde\cO\Bigg(
        (D + P_{\ab})\big(H+\sqrt{G(H\maxOp G)}B^2\big)
        + \sqrt{(D^2+DP_\ab)\min\Set{V_\ab^{(1)},V_\ab^{(2)}}}
    \Bigg),
    \end{align*}
    where 
    $P_\ab=\sum_{t=a+1}^b\norm{\cmp_t-\cmp_\tmm}$, and we define
    $V_\ab^{(1)}=\cO\Big((H+GB^2)\sumab\norm{h_t}\Big)$, and $V_\ab^{(2)}=\cO\Big(\sum_{t\in\ab}\norm{h_t}^2 + HG\min\Set{B^2\abs{b-a},B^4+\sum_{t\in\ab}\brb{\msize_t^\ab}^2}\Big)$.
    \end{restatable}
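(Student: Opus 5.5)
The proof composes \Cref{lemma:memory-redux} with \Cref{thm:main} and then controls the resulting data-dependent quantity in two ways, one for each of $V^{(1)}_\ab$ and $V^{(2)}_\ab$.

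\textbf{Step 1 (reduction).} \Cref{lemma:memory-redux} bounds $\RM_\ab(u_{1:T})$ by two additive terms plus the movement-cost regret of the iterates $w_t$ against the linear losses $\inner{h_t,\cdot}$ with coefficients $\lambda_t=G\xi_t$. The additive terms are $GP_\ab(\bmax^\ab)^2+2GDB_\ab^2\le 2G(D+P_\ab)B^2$. By convexity of $\hat f_t$, the linear-loss regret is exactly the quantity \Cref{thm:main} controls, applied with Lipschitz constant $H$ since $\norm{h_t}\le H$. For this we must check $\lambda_t\le\lambda$. Only indices $s\ge t$ with $s-\msize_s+1\le t$ contribute to $\xi_t$, and each contributes at most $\msize_s$. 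Since such $s$ lie in $[t,t+B-1]$, we get $\xi_t\le B^2$, so $\lambda=GB^2$ is admissible.

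\textbf{Step 2 (apply the main theorem).} \Cref{thm:main} then gives
\[
\tilde\cO\Big((D+P_\ab)(H+GB^2)+\sqrt{(D^2+DP_\ab)\textstyle\sum_{t\in\ab}(\norm{h_t}^2+G\xi_t\norm{h_t})}\Big).
\]
Bounding $\norm{h_t}^2\le H\norm{h_t}$ and $\xi_t\le B^2$ gives $V_\ab^{(1)}=(H+GB^2)\sum_{t\in\ab}\norm{h_t}$ directly. For the lower-order term we still need to replace $GB^2$ by $\sqrt{G(H\maxOp G)}B^2$; we return to this after Step 3.

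\textbf{Step 3 (the clipped bound, the main obstacle).} For $V^{(2)}_\ab$ we bound $G\xi_t\norm{h_t}\le HG\xi_t$, which gives $HG\sum_{t\in\ab}\xi_t\le HGB^2|b-a|$. Obtaining the clipped form $B^4+\sum_{t\in\ab}(\msize_t^\ab)^2$ is harder, because $\xi_t$ looks forward in time while $\msize_t^\ab$ looks backward. The plan is to exchange the order of summation:
\[
\sum_{t\in\ab}\xi_t=\sum_{s}\sum_{t\in\ab,\,s-\msize_s+1\le t\le s}\brb{t-(s-\msize_s)}.
\]
Split the outer sum over $s$ into two groups.
\begin{itemize}[leftmargin=*]
\item For $s\le b$ whose window starts inside $\ab$, the inner sum is $O(\msize_s^2)=O((\msize_s^\ab)^2)$.
\item For windows straddling $a$, only the part within $\ab$ is counted, and the inner sum is at most $O(\msize_s(\msize_s^\ab+1))$.
\end{itemize}
We also have to handle $s>b$ (windows straddling $b$) and $s<a$ with windows reaching into $\ab$. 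There are at most $B$ values of $s$ in the former set, each contributing $O(B^2)$, which gives $O(B^3)\le O(B^4)$. The straddling-$a$ terms are dominated by $B^2$ times the number of such $s$, which is at most $B$, plus the clipped squares.

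The delicate part is exactly this bookkeeping at the two endpoints: showing that everything not captured by $(\msize_t^\ab)^2$ is $O(B^4)$, rather than $O(B^2|b-a|)$. Taking the minimum with the unclipped bound $B^2|b-a|$ then yields $V^{(2)}_\ab$.

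\textbf{Back to the lower-order term.} To sharpen $GB^2$ to $\sqrt{G(H\maxOp G)}B^2$, I would rescale the movement coefficients as in the paper's doubling discussion. If that fails, I would instead absorb the difference through AM-GM, using $\sqrt{(D^2+DP_\ab)HGB^4}\le(D+P_\ab)\sqrt{HG}B^2$. This second route should give exactly the stated form, and I expect this to be how the $\sqrt{G(H\maxOp G)}$ factor arises.
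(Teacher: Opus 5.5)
Your proposal is correct and follows essentially the paper's proof: reduce via \Cref{lemma:memory-redux}, apply \Cref{thm:main} with Lipschitz constant $H$ and $\lambda=GB^2$, bound $\sum_{t\in[a,b]}(\norm{h_t}^2+\lambda_t\norm{h_t})$ crudely by $(H+GB^2)\sum_{t\in[a,b]}\norm{h_t}$, and for the clipped bound exchange the order of summation in $\sum_{t\in[a,b]}\xi_t$ and charge the $O(B)$ boundary indices near $a$ and $b$ at $O(B^2)$ each. The paper does the same bookkeeping slightly differently, bounding every $s\le b$ term by $\msize_s^2$ and converting to clipped lengths via \Cref{lem:clipped-memory-error} and \Cref{lem:clipped-xi-error}.

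Your final paragraph, however, rests on a misreading. Since $\sqrt{G(H\vee G)}\ge G$, the term $(D+P_{[a,b]})(H+GB^2)$ from Step 2 is already dominated by the stated lower-order term, and so is the additive $2G(D+P_{[a,b]})B^2$ from Step 1. No sharpening is needed. That is just as well, because neither rescaling nor your AM--GM step would reduce the $GB^2$ inherited from \Cref{thm:main}; the AM--GM step only moves the $HGB^4$ part of $V^{(2)}_{[a,b]}$ outside the square root.
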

\end{minipage}

The proofs of both \Cref{lemma:memory-redux} and \Cref{thm:main-memory} are deferred to \Cref{apx:memory}.
The main term in the bound 
depends on the minimum of 
two different variance penalties, 
$V_\ab^{(1)}=\cO(GB\sumab\norm{h_t})$
and $V_\ab^{(2)}=\cO(\sumab\norm{h_t}^2+HG(b_t^{\ab})^2)$. 
Recalling that $H\le G(B+1)$ 
by \Cref{asm:coordwise-lipschitz},
the first branch of
the minimum yields a bound of
\begin{align*}
    \RM_\ab(\cmp_{1:T})
    &=
    \tilde\cO\Bigg(
    (D+P_\ab)G(1+B^{2.5})
    +B\sqrt{G(D+P_\ab)\sum_{t=a}^b\norm{h_t}}
    \Bigg),
\end{align*}
which recovers the $\tilde\cO(B\sqrt{DG\sumab\norm{h_t}})$ bound of \citet{zhang2022adversarial} as a special case when $P_\ab=0$ (and modestly improves the memory dependence from $B^3$ to $B^{2.5}$ in the lower-order term). 
On the other hand, considering the 
other branch of the minimum, 
our result recovers, up to logarithmic
factors and additive lower-order terms, the optimal dependence on
the memory length and comparator path length established by
\citet{zhao2023non}.
In particular, our regret bound scales as
\[
    \RM_\ab(\cmp_{1:T})
    =
\widetilde{O}\left(
B\sqrt{HG(D^2+DP_{[a,b]})|b-a|}
+
GB^2(D+P_{[a,b]})
\right)
\]
simultaneously over all intervals $[a,b]\subseteq[T]$.
Unlike \citet{zhao2023non}, our guarantees
are strongly adaptive, extend to arbitrary time-varying memory
lengths, and admit refined data-dependent bounds involving the
actual unary gradients and interval-clipped memory lengths.

%% file: sections/discussion.tex
\section{Discussion}%
\label{sec:discussion}

In this work, we presented new strongly-adaptive guarantees for dynamic regret with time-varying movement cost. Using this result we derive the first strongly-adaptive guarantees under delayed feedback and for OCO with time-varying memory.

We leave a few interesting directions open for future work. We remark that our sleeping experts guarantee obtains the ideal first-order movement costs dependency $\sumab\norm{\gt}^2+\lambda_t\norm{\gt}$ w.r.t.\ the losses of the \emph{underlying problem}, but our blocking argument does not seem to extend straight-forwardly to the analogous first-order dependency $\sumab\alpha_\ti\inner{\ell_t,\pt-e_i}^2+\alpha_\ti\lambda_t\abs{\inner{\ell_t,e_i-\pt}}$ in the ``pure'' sleeping-experts with movement costs problem.
The core difficulty in achieving this bound is that the sleeping-experts reduction does not readily permit separate coordinate-wise movement costs. We also remark that our applications exhibit certain penalties related to information not contained in the interval of interest, such as the maximum delay $\dmax^{[1,a-1]}$ before the start of the interval $\ab$ in OCO with delayed feedback. In a similar vein,
our approach also relies on prior knowledge of a global bound $\lambda\ge \max_t\lambda_t$, which in turn leads to a prior knowledge requirement on $\sigma\ge \max_t\abs{m_t}$ and $B\ge \max_t b_t$ in the 
delayed feedback and memory settings respectively. Improving these global quantities to be strictly interval-local would 
be an interesting direction for future work.

%% file: sections/more_related.tex
\section{More Related Works}\label{app:more_related}
In this section, we provide more detailed discussion of relevant related works.

\paragraph{Dynamic and adaptive regret.}
Dynamic regret has been extensively studied as a measure of performance in non-stationary OCO. The extension of regret to handling a \emph{sequence} of comparators dates back to \citet{herbster1998trackingexpert,herbster1998trackingregressor} in the regression and learning-from-expert-advice settings. 
\citet{zinkevich2003online} extended the scope to the more general OCO setting, and showed that online gradient descent can obtain 
$R_T(\cmp_{1:T})=\cO((D+P_T)\sqrt{T})$ dynamic regret, where $P_T=\sum_{t=2}^T\norm{\cmp_t-\cmp_\tmm}$ is the path-length of the comparator sequence.
\citet{yang2016tracking} later showed that the bound can be improved to $\cO(\sqrt{(D^2+DP_T) T})$ given prior knowledge of $P_T$, and \citet{zhang2018adaptive}
was the first to obtain this bound without prior knowledge of $P_T$.

Another important line of work studies \textit{adaptive regret}. \citet{hazan2007adaptive} first introduced adaptive regret defined as the maximum regret over any contiguous interval, and later obtained an adaptive regret bound of $\mathcal{O}(\sqrt{T \log^3 T})$  for convex losses and $\mathcal{O}(\log^2 T)$ for exp-concave losses \citep{hazan2009efficient}. A key limitation of adaptive regret is that its guarantee is expressed in terms of the horizon $T$, rather than the length of the interval under consideration. Consequently, the bound can be uninformative on short intervals. To address this drawback, \citet{daniely2015strongly} introduced \emph{strongly adaptive regret}. This stronger criterion measures regret explicitly as a function of the interval length, and is therefore more suitable for evaluating performance on both short and long intervals. The algorithm proposed by \citet{daniely2015strongly} achieves an $\cO(\sqrt{\tau}\log T)$ regret for each interval of length $\tau\in[T]$ simultaneously. Subsequent work \citep{jun2017improved, zhang2020minimizing} improved this guarantee by introducing new meta-algorithms, reducing the strongly adaptive regret to $\cO(\sqrt{\tau \log T})$.

\paragraph{OCO with movement costs.} 
OCO with movement costs, also known as smoothed online convex optimization \citep{goel2017thinking,chen2018smoothed,goel2019online, li2020online}, has received considerable attention in recent years. In this framework, on round $t$ the learner incurs both the loss $\ell_t(w_t)$ as well as a movement cost $\lambda\norm{\wt-\wtmm}$ for deviating from its decision in the previous round. A closely related formulation is OCO with memory \citep{merhav2002sequential}, in which each loss may depend on a history of past decisions. Stability plays a central role in this setting, and many algorithms achieve it by reducing the problem to OCO with movement costs \citep{anava2015online}.

\paragraph{Online learning with delayed feedback.} 
Delayed feedback has been extensively studied in online learning. Early work by \citet{weinberger2002delayed,joulani2013online} developed black-box reductions from OCO with delayed feedback to standard OCO, obtaining regret bounds of order $\cO(\sqrt{T+\dmax T})$, where $\dmax$ denotes the maximum delay. For convex losses, \citet{quanrud2015online} improved this dependence by adapting online gradient descent, achieving a regret bound of $\cO(\sqrt{T+\dtot})$, where $\dtot$ is the total cumulative delay, which can be significantly smaller than $\dmax T$. There is also a parallel line of work that focuses on obtaining adaptive regret guarantees and fast rates in delayed-feedback settings \citep{NIPS2014_5cce8ded, joulani2016delay,wan2022online,qiuexploiting}. 
Feedback delays also led to the development of specialized techniques for a broad range of problem settings related to multi-armed bandits \citep{NEURIPS2019_0e4f5cc9,pmlr-v108-zimmert20a,NEURIPS2022_4cd1d0d9,pmlr-v151-van-der-hoeven22a,pmlr-v202-esposito23a,pmlr-v195-hoeven23a,pmlr-v267-zhang25ca,ryabchenko2025capacity}.

%% file: sections/appendix-samc.tex
\section{Proofs for Section~\ref{sec:samc} \texorpdfstring{\scalebox{0.95}{(\SecStronglyAdaptive)}}{(\SecStronglyAdaptive)}}%
\label{app:samc}

\subsection{Proof of Lemma~\ref{lemma:mc-redux}}%
\label{app:mc-redux}

In this section we prove a general specialists reduction for the movement-cost regret. The reduction is over $[1,T]$, and we later specialize the result to sub-intervals $[a,b]$ in \Cref{lemma:sa-mc-redux}.
\MCRedux*
\begin{proof}
By convexity of $f_t$ for all $t\in[T]$ (\Cref{asm:lipschitz}), we know that for any $i\in [N]$
\begin{align*}
    R_{T}(\cmp_{1:T},\lambda_{1:T})
    &=
    \sumtT\left(\ft(\wt)-\ft(\cmp_t)+\lambda_t\norm{\wt-\wtmm}\right)\\
    &\le
    \sumtT\left(\inner{\gt,\wt-\cmp_t}+\lambda_t\norm{\wt-\wtmm}\right)\\
    &=\sumtT\left(\inner{\gt,\sum_{j=1}^np_{t,j}w_t^{(j)}-\cmp_t}+\lambda_t\norm{\wt-\wtmm}\right)\\
    &=
    \sumtT\inner{\gt,\wt^{(i)}-\cmp_t}+\sumtT\inner{\elltilde_t,\pt-e_i}+\sumtT\lambda_t\norm{\wt-\wtmm},
\end{align*}
where $\gt\in\partial\ft(\wt)$ and
define $\elltilde_t\in \R^N$ as
$
    \elltilde_{t,j} \triangleq \inner{\gt,\wt^{(j)}}
$ for $j \in [N]$ .
Moreover, observe that the movement costs can be bounded as
\begin{align*}
      \norm{\wt-\wtmm}
    &=
    \left\|\sum_{j\in[N]}\left( \ptj\wt^{(j)}-\ptmmj\wtmm^{(j)}\right)\right\|\\
    &=
    \left\|\sum_{j\in[N]} \ptj(\wt^{(j)}-\wtmm^{(j)})+\sum_{j\in[N]}(\ptj-\ptmmj)\wtmm^{(j)}\right\|\\
    &\le
    \sum_{j\in[N]}\ptj\norm{\wt^{(j)}-\wtmm^{(j)}} +\sum_{j\in[N]} \norm{\wtmm^{(j)}} \cdot \left|\ptj-\ptmmj\right| \tag{triangle inequality}
    \\
    &\le
      \sum_{j\in[N]}\ptj\norm{\wt^{(j)}-\wtmm^{(j)}} + D\cdot \|{\pt-\ptmm}\|_1 \;, 
\end{align*}
so summing over $t$,
\begin{align*}
   \sumtT \lambda_t \norm{\wt-\wtmm}
   &\le
   \sumtT \sum_{j\in[N]}\ptj\lambda_t\norm{\wt^{(j)}-\wtmm^{(j)}}+\sumtT D\lambda_t\norm{\pt-\ptmm}_1\;.
\end{align*}
Hence,
we are able to bound $R_{T}(\cmp_{1:T}, \lambda_{1:T})$ as follows:
\begin{align*}
    &R_{T}(\cmp_{1:T}, \lambda_{1:T})\\
    &\le 
    \sumtT\inner{\gt,\wt^{(i)}-\cmp_t}+\sumtT\inner{\elltilde_t,\pt-e_i}\\
    &\qquad
    +\sum_{t=1}^T\sum_{j\in[N]}\lambda_t\ptj\norm{\wt^{(j)}-\wtmm^{(j)}} + \sumtT D\lambda_t\|{\pt-\ptmm}\|_1 \\
   &\overset{(*)}{=} 
    \sumtT\sbrac{\inner{\gt,\wt^{(i)}-\cmp_t}+\lambda_t\norm{\wt^{(i)}-\wtmm^{(i)}} }
    +\sumtT\sbrac{\inner{\ell_t,\pt-e_i}+D\lambda_t\norm{\pt-\ptmm}_1  }
\end{align*}
where $(*)$ is by the definition of $\ell_{t,j}
\triangleq
\widetilde{\ell}_{t,j}
+
\lambda_t \left\| w_t^{(j)}-w_{\tmm}^{(j)} \right\|
$ for all $j \in [N]$.
\end{proof}

\subsection{Proof of Lemma~\ref{lemma:sa-mc-redux}}%
\label{app:sa-mc-redux}

In this section we prove our specialists reduction for an arbitrary sub-interval $\ab$. The result follows via a special case of \Cref{lemma:mc-redux}.
\SAMCRedux*
\begin{proof}
  Applying \Cref{lemma:mc-redux} over the interval $I=[a,b]$ where expert $\cA_i$ is active,
  we have
  \begin{equation*}
    R_{\ab}(\cmp_{a:b},\lambda_{a:b})
    \le
      R_{[a,b]}^{\cA_{i}}(\cmp_{a:b},\lambda_{a:b}) + R_{[a,b]}^{\meta}(e_{i},D\lambda_{a:b}) \;.
  \end{equation*}
  Note that because $\cA_{i}$ is active only on rounds $t\in[a,b]$,
  The meta regret is equivalent to
  \begin{align*}
    R_{[a,b]}^{\meta}(e_{i},\lambda_{a:b}^{\meta})
    &=
      \sum_{t\in[a,b]}\inner{\ell_{t},\pt-e_{i}}+D\lambda_{t}\norm{\pt-\ptmm}_1\\
    &=
      \sumtT\alpha_{ti}\bsb{\inner{\ell_{t},\pt-e_{i}}+D\lambda_{t}\norm{\pt-\ptmm}_{1}}\\
    &=
      \sumtT\sum_{j\in[N]}\sbrac{\alpha_{tj}e_{ij}\brac{\inner{\ell_{t},\pt}-\ell_{tj}}+D\lambda_{t}\alpha_{tj}e_{ij}\norm{\pt-\ptmm}_{1}}\\
    &=
      \sumtT\sum_{j\in[N]}\sbrac{\alpha_{tj}e_{ij}\brac{\inner{\ell_{t},\pt}-\ell_{tj}+D\lambda_{t}\norm{\pt-\ptmm}_{1}}}\\
    &=
      R_{T}^{\alpha}(e_{i},D\lambda_{1:T}).
      \end{align*}
  Hence,
  \begin{equation*}
    \sum_{t\in[a,b]} \ft(\wt)-\ft(\cmp_{t})
    \le
      R_{[a,b]}^{\cA_{i}}(\cmp_{a:b},\lambda_{a:b}) + R_{T}^{\alpha}(e_{i},D\lambda_{1:T}) \;.
      \qedhere
  \end{equation*}

\end{proof}

%% file: sections/appendix-sleeping.tex
\section{Proofs for Section~\ref{sec:sleeping} (\SecSleeping)}%
\label{app:sleeping}
\subsection{Proof of Theorem~\ref{thm:sleeping-mvmt-costs}}%
\label{app:sleeping-mvmt-costs}

\begin{manualtheorem}{\ref{thm:sleeping-mvmt-costs}}
  Suppose that $\norm{\ell_{t}}_{\infty}\le G_{\infty}$ and $\lambda_{t}\le \lambda$
  for all $t$.
  Then
  for any $\cmp\in \Delta_{N-1}$,
  \Cref{alg:sleeping-mvmt-costs} with
  $L=2G_{\infty}+5\lambda$ guarantees
  \begin{align*}
     R_T^{\alpha}(\cmp, \lambda_{1:T}) 
      &=
      \cO\Bigg(
        \Gamma(\epsilon)+
      \sum_{i\in [N]}
         u_i\sbrac{\sqrt{V_{T+1,i}^\lambda\Log{u_i\Lambda_{T+1,i}+1}}
        \maxOp
        L \Log{u_i\Lambda_{T+1,i}+1}}
      \Bigg),
  \end{align*}
  where
  \begin{align*}
      V_{T+1,i}^\lambda
      &=
      \cO\brac{\lambda+\sumtT \alpha_{\ti}\sbrac{\inner{\ell_t,e_i-\pt}^2+\lambda_t^2}},\\
    \Lambda_{T+1,i} &= \frac{\sqrt{V_{T+1,i}^{\lambda}}\log^{2}(V_{T+1,i}^{\lambda}/L^{2})}{\epsilon_{i} L}\\
    \Gamma(\epsilon)&= L\Big(1+\sum_{i\in[N]}\epsilon_{i}\Big)
  \end{align*}
\end{manualtheorem}
\begin{proof}

  Let $\zt^\alpha$ denote the vector with entries $\zti^\alpha=\alpha_{ti}\zti$.
  Note that $\pt\in\Delta_{N-1}$ since
  $\hat \pt:=\zt^\alpha/\norm{\zt^\alpha}_{1}$ satisfies
  $\hat{p}_{ti}\ge 0$ for all $i\in[N]$ and $\sum_{i\in[N]}\hat{p}_{ti}= 1$.

  First, we show that that the sleeping-expert regret of the choices $\pt=\zt^\alpha/\norm{\zt^\alpha}$ is equivalent to the regret
  of the choices $\zt$ against the losses 
  $\hellt\in\R^N$ with components $\hellti=\alpha_{ti}(\ell_{ti}-\inner{\ell_t,\pt}-\lambda_{t}\norm{\pt-\ptmm}_1)$. Indeed,
  we have
  \begin{align*}
    R_{T}^{\alpha}(\cmp,\lambda_{1:T})
    &:=
      \sumtT \sum_{i\in[N]}\alpha_{ti}\cmp_{i}\brac{\inner{\ell_{t},\pt}-\ell_{ti}+\lambda_t\norm{\pt-\ptmm}_{1}}\\
    &\overset{(a)}{=}
      \sumtT -\inner{\hellt, \cmp}\\
    &\overset{(b)}{=}
      \sumtT \inner{\hellt, \zt-\cmp}+\inner{\zt,\alpha_t}\lambda_t\norm{\pt-\ptmm}_1
  \end{align*}
  
  where $(a)$ is by definition of $\hellt$, and $(b)$ adds and subtracts 
  $\inner{\hellt,\zt}$ and observes that
  \begin{align*}
    \inner{\hellt,\zt}
    &=\sum_{i}\alpha_{ti}(\ell_{ti}-\inner{\ell_{t},\pt}-\lambda_{t}\norm{\pt-\ptmm}_{1})\zti\\
      &= \sum_{i}\alpha_{ti}\ell_{ti}\zti - \inner{\ell_{t},\pt}\brac{\sum_{i}\alpha_{ti}\zti}-\norm{\pt-\ptmm}_1\sum_{i\in[N]}\alpha_\ti\lambda_t\zti\\
    &=\sum_{i}\alpha_{ti}\ell_{ti}\zti - \inner{\ell_{t},\frac{\zt^\alpha}{\norm{\zt^\alpha}_{1}}}\brac{\sum_{i}\alpha_{ti}\zti}-\lambda_{t}\norm{\pt-\ptmm}_{1}\inner{\alpha_{t},\zt}\\
    &=\sum_{i}\alpha_{ti}\ell_{ti}\zti - \sum_{i}\alpha_{ti}\ell_{ti}\zti-\lambda_{t}\norm{\pt-\ptmm}_{1}\inner{\alpha_{t},\zt}=-\lambda_{t}\norm{\pt-\ptmm}_{1}\inner{\alpha_{t},\zt}.
  \end{align*}
  Note that the above is true even if 
  $\norm{\zt^\alpha}_1=0$, since this can happen 
  only when $\alpha_{ti}\z_{ti}=0$ for all $i\in[N]$, in which case the equality would still hold if we set $\pt$ arbitrarily in this case.
  Moreover, observe that since
  $\pt = \zt^{\alpha}/\norm{\zt^{\alpha}}_{1}$,
  we have
  \begin{align*}
    \sumtT \lambda_{t}\norm{\pt-\ptmm}_{1}\inner{\alpha_{t},\zt}
    &=
      \sumtT \lambda_{t}\left\|\zt^{\alpha}-\ztmm^{\alpha}\frac{\norm{\zt^{\alpha}}_{1}}{\norm{\ztmm^{\alpha}}_{1}}\right\|_{1}\\
    &\le
      \sumtT \lambda_{t}\brac{\norm{\zt^{\alpha}-\ztmm^{\alpha}}_{1}+\left\|\brac{1-\frac{\norm{\zt^{\alpha}}_{1}}{\norm{\ztmm^{\alpha}}_{1}}}\ztmm^{\alpha}\right\|_{1}}\\
    &\le
      \sumtT \lambda_{t}\brac{\norm{\zt^{\alpha}-\ztmm^{\alpha}}_{1}+\left\|\brac{1-\frac{\norm{\zt^{\alpha}}_{1}}{\norm{\ztmm^{\alpha}}_{1}}}\ztmm^{\alpha}\right\|_{1}}\\
    &\le
      \sumtT 2\lambda_{t}\norm{\zt^{\alpha}-\ztmm^{\alpha}}_{1}
      =
      \sumtT 2\lambda_t\sum_{i\in[N]}\abs{\alpha_{ti}z_{ti}-\alpha_{\tmm,i}z_{\tmm,i}}\\
      &=
     \sum_{i\in[N]}\sumtT 2\lambda_t\alpha_\tmmi\alpha_{ti}\abs{\zti-\ztmmi} + 2\lambda_t (1-\alpha_\ti)\alpha_\tmmi\ztmmi,
  \end{align*}
  where the last line observes the following three cases for $\alpha_{ti}$ supported on 
  $[a_i,b_i]$: $(a)$ for $t\in[a_i+1,b_i]$, we have $\alpha_\ti=\alpha_\tmmi$ 
  so $\abs{\alpha_\ti\zti-\alpha_\tmmi\ztmmi}=\alpha_\ti\abs{\zti-\ztmmi}$,
  $(b)$
  on rounds $t\le a_i$, we have $\abs{\alpha_\ti\zti-\alpha_\tmmi\ztmmi}=0$
  since $\zti=0$ for $t\le a_i$, and $(c)$ for $t= b_i+1$ we have
  $\abs{\alpha_\ti\zti-\alpha_\tmmi\ztmmi}=\alpha_\tmmi\zti$ while for $t>b_i+1$ we have $\abs{\alpha_\ti\zti-\alpha_\tmmi\ztmmi}=0$,
  so more generally we have $\abs{\alpha_\ti\zti-\alpha_\tmmi\ztmmi}=(1-\alpha_\ti)\alpha_\tmmi\ztmmi$
  for $t\ge b+1$.
  Thus, plugging this back in above,
  \begin{align}
      R_T^\alpha(\cmp,\lambda_{1:T})
      &\le 
      \sumtT \inner{\hellt,\zt-\cmp}+2\sumtT\sum_{i\in [N]}\lambda_t\alpha_\tmmi\alpha_{ti}\abs{\zti-\ztmmi} + \lambda_t(1-\alpha_\ti)\alpha_\tmmi\ztmmi\nonumber\\
    &=
      \sum_{i\in[N]}\underbrace{\sbrac{\sumtT \ellti^\lambda(\zti-\cmp_i)+\sumtT 2\alpha_{\tmm,i}\alpha_{ti}\lambda_{t}\abs{z_{ti}-z_{\tmm,i}}}}_{\eqqcolon \tilde R_{T}^{(i)}(u_i, \lambda_{1:T}^{(i)})} \nonumber\\
      &\qquad
      + \sum_{i\in[N]}\cmp_i\sumtT 2(1-\alpha_\tppi)\alpha_\ti\lambda_\tpp\label{eq:R-alpha-decomp}
  \end{align}
where we define 
$\lambda_{t}^{(i)}=2\alpha_{\tmmi}\alpha_{ti}\lambda_{t}$ for
brevity and 
\begin{align*}
    \ellti^\lambda
    &=\hellti + (1-\alpha_\tppi)2\lambda_\tpp\alpha_\ti\\
    &= \alpha_\ti(\ellti -\inner{\ell_t,\pt}-\lambda_t\norm{\pt-\ptmm}_1)+2(1-\alpha_\tppi)\alpha_\ti\lambda_\tpp.
\end{align*}
  Observe for any $i\in[N]$ and $t\in[a_i,b_i]$, we have
  \begin{align*}
      \abs{\ellti^\lambda}&\le \alpha_\ti\abs{\inner{\ell_t,e_i-\pt}}+\alpha_\ti\lambda_t\norm{\pt-\ptmm}_1 + (1-\alpha_\tppi)\alpha_\ti\lambda_\tpp\\
                         &\le 2\alpha_\ti(\norm{\ell_t}_\infty+\lambda_t+\lambda_\tpp)\le 2\alpha_\ti(\norm{\ell_t}_\infty+2\max_{\substack{s\in[a_i,b_i]\\s\le t}}\lambda_{s+1})
    \le 2(G_{\infty}+2\lambda)
  \end{align*}
  so for $L=2 G_{\infty}+5\lambda$ we have
  $L\ge \abs{\ellti^{\lambda}}+\lambda_{\tpp}^{(i)}$
  on each round,
  so we can apply \Cref{thm:pf-with-mc} with $L=2G_{\infty}+5\lambda$
  to the losses $x\mapsto \ellti^\lambda x$ and movement coefficient $\lambda_t^{(i)}$ 
  separately in each coordinate $i\in[N]$ to get
  \begin{align*}
     \tilde R_T^{(i)}(\cmp_i, \lambda_{1:T}^{(i)}) 
     &\le 
     \Ohat\Bigg(
    \epsilon_iL+
     u_i\sqrt{V_{T+1,i}^\lambda\Log{u_i \Lambda_{T+1,i}+1}}\\
     &\qquad
    +  L u_i \Log{u_i\Lambda_{T+1,i}+1}
     \Bigg),
  \end{align*}
  where
  \begin{align*}
      V_{T+1,i}^\lambda
      &=
      \cO\brac{
      \sumtT(\abs{\ellti^\lambda}+\lambda_{\tpp}^{(i)})^2
      }
      =
      \cO\brac{\lambda_{b_i+1}^2 + \sumtT \alpha_{\ti}(\ell_\ti-\inner{\ell_t,\pt})^2+\alpha_{\ti}\lambda_t^2},\\
      \Lambda_{T+1,i} &= \frac{\sqrt{V_{T+1,i}^{\lambda}}\log^{2}\brac{V_{T+1,i}^{\lambda}/L^{2}}}{\epsilon_{i} L}.
  \end{align*}
  Plugging this back into \Cref{eq:R-alpha-decomp} and summing over $i\in[N]$
  yields
  \begin{align*}
      R_T^\alpha(\cmp)
      &=
      \cO\Bigg(
      \sum_{i\in [N]}
         u_i\sbrac{\sqrt{V_{T+1,i}^\lambda\Log{u_i \Lambda_{T+1,i}+1}}
        \maxOp L \Log{u_i\Lambda_{T+1,i}+1}}\\
         &\qquad
        +L \sum_{i\in[N]}\epsilon_i+ \sum_{i\in[N]}\cmp_i\sumtT (1-\alpha_\tppi)\alpha_\ti\lambda_\tpp
      \Bigg),
  \end{align*}
  and the stated result follows by observing that for 
  $\alpha_\ti$ is supported on $[a_i,b_i]$, it holds that
  $(1-\alpha_\tppi)\alpha_\ti=1$
  only on round $b_i$
  so the last term is $2\sum_i \cmp_i\lambda_{b_i+1}\le 2\lambda=\cO(L)$ for $\cmp\in\Delta_{N-1}$.\qedhere

  \end{proof}

\subsection[Relation to Jun et al. (2017)]{Relation to \citet{jun2017improved}}

\label{app:relationtojun}

In this section we show that our result in \Cref{thm:sleeping-mvmt-costs} recovers the best-known 
result for the standard sleeping experts setting as a special case, 
while also incorporating movement costs and obtaining Lipschitz adaptivity.
In particular, the following result shows that that \Cref{thm:sleeping-mvmt-costs}
matches the regret guarantee of CBCE up to lower-order terms \citep[Corollary
1]{jun2017improved}.
  \begin{restatable}{corollary}{SleepingMvmtCostsCor}\label{cor:sleeping-mvmt-costs}
  Let $\pi\in\mathrm\Delta_{N-1}$ be supported on $[N]$ and set $\epsilon_i=\pi_i$ for each $i\in[N]$. 
  Then under the same assumptions as \Cref{thm:sleeping-mvmt-costs}, for any $\cmp\in \Delta_{N-1}$,
  \Cref{alg:sleeping-mvmt-costs} guarantees
  \begin{align*}
    R_{T}^{\alpha}(\cmp,\lambda_{1:T})
    &\le
      \tilde\cO\Bigg(
      \sqrt{\sum_{i\in[N]}\cmp_iV_{T+1,i}^\lambda\sbrac{\KL(\cmp|\pi) + \Log{\frac{\overline V_{T+1}^\lambda(\cmp)}{L^2}+1\Big)}}}
  \end{align*}
  where we define
  $\overline V_{T+1}^\lambda(\cmp) = \sum_i u_i V_{T+1,i}^\lambda$,
  while
  $V_{T+1,i}^\lambda$ and $B_{T+1,i}^\lambda$ are defined as in \Cref{thm:sleeping-mvmt-costs}.
\end{restatable}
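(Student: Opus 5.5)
The plan is to start from the full statement of \Cref{thm:sleeping-mvmt-costs} and specialize $\epsilon_i=\pi_i$. That gives
\[
R_T^\alpha(\cmp,\lambda_{1:T}) \le \cO\Big(\Gamma(\pi)+\sum_{i} u_i\sqrt{V_{T+1,i}^\lambda\Log{u_i\Lambda_{T+1,i}+1}} + L\sum_i u_i\Log{u_i\Lambda_{T+1,i}+1}\Big).
\]
Since $\sum_i\pi_i=1$, we have $\Gamma(\pi)=2L$, which is a lower-order term. The last sum is likewise lower order: $L$ times a quantity that, by the same manipulation as below, is at most $\KL(\cmp|\pi)$ plus logarithmic terms. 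It is absorbed into the $\tilde\cO$.

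The main step is the leading term, handled by Cauchy--Schwarz with weights $u_i$:
\[
\sum_i u_i\sqrt{V_{T+1,i}^\lambda\, \ell_i}\le \sqrt{\sum_i u_iV_{T+1,i}^\lambda}\,\sqrt{\sum_i u_i\ell_i}, \qquad \ell_i \defeq \Log{u_i\Lambda_{T+1,i}+1}.
\]
It then remains to bound $\sum_i u_i \ell_i$ by $\KL(\cmp|\pi)+\Log{\overline V^\lambda_{T+1}(\cmp)/L^2+1}$ up to $\log\log$ factors. Write
\[
u_i\Lambda_{T+1,i}=\frac{u_i}{\pi_i}\cdot \frac{\sqrt{V_{T+1,i}^\lambda}\log^2(V_{T+1,i}^\lambda/L^2)}{L},
\]
and use $\log(xy+1)\le \log(x+1)+\log(y+1)$, with $x=u_i/\pi_i$ and $y$ the remaining ratio. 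Two pieces result.

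The first piece is $\sum_i u_i\log(u_i/\pi_i+1)$. Terms with $u_i\ge\pi_i$ give at most $u_i\log(2u_i/\pi_i)$. Terms with $u_i<\pi_i$ give at most $u_i\log 2$. Summing, this piece is at most $\KL(\cmp|\pi)+\cO(1)$ once the negative parts of the KL sum (which total at least $-1/e\cdot$const) are controlled.

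The second piece is $\sum_i u_i\log\bigl(\sqrt{V_i}\log^2(V_i/L^2)/L+1\bigr)$. It is $\cO\bigl(\sum_i u_i\log(V_i/L^2+1)\bigr)$ plus $\log\log$ terms. Concavity of $\log$ (Jensen with weights $u_i$) bounds it by $\Log{\overline V^\lambda_{T+1}(\cmp)/L^2+1}$.

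The $\log^2$ factor inside $\Lambda$ is what makes the second piece fiddly. I would handle it by noting $\log(1+\log^2 v)=\cO(\log\log v)$, which $\tilde\cO$ hides. Combining the two pieces with the Cauchy--Schwarz bound yields the stated corollary.
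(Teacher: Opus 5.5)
Your proposal is correct and follows essentially the same route as the paper. Both arguments specialize \Cref{thm:sleeping-mvmt-costs} with $\epsilon_i=\pi_i$, apply Cauchy--Schwarz with weights $u_i$, split $\log(u_i\Lambda_{T+1,i}+1)$ into a $\KL(u|\pi)$ part and a variance part handled by Jensen, and absorb the $L$-weighted and $\log\log$ terms into $\tilde\cO$. The only difference is cosmetic: the paper splits the logarithm via $u\log(u\beta/\epsilon+1)\le u\log(u/\epsilon)+u\log(1+\beta)+\epsilon$, which gives $\KL(u|\pi)+\sum_i\pi_i=\KL(u|\pi)+1$ directly and avoids your case analysis on $u_i\ge\pi_i$ versus $u_i<\pi_i$.
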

\begin{proof}
From \Cref{thm:sleeping-mvmt-costs}, we have 
\begin{align*}
    R_T^\alpha(\cmp,\lambda_{1:T})
    &=
      \cO\Bigg(L\sum_i\pi_i+ \sum_{i\in[N]}\cmp_{i}\sqrt{V_{T+1,i}^\lambda\Log{\cmp_{i}\Lambda_{T+1,i}+1}}\\
      &\qquad
      +L\sum_{i\in[N]}\cmp_i\Log{\cmp_i\Lambda_{T+1,i}+1}\Bigg)\\
    &=
        \cO\Bigg(L + \sqrt{\sum_{i\in[N]}\cmp_iV_{T+1,i}^\lambda\sum_{i\in[N]}\cmp_i\Log{\cmp_i\Lambda_{T+1,i}+1}}\\
      &\qquad
      +L\sum_{i\in[N]}\cmp_i\Log{\cmp_i\Lambda_{T+1,i}+1}\Bigg)
\end{align*}
where 
$\Lambda_{T+1,i}=\tfrac{\sqrt{V_{T+1,i}^\lambda}\log^2(V_{T+1,i}^\lambda/L^2)}{\pi_i}$
and the last line applies the Cauchy-Schwarz inequality.
  Now observe that for any $u, \beta\ge 0$ and $ \epsilon >0$, we can bound
  \begin{align*}
    u\Log{\frac{u\beta}{\epsilon}+1}
    &=
      u\Log{\frac{u}{\epsilon}}+u\Log{\beta+\frac{\epsilon}{u}}
      \le
      u\Log{\frac{u}{\epsilon}} + u\Log{(1+\beta)\brac{1+\frac{\epsilon}{u}}}\\
    &=
      u\Log{\frac{u}{\epsilon}} + u\Log{1+\beta} + u\Log{1+\frac{\epsilon}{u}}\\
    &\le
      u\Log{\frac{u}{\epsilon}} + u\Log{1+\beta} + \epsilon,
  \end{align*}
  where we have used $u\Log{1+\epsilon/u}\le \epsilon$ and
  $\beta+\frac{\epsilon}{u}\le (1+\beta)(1+\frac{\epsilon}{u})$ for $\beta,\epsilon, u> 0$ and observed that 
  the inequality is trivially true when $u=0$ or $\beta=0$. Therefore,
  we have
  \begin{align*}
    &\sum_{i\in[N]}u_{i}\Log{u_{i}\Lambda_{T+1,i}+1} \\
    &\quad\le
      \sum_{i\in[N]}u_{i}\Log{\frac{u_{i}}{\epsilon_{i}}} + \sum_{i\in[N]}u_{i}\Log{\tfrac{\sqrt{V_{T+1,i}^\lambda}\log^2(V_{T+1,i}^\lambda/L^2)}{L}+1}+1\\
    &\quad=
      1+\KL(u|\epsilon) + \sum_{i\in[N]}u_{i}\Log{\tfrac{\sqrt{V_{T+1,i}^\lambda}\log^2(V_{T+1,i}^\lambda/L^2)}{L}+1}\\
      &\quad\le
      1+\KL(u|\epsilon) + \Log{\frac{\sum_i\cmp_i\sqrt{V_{T+1,i}^\lambda}\log^2(V_{T+1,i}^\lambda/L^2)}{L}+1},
  \end{align*}
  where the last line applies Jensen's inequality.
  Hence, plugging this back in above yields 
  \begin{align*}
      R_T^\alpha(\cmp,\lambda_{1:T})
      &=
      \hat \cO\Bigg(
      L+ 
      \sqrt{\sum_{i\in[N]}\cmp_iV_{T+1,i}^\lambda\sbrac{1+\KL(\cmp|\pi) + \Log{\tfrac{\sum_{i\in[N]}\cmp_i\sqrt{V_{T+1,i}^\lambda}}{L}+1}}}\\
      &\qquad
      +L\sbrac{1+\KL(u|\pi)+\Log{\tfrac{\sum_{i\in[N]}\cmp_i\sqrt{V_{T+1,i}^\lambda}}{L}+1}}
      \Bigg)\\
      &=
      \tilde\cO\brac{
      \sqrt{\sum_{i\in[N]}\cmp_iV_{T+1,i}^\lambda\sbrac{1+\KL(\cmp|\pi) + \Log{\frac{\sum_{i\in[N]}\cmp_i\sqrt{V_{T+1,i}^\lambda}}{L}+1}}}
      },
  \end{align*}
  Applying Jensen's inequality
  to bound $\Log{\sum_iu_i \sqrt{V_{T+1,i}^\lambda}}\le \Log{\sqrt{\sum_i\cmp_i V_{T+1,i}^\lambda}}=\half\Log{\overline{V}_{T+1}^\lambda(\cmp)}$
  yields the stated guarantee.
\end{proof}

%% file: sections/appendix-pfmc.tex
\section{Parameter-free OLO with Movement Costs}\label{app:pf-with-mc}
In this section we provide an extension of the parameter-free algorithm 
of \citet{jacobsen2022parameter} to handle movement costs. The 
result is a straight-forward extension of the original analysis by
using the 
movement-costs-via-stability analysis of \citet{qiu2026parameterfree}.
The result is provided for completeness.

\begin{algorithm}[ht]
    \textbf{Input:}~value $L,\epsilon>0$\\
    \textbf{Initialize:}~$\w_1=\zeros\in\ww$\\
    \textbf{Define:} 
    \[\psi(w;V,\beta)=3\int_0^{\norm{w}}\min_{0<\eta\le \tfrac{1}{L}}\sbrac{\frac{\Log{x/\beta+1}}{\eta}+\eta V} \diff x\]
    \For{$t=1,\ldots,T$}{
        Set $V_t^\lambda = 4L^{2} + \sum_{\tau=1}^\tmm(\norm{\gtau}+\lambda_{\tau+1})^2$ and
        $\beta_{t}=\frac{\epsilon L}{\sqrt{V_t^\lambda}\log^2(V_t^\lambda/L^{2})}$\\
        \hfill\\
        Play $\wt =\argmin_{w\in\ww}\inner{\sum_{\tau=1}^\tmm\gtau,\w}+\psi(w; V_t^\lambda,\beta_t)$\\
        Observe $\gt\in\partial\ft(\wt)$ and movement coefficient $\lambda_{\tpp}$
    }
\caption{Parameter-free FTRL for OLO with Movement Costs}
\label{alg:pf-with-mc}
\end{algorithm}

\begin{restatable}{theorem}{PFWithMC}\label{thm:pf-with-mc}
   Let \Cref{asm:lipschitz} hold.
  Then
  for all $\cmp\in\ww$,
  \Cref{alg:pf-with-mc} with $L\ge \norm{\gt}+\lambda_{\tpp}$ for all $t$ guarantees
  \begin{align*}
    R_{T}(\cmp,\lambda_{1:T})
    &\!\le
      \cO\Bigg(\!\epsilon L\!+\! \norm{\cmp}\Bigg[\sqrt{V_{T+1}^\lambda\Log{\norm{\cmp}\Lambda_{T+1}+1}}
      \maxOp L\Log{\norm{\cmp}\Lambda_{T+1}+1}\!\Bigg]\!\Bigg),
  \end{align*}
  where
  $\Lambda_{T+1} = \tfrac{\sqrt{V_{T+1}^\lambda}\log^2(V_{T+1}^\lambda/L^{2})}{\epsilon L}$.
\end{restatable}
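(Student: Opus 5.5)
The plan is to split the movement-cost regret into a standard FTRL regret term plus a movement term, and to control the movement term by the stability of the FTRL iterates. By convexity, $R_T(\cmp,\lambda_{1:T})\le \sumtT\inner{\gt,\wt-\cmp}+\sumtT\lambda_t\norm{\wt-\wtmm}$. Shifting the movement index by one, I write $\sumtT\lambda_t\norm{\wt-\wtmm}=\sumtT\lambda_\tpp\norm{\wtpp-\wt}$, up to the vanishing first term (since $\w_0=\w_1=\zeros$) and one boundary term of size at most $\lambda_{T+1}D$. For the linear part I will invoke the FTRL analysis of \citet{jacobsen2022parameter} with the time-varying regularizer $\psi_t(\cdot)=\psi(\cdot;V_t^\lambda,\beta_t)$. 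This gives
\[
\sumtT\inner{\gt,\wt-\cmp}\le \psi_{T+1}(\cmp)+\sumtT\Bigl[\inner{\gt,\wt-\wtpp}-D_{\psi_{t+1}}(\wtpp\mid\wt)+\psi_t(\wtpp)-\psi_{t+1}(\wtpp)\Bigr].
\]
The last bracket is nonpositive, because $V_t^\lambda$ is nondecreasing and $\beta_t$ is nonincreasing. The latter holds since $\sqrt{V}\log^2(V/L^2)$ is increasing once $V\ge 4L^2$, so the regularizer is nondecreasing in $t$.

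The key step is to treat the movement cost as an inflated gradient. I bound $\inner{\gt,\wt-\wtpp}+\lambda_\tpp\norm{\wtpp-\wt}\le(\norm{\gt}+\lambda_\tpp)\norm{\wt-\wtpp}$. This is the reason the algorithm uses $(\norm{\gtau}+\lambda_{\tau+1})^2$ in $V_t^\lambda$. The per-round stability term is then exactly the one in the original analysis with $\norm{\gt}$ replaced by $\tilde G_t\defeq\norm{\gt}+\lambda_\tpp\le L$. Next I reuse the one-dimensional radial argument of \citet{jacobsen2022parameter}. Since $\psi$ depends only on $\norm{w}$, its second derivative along the radial direction at $x$ is $3\,\partial_x\min_\eta[\cdot]\ge$ something of order $\sqrt{V}/(x+\beta)$ in the regime $\eta<1/L$. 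The bound $\tilde G_t\le L$ is what lets us use the local strong convexity on the segment $[\wt,\wtpp]$, exactly as in the original proof. Together these give
\[
\tilde G_t\norm{\wt-\wtpp}-D_{\psi_{t+1}}(\wtpp\mid\wt)\lesssim \frac{\tilde G_t^2}{\sqrt{V^\lambda_{t+1}}}\cdot(\text{log factors})\quad\text{or}\quad\lesssim \beta_t\,\tilde G_t^2/V_t^\lambda\text{-type terms near the origin}.
\]
Summing with $V_{t+1}^\lambda=V_t^\lambda+\tilde G_t^2$ and the choice of $\beta_t$ gives a total of $\cO(\epsilon L)$; the $\log^2$ in the denominator of $\beta_t$ is what makes the sum converge. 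The constrained domain $\ww$ is handled as in the original paper: the argmin over $\ww$ only helps in the FTRL lemma, and stability still holds because $\psi_{t+1}$ is radially strongly convex.

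Finally, I bound $\psi_{T+1}(\cmp)$ directly. The $\min_\eta$ evaluates to $2\sqrt{V\log(x/\beta+1)}$ when that choice of $\eta$ is at most $1/L$, and to $L\log(x/\beta+1)+V/L\le 2L\log(x/\beta+1)$ otherwise. Integrating up to $\norm{\cmp}$ then gives $\cO\bigl(\norm{\cmp}[\sqrt{V_{T+1}^\lambda\log(\norm{\cmp}/\beta_{T+1}+1)}\maxOp L\log(\norm{\cmp}/\beta_{T+1}+1)]\bigr)$, and $1/\beta_{T+1}=\Lambda_{T+1}$. The boundary term $\lambda_{T+1}D$ is absorbed, since $\lambda_{T+1}\le L$ and a $D$-sized term is dominated once we compare against $\norm{\cmp}$; alternatively I will define $\lambda_{T+1}=0$.

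I expect the main obstacle to be the stability step. One must check carefully that replacing $\norm{\gt}$ by $\norm{\gt}+\lambda_\tpp$ keeps every inequality of the Jacobsen–Cutkosky local-norm argument valid, including the case where the iterate sits at or near the origin and the case where projection onto $\ww$ is active. I would first try to cite their per-round stability lemma verbatim with $\tilde G_t$ as the gradient norm, since it uses only $\tilde G_t\le L$ and the cumulative definition of $V_t$.
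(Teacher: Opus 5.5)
Your convexity step, the re-indexing that merges the movement cost with $\inner{\gt,\wt-\wtpp}$ into $(\norm{\gt}+\lambda_\tpp)\norm{\wt-\wtpp}$, and your bound on $\psi_{T+1}(\cmp)$ all match the paper. The genuine gap is in the stability step. You discard $(\psi_t-\psi_{t+1})(\wtpp)$ because it is nonpositive, but this is exactly the term that makes the stability sum $\cO(\epsilon L)$.

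For this regularizer the radial curvature at radius $x$ is only of order $\sqrt{V}/\bigl((x+\beta)\sqrt{\log(x/\beta+1)}\bigr)$. So the Bregman term alone certifies per-round stability of order $\tilde G_t^2(x+\beta)\sqrt{\log(x/\beta+1)}/\sqrt{V_t^\lambda}$, with $x\approx\norm{\wtpp}$. This is $\cO(\beta_t\tilde G_t^2/\sqrt{V_t^\lambda})$ only near the origin. Once the iterates move away from the origin, the sum is of order $\max_t\norm{\wt}\sqrt{V_{T+1}^\lambda\log(\cdot)}$, not $\cO(\epsilon L)$; under a constant gradient, for instance, they grow geometrically out to the boundary of $\ww$. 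Your own first branch makes the same point: $\tilde G_t^2/\sqrt{V_{t+1}^\lambda}$ times logarithms sums to $\sqrt{V_{T+1}^\lambda}$ times logarithms. Since $\psi_{T+1}(\zeros)=0$, the statement at $\cmp=\zeros$ requires the entire stability sum to be $\cO(\epsilon L)$. Your route therefore leaves an additive term that does not scale with $\norm{\cmp}$, and the stated bound does not follow.

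The paper's fix is as follows. The FTRL template, via first-order optimality of $\wt$ for $F_t$, gives
$\delta_t=(\norm{\gt}+\lambda_\tpp)\norm{\wt-\wtpp}-D_{\psi_t}(\wtpp\mid\wt)+(\psi_t-\psi_\tpp)(\wtpp)$, and the paper keeps the last term. Note the divergence is $D_{\psi_t}$, not your $D_{\psi_{t+1}}$; the two differ by $D_{\psi_{t+1}-\psi_t}(\wtpp\mid\wt)$, and your version is not what the template produces.

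This $\delta_t$ is exactly the quantity handled by the stability analysis of \citet[Theorem~6]{jacobsen2022parameter}. Because $V_{t+1}^\lambda-V_t^\lambda=\tilde G_t^2$, the increment $\psi_{t+1}(\wtpp)-\psi_t(\wtpp)$ is of order $\norm{\wtpp}\tilde G_t^2\sqrt{\log(\cdot)}/\sqrt{V_t^\lambda}$. That increment cancels the large-radius part of the stability term, leaving $\delta_t\le 2\beta_t\tilde G_t^2/\sqrt{V_t^\lambda}$, which sums to at most $4\epsilon L$ by \Cref{lemma:tuning-sum}.

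So your instinct to cite their per-round lemma with $\tilde G_t$ in place of $\norm{\gt}$ is right, but that lemma is stated for the combined quantity including the term you threw away. This is also the real reason $V^\lambda$ must accumulate $(\norm{\gt}+\lambda_\tpp)^2$: its increments have to match the inflated gradient for the cancellation to work.

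A minor point: the re-indexing of the movement cost only adds the nonnegative term $\lambda_{T+1}\norm{\w_{T+1}-\w_T}$, so no boundary term needs absorbing. Your argument that a $D$-sized term is dominated by the $\norm{\cmp}$-terms would in any case fail at $\cmp=\zeros$.
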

\begin{proof}

By convexity of $\ft$, we have that 
\begin{align*}
    R_T(\cmp,\lambda_{1:T})=\sumtT\ft(\wt)-\ft(\cmp) + \lambda_t\norm{\wt-\wtmm}
    &\le 
    \sumtT \inner{\gt, \wt-\cmp} + \lambda_t\norm{\wt-\wtmm},
\end{align*}
where $\gt\in\partial\ft(\wt)$. Now applying the standard FTRL regret template to the first term 
(see, \textit{e.g}, \citealp[Lemma 7.1]{orabona2019modern}),
we have
\begin{align*}
   R_T(\cmp,\lambda_{1:T}) 
   &\le 
   \psi_{T+1}(\cmp) + \sumtT F_t(\wt)-F_\tpp(\wtpp)+\inner{\gt,\wt} + \sumtT \lambda_t\norm{\wt-\wtmm},
\end{align*}
where $F_t(w)=\inner{\sum_{\tau=1}^\tmm\gtau,w}+\psi_t(w)$ and $\psi_t(w) = \psi(w; V_t^\lambda,\alpha_t)$. Moreover, since $\wt \in \argmin_{w\in\ww}F_t(w)$, we have
\begin{align*}
    &\sumtT F_t(\wt)-F_\tpp(\wtpp)+\inner{\gt,\wt} \\
    &\quad=
    \sumtT F_t(\wt)-F_t(\wtpp) + (\psi_t-\psi_\tpp)(\wtpp) +\inner{\gt,\wt-\wtpp}\\
    &\quad\overset{(a)}{=}
    \sumtT \inner{\grad F_t(\wt)+\gt, \wt-\wtpp}-D_{F_t}(\wtpp|\wt)+ (\psi_t-\psi_\tpp)(\wtpp) \\
    &\quad\overset{(b)}{\le}
    \sumtT \inner{\gt, \wt-\wtpp}-D_{\psi_t}(\wtpp|\wt)+ (\psi_t-\psi_\tpp)(\wtpp),
\end{align*}
where $(a)$ uses the definition of Bregman divergence to write $f(x)-f(y) = \inner{\grad f(x),x-y}-D_f(y|x)$ and $(b)$ uses the first-order optimality condition for $\wt\in\argmin_{\w\in\ww}F_t(w)$
to bound $\inner{\grad F_t(\wt),\wt-\wtpp}\le 0$ and the fact that Bregman divergences are invariant
to linear terms, so $D_{F_t}(x|y)=D_{\psi_t}(x|y)$ for any $x,y\in\ww$. Plugging this back into the 
regret bound and re-indexing the movement costs, we have
\begin{align*}
    R_T(\cmp,\lambda_{1:T})
    &\le 
    \psi_{T+1}(\cmp) + \sumtT \inner{\gt+\lambda_\tpp\frac{\wt-\wtpp}{\norm{\wt-\wtpp}},\wt-\wtpp}-D_{\psi_t}(\wtpp|\wt) \\
    &\qquad + \sumtT (\psi_t-\psi_\tpp)(\wtpp)\\
    &\le
    \psi_{T+1}(\cmp) + \sumtT \underbrace{(\norm{\gt}+\lambda_\tpp)\norm{\wt-\wtpp}-D_{\psi_t}(\wtpp|\wt) + (\psi_t-\psi_\tpp)(\wtpp)}_{=:\delta_t} \;.
\end{align*}
For $\norm{\gt}+\lambda_{\tpp}\le L$ for all $t$, the latter term can be bound borrowing the stability analysis of \citet[Theorem~6]{jacobsen2022parameter}:
\begin{align*}
     \sumtT \delta_t
     &\le
     \sumtT \frac{2\alpha_t}{\sqrt{V_t^\lambda}}(\norm{\gt}+\lambda_\tpp)^2
       =
     2\epsilon L\sumtT \frac{(\norm{\gt}+\lambda_\tpp)^2}{V_t^\lambda\log^{2}(V_{t}^{\lambda}/L^{2})}
     \le
     4\epsilon L\ ,
\end{align*}
where the last line applies \Cref{lemma:tuning-sum}.
Finally, letting  $f(x)=\min_{\eta\le 1/L}\sbrac{\frac{\Log{x/\alpha+1}}{\eta}+\eta V_{T+1}^\lambda}$,
we have that 
\begin{align*}
    \psi_{T+1}(\cmp)
    &=
    3\int_0^{\norm{\cmp}} f(t) \diff t
    \le
    3\norm{\cmp}f(\norm{\cmp}) \\
    &\le
    3\norm{\cmp}\sbrac{\sqrt{V_{T+1}^\lambda\Log{\frac{\norm{\cmp}}{\alpha_{T+1}}+1}} \lor L\Log{\frac{\norm{\cmp}}{\alpha_{T+1}}+1}} \;.
\end{align*}

The stated bound follows by hiding doubly logarithmic factors and observing that $V_{T+1}^\lambda=\cO\brac{\sumtT (\norm{\gt}+\lambda_\tpp)^2}=\cO\brac{\sumtT \norm{\gt}^2+\lambda_t^2}$.
\end{proof}

We state also for completeness the analogous result to the above achieving the ideal 
$\sumtT \norm{\gt}^2+\lambda_t\norm{\gt}$ dependence, which results by applying the blocking reduction \citet[Algorithm 3]{qiu2026parameterfree}
with \Cref{alg:pf-with-mc} as the base algorithm. We do not use this result explicitly 
here, but it is of independent interest since the result of \citet{qiu2026parameterfree}
obtains the optimal $\tilde\cO\big(\norm{\cmp}\sqrt{\sum_t \norm{\gt}^2+\norm{\gt}\lambda_t}\big)$
\emph{static} regret
only as a special-case of the optimal \emph{dynamic} regret guarantee, which requires
$\cO(d\log(T))$ per-round computation to obtain. The following result 
provides the optimal static regret guarantee using the usual $\cO(d)$
per-round computation. The 
proof is immediate 
by \citet[Lemma B.1]{qiu2026parameterfree}, and the
fact that $\norm{\tgtau}+\tlambda_{\taupp}\le G+2\lambda$, 
by definition of the blocking reduction. 
\begin{restatable}{theorem}{OptimalStaticMC}\label{thm:optimal-static-mc}
   Let \Cref{asm:lipschitz} hold and set $L\ge G+2\lambda$. 
  Then
  for all $\cmp\in\ww$, applying \Cref{alg: dyn_para_meta_2} with \Cref{alg:pf-with-mc} guarantees
  \begin{align*}
      R_T(\cmp,\lambda_{1:T})
      &\le
      \cO\brac{ L\epsilon + \norm{\cmp}\sbrac{\sqrt{V_{T+1}^\lambda\Log{\norm{\cmp}\Lambda_{T+1}+1}}\maxOp  L\Log{\norm{\cmp}\Lambda_{T+1}+1}}}
  \end{align*}
  where  $\hat\cO(\cdot)$ hides constants and doubly-logarithmic factors and
  \begin{align*}
      \Lambda_{T+1}&=\tfrac{\sqrt{V_{T+1}^\lambda}\log^2(V_{T+1}^\lambda/L^{2})}{\epsilon L},\qquad
      V_{T+1}^\lambda=\cO\brac{\sumtT \norm{\gt}^2+\norm{\gt}\lambda_t}.
  \end{align*}
\end{restatable}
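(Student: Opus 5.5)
The plan is to treat \Cref{thm:optimal-static-mc} as a direct composition of two ingredients. The first is the blocking guarantee of \citet[Lemma B.1]{qiu2026parameterfree}; the same role is played in our setting by \Cref{lemma:sa-blocking} with $\ab=[1,T]$ and a fixed comparator, so that $P_{[1,T]}=0$. The second is the base-learner guarantee \Cref{thm:pf-with-mc}. Concretely, I would first use the blocking reduction to bound $R_T(\cmp,\lambda_{1:T})$ by an additive boundary term of order $\norm{\cmp}(G+\lambda)$, or $D(G+\lambda)$, plus the epoch-level movement-cost regret
\[
\sum_\tau\inner{\tgtau,\twtau-\cmp}+\tlambda_\tau\norm{\twtau-\tw_\taumm}
\]
of the base algorithm run on the blocked gradients $\tgtau$ with coefficients $\tlambda_\tau$. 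Since the comparator is fixed, $\tcmp_\tau=\cmp$ for all $\tau$. The boundary term is absorbed into $L\epsilon$ or into $\norm{\cmp}L\log(\cdot)$ once $L\ge G+2\lambda$; if $\epsilon$ is small, the $\norm{\cmp}L\log(\cdot)$ term is the one that covers it.

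Second, I would verify the hypothesis of \Cref{thm:pf-with-mc} at the epoch level, namely $\norm{\tgtau}+\tlambda_\taupp\le L$. By the blocking rule, an epoch closes at the first round $t$ with $H_\tau>\lambda_\tpp$. Before that round, $\sum_{s<t,\,s\in I_\tau}\norm{g_s}\le\lambda$. The last gradient adds at most $G$, so $\norm{\tgtau}\le H_\tau\le\lambda+G$, and $\tlambda_\taupp\le\lambda$. Together these give $\norm{\tgtau}+\tlambda_\taupp\le G+2\lambda\le L$. Invoking \Cref{thm:pf-with-mc} on the epoch sequence then yields the stated shape of the bound, with variance $\tilde V=\sum_\tau(\norm{\tgtau}+\tlambda_\taupp)^2$.

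The main step, and the expected obstacle, is showing that
\[
\tilde V=\cO\Big(\sum_t\norm{\gt}^2+\norm{\gt}\lambda_t\Big).
\]
I would split each epoch term as $\cO(\norm{\tgtau}^2+\tlambda_\taupp^2)$.

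For the $\tlambda_\taupp^2$ part, every completed epoch satisfies $\tlambda_\taupp<H_\tau=\sum_{t\in I_\tau}\norm{\gt}$. Hence $\tlambda_\taupp^2\le\tlambda_\taupp\sum_{t\in I_\tau}\norm{\gt}$. This in turn is bounded by $\sum_{t\in I_\tau}\norm{\gt}(\lambda_t+\lambda_\tpp)$-type terms, up to matching $\lambda_{t+1}$ with $\lambda_t$ at epoch ends; the index shift is the delicate bookkeeping.

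For $\norm{\tgtau}^2\le H_\tau^2$, I would write $H_\tau=H_\tau^-+\norm{g_{t_\tau}}$, where $t_\tau$ is the last round of epoch $\tau$ and $H_\tau^-=\sum_{t\in I_\tau,\,t<t_\tau}\norm{\gt}$. This gives $H_\tau^2\le 2(H_\tau^-)^2+2\norm{g_{t_\tau}}^2$. The key inequality is $(H_\tau^-)^2\le H_\tau^-\max_{s\in I_\tau}\lambda_{s+1}$, which follows because the epoch did not close at any earlier round, so $H_\tau^-\le\lambda_{s+1}$ at the preceding rounds $s$. To express the right-hand side as a sum of $\norm{\gt}\lambda_t$, the cleanest route I would try first is to use the non-closing condition round by round. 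Writing $H_{\tau,s}$ for the partial sum of $\norm{g_r}$ over rounds $r\le s$ in epoch $\tau$, we have
\[
(H_\tau^-)^2\le 2\sum_{s} \norm{g_s}H_{\tau,s-1}\le 2\sum_s\norm{g_s}\lambda_{s}.
\]
Here $H_{\tau,s-1}\le\lambda_s$ holds because the epoch did not close at round $s-1$.

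The final epoch may be incomplete, but it contributes at most $\cO(L^2)$ to $\tilde V$, which only affects lower-order terms. Combining the two parts gives $\tilde V=\cO(L^2+\sum_t\norm{\gt}^2+\norm{\gt}\lambda_t)$, which completes the argument.
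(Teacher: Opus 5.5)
Your architecture matches the paper's proof: a blocking decomposition, the check $\|\tilde g_\tau\|+\tilde\lambda_{\tau+1}\le G+2\lambda$, \Cref{thm:pf-with-mc} applied at the epoch level, and the variance bound recorded in \Cref{lemma:blocking-gradient-sum}. The gap is in the first step. Instantiating \Cref{lemma:sa-blocking} with $[a,b]=[1,T]$ and $P_{[1,T]}=0$ does not produce a term of order $\|u\|(G+\lambda)$. It produces $3D(G+\lambda)$, because that lemma discards the final unfinished epoch $I_n$. It pays $D\sum_{t\in I_n}\|g_t\|$ for that epoch and $D\lambda$ for the movement into it. This term is not absorbed by $\mathcal{O}\bigl(L\epsilon+\|u\|[\cdots]\bigr)$. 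At $u=\mathbf{0}$ the target is $\mathcal{O}(L\epsilon)$, which vanishes as $\epsilon\to 0$, while $D(G+\lambda)$ does not. Your absorption argument (split on whether $\|u\|\Lambda_{T+1}\le 1$) is valid only for a term of size $\|u\|(G+\lambda)$. No such bound exists for the partial epoch, since its piece $\langle\tilde g_n,\tilde w_n\rangle$ is not controlled by $\|u\|$ and has to be charged to the base learner.

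The repair is to skip the interval lemma on the full horizon. With a fixed comparator, $w_t=\tilde w_\tau$ on $I_\tau$, and movement is paid only at $t=s_\tau$ with coefficient $\lambda_{s_\tau}=\tilde\lambda_\tau$. So the linearized regret equals $\sum_{\tau=1}^{n}\langle\tilde g_\tau,\tilde w_\tau-u\rangle+\tilde\lambda_\tau\|\tilde w_\tau-\tilde w_{\tau-1}\|$ exactly, with the last, possibly unfinished, epoch included. Since $\|\tilde g_n\|\le H_n\le G+\lambda$ holds for that epoch as well, \Cref{thm:pf-with-mc} applies to all $n$ epochs. Append $(\tilde g_n,0)$ as a notional final update; this leaves $\tilde w_1,\dots,\tilde w_n$ unchanged, and no $D$-dependent term appears. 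This full-horizon reduction, not the interval version, is what the paper's proof invokes.

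There is also a smaller slip in your variance step. The inequality $(H_\tau^-)^2\le 2\sum_s\|g_s\|H_{\tau,s-1}$ already fails for a one-round prefix. The correct identity is $(H_\tau^-)^2=\sum_s\|g_s\|^2+2\sum_s\|g_s\|H_{\tau,s-1}$, which is harmless because the target contains $\sum_t\|g_t\|^2$. Splitting off the last gradient and tracking index shifts for $\tilde\lambda_{\tau+1}^2$ are unnecessary. The bound $H_{\tau,t-1}\le\lambda_t$ holds for every $t\in I_\tau$, including the last, and $\tilde\lambda_{\tau+1}<H_\tau$. Together these give $\|\tilde g_\tau\|^2+\tilde\lambda_{\tau+1}^2\le 2H_\tau^2\le\sum_{t\in I_\tau}(2\|g_t\|^2+4\lambda_t\|g_t\|)$ directly.
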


%% file: sections/appendix-adaptive-gd.tex
\section{Adaptive Gradient Descent with Movement Costs}%
\label{app:adaptive-gd}

In this section, we present a simple baseline algorithm that achieves a 
dynamic regret bound of $\mathcal{O}(
(D + P_{\ab}) \sqrt{\sum_{t\in\ab}(\|g_t\|^2 + \lambda_{\tpp}^2)})
$, which can be improved to $\cO\brac{(D+P_\ab)\sqrt{\sum_{t\in\ab}\norm{\gt}^2+\lambda_t\norm{\gt}}}$ using the blocking reduction of 
\Cref{alg: dyn_para_meta_2}. Note that since we will obtain this bound on all intervals $\ab$ simultaneously, it can be shown that this implies the optimal $\sqrt{(D^2+DP_{\ab})\sum_{t\in\ab}( \norm{\gt}^2+\lambda_\tpp\norm{\gt}})$ on 
each sub-interval simultaneously as well (\citet[Theorem 7]{cutkosky2020parameter}). 
\pref{alg:adaptive-gd} is based on Online Gradient Descent (OGD)~\citep{zinkevich2003online} with an adaptive learning rate choice.

\begin{algorithm}
  \SetAlgoLined
    \textbf{Input:} Convex domain $\ww\subset\R^{d}$, parameter $\alpha>0$\\
    \textbf{Initialize:} $w_{1}\in\ww$\\
    \For{$t=1,\ldots,T$}{
        Play $\wt\in \ww$, observe $\gt\in\partial\ft(\wt)$ and $\lambda_{\tpp}$
        
        Set $\eta_{\tpp}=\frac{\alpha}{\sqrt{\sum_{\tau=1}^{t}(\norm{\gtau}+\lambda_\taupp)^2}}$ 
        
        Update $\wtpp = \Pi_{\ww}\brac{\wt - \eta_{\tpp}\gt}$\\
    }
  \caption{Adaptive Gradient Descent with Time-varying Movement Costs}
  \label{alg:adaptive-gd}
\end{algorithm}
The following theorem provides the regret of the movement-cost-adaptive gradient descent procedure.
\begin{restatable}{theorem}{AdaptiveGD}\label{thm:adaptive-gd}
  For any sequence $\cmp_{1:T}=(\cmp_1,\ldots,\cmp_T)\in\ww^T$, \Cref{alg:adaptive-gd} guarantees
  \begin{align*}
    R_{T}(\cmp_{1:T},\lambda_{1:T})
    &\le
      \sbrac{\frac{D^2+DP_{T}}{\alpha}+\alpha}\sqrt{2\sumtT \norm{\gt}^2+\lambda_{\tpp}^2} \;,
  \end{align*}
  where $P_T=\sum_{t=2}^T\norm{\cmp_t-\cmp_\tmm}$.
\end{restatable}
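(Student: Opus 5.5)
The plan is the standard adaptive-OGD dynamic regret analysis, with the movement cost absorbed through the stability of the projected step. Write $\eta_{t+1}=\alpha/\sqrt{S_t}$ with $S_t=\sum_{\tau\le t}(\norm{g_\tau}+\lambda_{\tau+1})^2$. First linearize with convexity, $f_t(\wt)-f_t(\cmp_t)\le\inner{\gt,\wt-\cmp_t}$. Then re-index the movement cost as $\sum_{t=2}^T\lambda_t\norm{\wt-\wtmm}=\sum_{t=1}^{T-1}\lambda_\tpp\norm{\wtpp-\wt}$. Since the projection is nonexpansive and $\wt\in\ww$, we have $\norm{\wtpp-\wt}\le\eta_\tpp\norm{\gt}$, so the total movement cost is at most $\sum_t\eta_\tpp\lambda_\tpp\norm{\gt}$.

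For the linear term, use the usual one-step inequality from the Pythagorean property of projection:
\[
\inner{\gt,\wt-\cmp_t}\le\frac{\norm{\wt-\cmp_t}^2-\norm{\wtpp-\cmp_t}^2}{2\eta_\tpp}+\frac{\eta_\tpp}{2}\norm{\gt}^2 .
\]
To handle the moving comparator, write $\norm{\wtpp-\cmp_t}^2\ge\norm{\wtpp-\cmp_\tpp}^2-2D\norm{\cmp_\tpp-\cmp_t}$, using $\norm{x}^2-\norm{y}^2\le(\norm{x}+\norm{y})\norm{x-y}$ together with the diameter bound $D$. Summing then gives the telescoping piece $\sum_t\norm{\wt-\cmp_t}^2\bigl(\tfrac{1}{2\eta_\tpp}-\tfrac{1}{2\eta_t}\bigr)$. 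Because $\eta$ is nonincreasing, this is bounded by $\tfrac{D^2}{2\eta_{T+1}}$, plus the path term $\sum_t\tfrac{D\norm{\cmp_\tpp-\cmp_t}}{\eta_\tpp}\le\tfrac{DP_T}{\eta_{T+1}}$. These two together are at most $\tfrac{D^2+DP_T}{\alpha}\sqrt{S_T}$. The first-round term $\norm{w_1-\cmp_1}^2/(2\eta_1)$ needs a convention, for example $\eta_1=\eta_2$, so that it is absorbed into the same telescoping sum.

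The stability terms combine into $\sum_t\eta_\tpp\bigl(\tfrac12\norm{\gt}^2+\lambda_\tpp\norm{\gt}\bigr)\le\sum_t\eta_\tpp(\norm{\gt}+\lambda_\tpp)^2$. This equals $\alpha\sum_t\frac{x_t}{\sqrt{\sum_{\tau\le t}x_\tau}}$ with $x_t=(\norm{\gt}+\lambda_\tpp)^2$. By the standard inequality $\sum_t x_t/\sqrt{\sum_{\tau\le t}x_\tau}\le2\sqrt{\sum_t x_t}$, this is at most $2\alpha\sqrt{S_T}$. Crucially, $\eta_\tpp$ already includes round $t$, so no extra $\max_t$ term appears. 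Finally, $(\norm{\gt}+\lambda_\tpp)^2\le2(\norm{\gt}^2+\lambda_\tpp^2)$ converts $\sqrt{S_T}$ into $\sqrt{2\sum_t\norm{\gt}^2+\lambda_\tpp^2}$.

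The main obstacle is matching the stated constants exactly: a coefficient $1$ in front of $\alpha$ and no factor $2$ on the $D^2/\alpha$ term. I would get these by keeping the $\tfrac12$ factors from the OGD inequality and by bounding $\tfrac12\norm{\gt}^2+\lambda_\tpp\norm{\gt}\le\tfrac12(\norm{\gt}+\lambda_\tpp)^2+\tfrac12\lambda_\tpp\norm{\gt}$ carefully, using $ab\le\tfrac12(a+b)^2$. If that bookkeeping fails, the same argument still gives the bound up to an absolute constant, which is all that the downstream applications need.
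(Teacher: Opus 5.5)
Your proof is correct, and its skeleton is the paper's: linearize, telescope with nonincreasing $\eta_{t+1}$, pay $D\|u_{t+1}-u_t\|/\eta_{t+1}$ for comparator drift, and sum $x_t/\sqrt{S_t}$. Your difference-of-squares step for the drift is equivalent to the paper's use of convexity of $\psi_t$.

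The real difference is how you handle the movement cost. The paper treats the update as mirror descent with $\psi_{t+1}(w)=\frac{1}{2\eta_{t+1}}\|w\|^2$ and keeps the negative Bregman term from the three-point identity. It absorbs the re-indexed movement cost into that term:
\[
(\|g_t\|+\lambda_{t+1})\|w_t-w_{t+1}\|-\tfrac{1}{2\eta_{t+1}}\|w_{t+1}-w_t\|^2\le\tfrac{\eta_{t+1}}{2}(\|g_t\|+\lambda_{t+1})^2 .
\]
You instead take the textbook OGD one-step inequality and bound the step length directly by nonexpansiveness of $\Pi_{\mathcal W}$, namely $\|w_{t+1}-w_t\|\le\eta_{t+1}\|g_t\|$.

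Your route is more elementary and gives a slightly finer per-round stability term, $\eta_{t+1}\bigl(\frac12\|g_t\|^2+\lambda_{t+1}\|g_t\|\bigr)$, with no $\lambda_{t+1}^2$ part. That gain is moot for this theorem, since the step size is tuned to $(\|g_t\|+\lambda_{t+1})^2$ anyway. The paper's route never needs an explicit bound on the step length. That is why the same ``movement costs via stability'' template also covers its parameter-free FTRL base learner, whose regularizer is non-quadratic and time-varying.

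On the constants, the exact coefficient $1$ on $\alpha$ follows directly from
\[
\tfrac12\|g_t\|^2+\lambda_{t+1}\|g_t\|=\tfrac12(\|g_t\|+\lambda_{t+1})^2-\tfrac12\lambda_{t+1}^2\le\tfrac12(\|g_t\|+\lambda_{t+1})^2 .
\]
This gives $\frac{\alpha}{2}\sum_t x_t/\sqrt{S_t}\le\alpha\sqrt{S_T}$. The bookkeeping you sketched, adding $\frac12\lambda_{t+1}\|g_t\|$ and then using $ab\le\frac12(a+b)^2$, only yields $\frac32\alpha$. Your $D^2$ term has coefficient $\frac12$, so the stated bound holds with room to spare.
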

\begin{proof}
  The algorithm can be interpreted as an instance of mirror descent with
  $\psi_{\tpp}(w)=\frac{1}{2\eta_{\tpp}}\norm{\w}^{2}$, and
  $\wtpp = \argmin_{\w\in\ww}\inner{\gt,\w}+D_{\psi_{\tpp}}(\w|\wt)$.
  By the first-order optimality condition, we know that for any $w\in \ww$
  \begin{align*}
      \inner{g_t+\nabla\psi_{t+1}(w_{t+1})-\nabla\psi_{t+1}(w_t),w-\wtpp} \geq 0.
  \end{align*}
  Picking $w$ to be $u_t$ and re-arranging, we have
  $\inner{\gt,\wtpp-\cmp_{t}}\le \inner{\grad\psi_{\tpp}(\wt)-\grad\psi_{\tpp}(\wtpp),\wtpp-\cmp_{t}}$. Therefore,
  for $\gt\in\partial\ell_t(\wt)$ for all $t$, we have that
  \begin{align*}
    &R_{T}(\cmp_{1:T},\lambda_{1:T}) \\
    &\quad\le
      \sumtT \inner{\gt,\wt-\cmp_{t}}+\lambda_{t}\norm{\wt-\wtmm} \tag{\pref{asm:lipschitz}}\\
    &\quad=
      \sumtT \inner{\gt,\wtpp-\cmp_{t}}+\sumtT \inner{\gt,\wt-\wtpp}+\sumtT\lambda_{t}\norm{\wt-\wtmm}\\
    &\quad\le
      \sumtT \inner{\grad\psi_{\tpp}(\wt)-\grad\psi_{\tpp}(\wtpp),\wtpp-\cmp_{t}}+\sumtT \inner{\gt,\wt-\wtpp}+\sumtT\lambda_{t}\norm{\wt-\wtmm}\\
    &\quad\overset{(a)}{\le}
      \sumtT \left(D_{\psi_{\tpp}}(\cmp_{t}|\wt)-D_{\psi_{\tpp}}(\cmp_{t}|\wtpp)\right) +\sumtT \left((\norm{\gt}+\lambda_{\tpp})\norm{\wt-\wtpp}-D_{\psi_{\tpp}}(\wtpp|\wt)\right)\\
    &\quad\overset{(b)}\le
      \sumtT \left(D_{\psi_{\tpp}}(\cmp_{t}|\wt)-D_{\psi_{\tpp}}(\cmp_{t}|\wtpp)\right) +\sumtT \frac{\eta_\tpp}{2}(\norm{\gt}+\lambda_{\tpp})^2\norm{\gt})\\
    &\quad\overset{(c)}\le
      \sumtT \left(D_{\psi_{\tpp}}(\cmp_{t}|\wt)-D_{\psi_{\tpp}}(\cmp_{t}|\wtpp)\right) + \alpha \sqrt{\sumtT \norm{\gt}^2+\lambda_\tpp^2}
  \end{align*}
  where $(a)$ uses the equality
  $\inner{\grad f(a)-\grad f(b),b-\cmp}=D_{\psi}(\cmp|a)-D_{\psi}(\cmp|b)-D_{\psi}(b|a)$
  and re-indexes the movement costs (recalling $\lambda_{T+1}=\lambda_{1}=0$); $(b)$ uses $D_{\psi_\tpp}(x|y)=\frac{1}{2\eta_\tpp}\norm{x-y}^2$ and applies Fenchel-Young inequality, and $(c)$ uses the definition of $\eta_{t+1}$ and the fact that,
\begin{align*}
&\sum_{t=1}^T
\frac{\eta_{t+1}\bigl(\|g_t\|+\lambda_{t+1}\bigr)^2}
{2\sqrt{\sum_{\tau=1}^t
\bigl(\|g_\tau\|+\lambda_{\tau+1}\bigr)^2}} \\
&\leq
\alpha \sum_{t=1}^T
\frac{(\|g_t\|+\lambda_{t+1})^2}
{2\sqrt{\sum_{\tau=1}^t
\bigl(\|g_\tau\|+\lambda_{\tau+1}\bigr)^2}} \\
&\leq
\alpha \sum_{t=1}^T
\left(
\sqrt{\sum_{\tau=1}^t
\bigl(\|g_\tau\|+\lambda_{\tau+1}\bigr)^2}
-
\sqrt{\sum_{\tau=1}^{t-1}
\bigl(\|g_\tau\|+\lambda_{\tau+1}\bigr)^2}
\right) \\
&\leq
\alpha
\left(
\sqrt{
\sum_{t=1}^T
\bigl(\|g_t\|+\lambda_{t+1}\bigr)^2}
\right).
\end{align*}

For the first summation of the above equation, direction calculation shows that
  \begin{align*}
    &\sumtT \left(D_{\psi_{\tpp}}(\cmp_{t}|\wt)-D_{\psi_{\tpp}}(\cmp_{t}|\wtpp)\right) \\
    &=
      \sumtT \left( D_{\psi_{t}}(\cmp_{t}|\wt)-D_{\psi_{\tpp}}(\cmp_{t}|\wtpp)\right)
      +\sumtT \left(D_{\psi_{\tpp}}(\cmp_{t}|\wt)-D_{\psi_{t}}(\cmp_{t}|\wt)\right)\\
    &=
      D_{\psi_{1}}(\cmp_{1}|\w_{1})-D_{\psi_{T+1}}(\cmp_{T}|w_{T+1})+\sum_{t=2}^{T} \left(D_{\psi_{t}}(\cmp_{t}|\wt)-D_{\psi_{t}}(\cmp_{\tmm}|\wt)\right)\\
    &\qquad
      +\sumtT \brac{\frac{1}{2\eta_{\tpp}}-\frac{1}{2\eta_{t}}}\norm{\wt-\cmp_{t}}^{2}\\
    &{\le}
      \sum_{t=2}^{T}\psi_{t}(\cmp_{t})-\psi_{t}(\cmp_{\tmm})-\inner{\grad\psi_{t}(\wt),\cmp_{t}-\cmp_{\tmm}}
      +\frac{D^{2}}{\alpha}\sqrt{\sumtT (\norm{\gt}+\lambda_{\tpp})} \\
    &\le
      \sum_{t=2}^{T}\inner{\grad\psi_{t}(\cmp_{t})-\grad\psi_{t}(\wt),\cmp_{t}-\cmp_{\tmm}}
      +\frac{D^{2}}{\alpha}\sqrt{\sumtT (\norm{\gt}+\lambda_{\tpp})^2}\\
    &\le
      \frac{DP_{T}}{\alpha}\sqrt{ \sumtT (\norm{\gt}+\lambda_{\tpp})^2}
      +\frac{D^{2}}{\alpha}\sqrt{\sumtT (\norm{\gt}+\lambda_{\tpp})^2}\\
    &\le
    \frac{D^{2}+DP_{T}}{\alpha}\sqrt{\sumtT (\norm{\gt}^2+\lambda_{\tpp})^2}.
  \end{align*}
  Hence, overall we have
  \begin{align*}
    R_{T}(\cmp_{1:T},\lambda_{1:T})
    &\le \sbrac{\frac{D^2+DP_{T}}{\alpha}+\alpha}\sqrt{\sumtT (\norm{\gt}+\lambda_{\tpp})^2}\\
    &\le \sbrac{\frac{D^2+DP_{T}}{\alpha}+\alpha}\sqrt{2\sumtT \norm{\gt}^2+\lambda_{\tpp}^2},
  \end{align*} 
  where the last line uses the elementary inequality
  $(a+b)^2\le 2a^2+2b^2$.
\end{proof}

%% file: sections/appendix-first-order-redux.tex
\section{Adaptive First-order Reduction}
\label{app:firstorderreduction}
In this section, we provide a modest refinement of 
the first-order reduction of \citet{qiu2026parameterfree}
which 
allows us to convert $\sum_t\norm{\gt}^2+\lambda_t^2$ dependencies into the 
better $\sum_t\norm{\gt}^2+\lambda_t\norm{\gt}$ dependencies over any \emph{sub-interval} $\ab$. 
The reduction is shown in \Cref{alg: dyn_para_meta_2}, and the following lemma provides the key property of the aggregate losses induced by the reduction.
The main difference from the reduction of \citet{qiu2026parameterfree} is we use a blocking condition based on $\sum_t\norm{\gt}$ rather than 
$\norm{\sum_t\gt}$. This is useful because the latter condition can make bounding losses over partial epochs difficult, which 
becomes relevant in the strongly-adaptive setting, where an interval $[a,b]$ may begin mid-way through a given epoch.

\begin{algorithm}[ht]
    \caption{Adaptive First-order Reduction}
    \label{alg: dyn_para_meta_2}
        \textbf{Initialize: } Online learning algorithm $\widetilde\cA$. Denote $\widetilde{w}_1$ to be $\widetilde \calA$'s first decision, $\tau = 1$, $\tg_\tau=\zeros$, $H_\tau=\mathbf{0}$. %
    
        \For{$t = 1, 2, \dots$}{
            Play $w_t = \widetilde w_{\tau}$, then receive $g_t$ and $\lambda_{t+1}$
            
            Update $\tgtau \leftarrow \tgtau+ g_t $ and $H_\tau \leftarrow H_{\tau}+\norm{\gt}$ 
            
            \If{$ H_{\tau} > \lambda_{t+1}$}{
            
            Set $\widetilde\lambda_{\tau+1} = \lambda_{t+1}$
            
            Send $(\widetilde g_{\tau}, \widetilde\lambda_{\tau+1})$ to $\widetilde \calA$ , and receive $\widetilde w_{\tau+1}$
            
            Set  $H_{\tau+1}=0$, $\tg_\taupp = \zeros$   
            
            $\tau \gets \tau + 1$
            }
        }
\end{algorithm}

The following provides a modest  generalization of the blocking argument 
of \citet{qiu2026parameterfree} to account for the fact that
an arbitrary $\ab$ may not align with the blocks defined by \Cref{alg: dyn_para_meta_2}.
\SABlocking*
\begin{proof}
Let $I_1,\ldots,I_n$ be the epochs defined by the execution of \Cref{alg: dyn_para_meta_2} and denote $I_\tau=\Set{s_\tau,\ldots,e_\tau}$ for each $\tau\in[n]$. 
Given an arbitrary $[a,b]$, we first
decompose the interval into the set of complete epochs. 
Let 
\begin{align*}
    \cE_\ab = \Set{\tau: I_\tau\subseteq\ab}
\end{align*}
be the set of complete epochs contained within $\ab$,
and denote the set of partial epochs as
\[
    \cP_\ab = [n]\setminus \cE_\ab.
\]
Then by convexity of $\ft$, we have
\begin{align*}
    \sumab\ft(\wt)-\ft(\cmp_t)
    &\le\sumab\inner{\gt,\wt-\cmp_t}\\
    &=
    \sum_{\tau\in\cE_\ab}\sum_{t\in I_\tau}\inner{\gt,\wt-\cmp_t}
    +\sum_{\tau\in \cP_\ab}\sum_{t\in I_\tau\cap\ab}\inner{\gt,\wt-\cmp_t}.
\end{align*}
The partial epochs can each be bound
using the blocking condition:
\begin{align*}
    \sum_{\tau\in\cP_\ab}\sum_{t\in I_\tau\cap\ab}\inner{\gt,\wt-\cmp_t}
    &\le 
    \sum_{\tau\in\cP_\ab}D\sum_{t\in I_\tau\cap\ab}\norm{\gt}\\
    &\le
    \sum_{\tau\in\cP_\ab}D\brac{G+\sum_{t=s_\tau}^{\min\Set{e_\tau-1,b-1}}\norm{\gt}}
    \le 2D(G+\lambda_{\max}^\ab),
\end{align*}
where $\lambda_{\max}^\ab=\max_{t\in\ab}\lambda_t$.
Hence, overall we can bound
\begin{align}
    \sum_{t\in\ab}\inner{\gt,\wt-\cmp_t}
    &\le 
    2D(G+\lambda_{\max}^{\ab}) + \sum_{\tau\in\cE_\ab}\sum_{t\in I_\tau}\inner{\gt,\wt-\cmp_t}.\label{eq:sa-blocking-1}
\end{align}
Focusing on the second term, we first replace the remaining comparator 
sequence with an epoch-specific sequence $(\tcmp_\tau)_{\tau}$ with each $\tcmp_\tau\in I_\tau$ for each $\tau$, 
as follows:
\begin{align}
    \sum_{\tau\in\cE_\ab} \sum_{t\in I_{\tau}}\inner{\gt,\wt-\cmp_t}\nonumber
    &=
    \sum_{\tau\in\cE_\ab} \sum_{t\in I_\tau} \inner{\gt, \twtau-\tcmp_\tau} + \sum_{\tau\in\cE_\ab}\sum_{t\in I_\tau}\inner{\gt, \tcmp_\tau-\cmp_t}\nonumber\\
    &=
    \sum_{\tau\in\cE_\ab} \inner{\tgtau, \twtau-\tcmp_\tau} + \sum_{\tau\in\cE_\ab}\sum_{t\in I_\tau}\inner{\gt, \tcmp_\tau-\cmp_t}\nonumber\\
    &=
    \sum_{\tau\in\cE_\ab} \inner{\tgtau, \twtau-\tcmp_\tau} + \underbrace{\sum_{\tau\in\cE_\ab}\sum_{t\in I_\tau}\inner{\gt, \cmp_{e_\tau}-\cmp_t}}_{\eqqcolon A}\nonumber\\
    &\qquad
    + \underbrace{\sum_{\tau\in\cE_\ab}\sum_{t\in I_\tau}\inner{\gt, \tcmp_\tau-\cmp_{e_{\tau}}}}_{\eqqcolon B},\label{eq:sa-blocking-2}
\end{align}
where in the first line we've used the fact that $\wt=\twtau$ for $t\in I_\tau$ and the last-line adds and subtracts $\cmp_{e_\tau}$, the 
last comparator in each epoch $\tau$. To bound $A$ observe that
\begin{align*}
    A 
    &= 
    \sum_{\tau\in\cE_\ab} \sum_{t\in I_\tau}\inner{\gt, \sum_{s=\tpp}^{e_\tau}\cmp_{s}-\cmp_{s-1}}\\
    &=
    \sum_{\tau\in\cE_\ab} \sum_{t=s_{\tau}}^{e_{\tau}}\sum_{s=\tpp}^{e_\tau}\inner{\gt, \cmp_{s}-\cmp_{s-1}}\\
    &=
    \sum_{\tau\in\cE_\ab} \sum_{s=s_\tau+1}^{e_{\tau}}\sum_{t=s_{\tau}}^{s-1}\inner{\gt, \cmp_{s}-\cmp_{s-1}}\\
    &\le
    \sum_{\tau\in\cE_\ab} \sum_{s=s_\tau+1}^{e_{\tau}}\brac{\sum_{t=s_{\tau}}^{s-1}\norm{\gt}} \norm{\cmp_{s}-\cmp_{s-1}}\\
    &\le
    \sum_{\tau\in\cE_\ab} \sum_{s=s_\tau+1}^{e_{\tau}}\lambda_{s}\norm{\cmp_{s}-\cmp_{s-1}}
    \le \lambda_{\max}^\ab P_{[a,b]}
\end{align*}
where the last line observes that $\sum_{t=s_\tau}^{s-1}\norm{g_t}\le \lambda_s$ for any $s\in [s_\tau,e_\tau]$ 
and over-approximates $\sum_{\tau\in\cE_\ab}P_{I_\tau}\le P_\ab$. The term B can be bounded by 
\begin{align*}
   B
   &=
    \sum_{\tau\in\cE_\ab}\sum_{t\in I_\tau}\inner{\gt, \tcmp_\tau-\cmp_{e_{\tau}}}\\
    &=
    \sum_{\tau\in\cE_\ab}\inner{\tgtau, \tcmp_\tau-\cmp_{e_{\tau}}}\\
    &\le
    \sum_{\tau\in\cE_\ab}\norm{\tgtau}\norm{\tcmp_\tau-\cmp_{e_{\tau}}}\\
    &\le
    \sum_{\tau\in\cE_\ab}(G+\lambda_\tau)\norm{\tcmp_\tau-\cmp_{e_{\tau}}}
    \le
    (G+\lambda_{\max}^\ab)P_\ab
\end{align*}
where the last line uses that $\norm{\tgtau} \le G+\lambda_\tau$ by definition of the blocking condition 
and uses triangle inequality to bound $\norm{\tilde\cmp-\cmp_{e_\tau}}\le P_{I_\tau}$ for any $\tau$, and then 
over-approximates $\sum_{\tau_a+1}^{\tau_b-1}P_{I_\tau}\le P_\ab$. Therefore, plugging the bounds for $A$ and $B$ back
into \Cref{eq:sa-blocking-2}, we have
\begin{align*}
    \sum_{\tau\in\cE_\ab}\sum_{t\in I_\tau}\inner{\gt,\wt-\cmp_t}
    &\le 
    \sum_{\tau\in\cE_\ab}\inner{\tgtau,\twtau-\tcmp_\tau} + (G+2\lambda_{\max}^\ab) P_\ab,
\end{align*}
so overall, combining with \Cref{eq:sa-blocking-1} we have
\begin{align*}
    \sumab \ft(\wt)-\ft(\cmp_t) \le 2(D+P_\ab)(G+\lambda_{\max}^\ab)+\sum_{\tau\in\cE_\ab}\inner{\tgtau,\twtau-\tcmp_\tau}.
\end{align*}
The stated result follows by adding in the movement costs into the bound and
observing that
$\sum_{t=a+1}^{b}\lambda_t\norm{\wt-\wtmm}\le\sum_{\tau\in\cE_\ab}\tlambda_\tau\norm{\twtau-\tw_{\taumm}}+D\lambda_{\max}^{\ab}$
to bound
\begin{align*}
    \sumab \ft(\wt)-\ft(\cmp_t) +\sum_{t=a+1}^{b}\lambda_{t}\norm{\wt-\wtmm}\le 3(D+P_\ab)(G+\lambda_{\max}^\ab)+\sum_{\tau\in\cE_\ab}\inner{\tgtau,\twtau-\tcmp_\tau}.
\end{align*}
\end{proof}

We will also need the following key bound. The result shows that batching based on $\sum_{t}\norm{\gt}$ 
rather than $\norm{\sum_t\gt}$ preserves the key second-order-to-first-order property that
drives our approach.
\begin{lemma}
\label{lemma:blocking-gradient-sum}
    For every $\tau \in [n]$, \Cref{alg: dyn_para_meta_2} guarantees that
    \begin{equation*}
        \norm{\widetilde{g}_{\tau}}^2 + \widetilde{\lambda}_{\tau+1}^2
        \le \sum_{t\in I_\tau} \Brb{2\norm{g_{t}}^2 + 4\lambda_{t} \norm{g_t}} \;.
    \end{equation*}
\end{lemma}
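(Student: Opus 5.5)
The plan is to bound both terms on the left by the square of the epoch's cumulative gradient norm, and then expand that square. The blocking condition controls the cross terms.

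Fix an epoch $\tau$ with $I_\tau=\Set{s_\tau,\ldots,e_\tau}$, and write $H\defeq\sum_{t\in I_\tau}\norm{\gt}$. This is the value of $H_\tau$ at the moment the epoch closes.

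First I would record what the update rule of \Cref{alg: dyn_para_meta_2} says about the epoch.
\begin{itemize}
\item The epoch closes at round $e_\tau$ because $H>\lambda_{e_\tau+1}=\tlambda_{\tau+1}$. Hence $\tlambda_{\tau+1}^2< H^2$.
\item The epoch did not close at any earlier round $t\in[s_\tau,e_\tau-1]$. Hence $\sum_{r=s_\tau}^{t}\norm{g_r}\le\lambda_{t+1}$ for every such $t$.
\item By the triangle inequality, $\norm{\tgtau}\le H$.
\end{itemize}
Together these give
\[
\norm{\tgtau}^2+\tlambda_{\tau+1}^2\le 2H^2 .
\]

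Next I would expand the square:
\[
H^2=\sum_{t\in I_\tau}\norm{\gt}^2+2\sum_{t=s_\tau+1}^{e_\tau}\norm{\gt}\sum_{r=s_\tau}^{t-1}\norm{g_r}.
\]
For each $t\in[s_\tau+1,e_\tau]$, the inner sum is bounded by $\lambda_t$. This is the non-triggering condition applied at round $t-1$, which lies in $[s_\tau,e_\tau-1]$. Therefore
\[
H^2\le\sum_{t\in I_\tau}\bigl(\norm{\gt}^2+2\lambda_t\norm{\gt}\bigr),
\]
and multiplying by $2$ gives exactly the claimed inequality. The $t=s_\tau$ term contributes no cross term, and the extra $\lambda_{s_\tau}\norm{g_{s_\tau}}$ on the right-hand side is nonnegative, so it only loosens the bound.

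The only delicate point is the indexing. The non-triggering condition at round $t-1$ is stated in terms of $\lambda_{t}$, which is exactly the coefficient multiplying $\norm{\gt}$ in the target. This alignment is what makes the cross-term bound work.

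There is also a boundary case: a final epoch that never closes. In that case $\tlambda_{\tau+1}$ is never set, and I would take it to be $0$ by convention. The first step then holds trivially, since $\norm{\tgtau}^2\le H^2$, and the rest of the argument is unchanged.
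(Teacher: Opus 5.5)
Your proof is correct and follows essentially the same route as the paper: bound both $\norm{\tgtau}$ and $\tlambda_{\tau+1}$ by $\sum_{t\in I_\tau}\norm{\gt}$, using the triangle inequality and the triggering condition. Then expand the square and control each cross term $\norm{\gt}\sum_{r<t}\norm{g_r}$ by $\lambda_t\norm{\gt}$ via the non-triggering condition at round $t-1$. Your explicit check of the index alignment, and your handling of a final epoch that never closes, are small additions that the paper leaves implicit.
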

\begin{proof}
  First observe that by definition of the reduction we have that
  $\tlambda_{\tau+1}=\lambda_{s_{\tau+1}}\le \sum_{t\in I_{\tau}}\norm{\gt}$, 
  and similarly for any $\tau$ we have by triangle inequality that
  $\norm{\tgtau}\le \sum_{t\in I_\tau}\norm{\gt}$, 
  so both $\norm{\tgtau}^2$ and $\tlambda_{\taupp}^2$ can be 
  bound in terms of $\brac{\sum_{t\in I_\tau}\norm{\gt}}^2$.
  Hence,
  \begin{align*}
      \norm{\tgtau}^2+\tilde\lambda_{\tau+1}^2
      &\le
      2\brac{\sum_{t\in I_\tau}\norm{\gt}}^2 
      = 
      2\sum_{t\in I_\tau}\norm{\gt}^2 + 2\sum_{t\in I_\tau}\sum_{s\in I_\tau, s\ne t}\norm{\gt}\norm{\gs}\\
      &=
      2\sum_{t\in I_\tau}\norm{\gt}^2 + 4\sum_{t\in I_\tau}\norm{\gt}\sum_{s\in I_\tau, s<t}\norm{\gs}\\
      &\le
      2\sum_{t\in I_\tau}\norm{\gt}^2 + 4\sum_{t\in I_\tau}\lambda_t\norm{\gt}\;.\qedhere
  \end{align*}
\end{proof}

The following lemma provides a bound on the movement coefficients 
and aggregate losses incurred via the blocking reduction of \Cref{alg: dyn_para_meta_2}.
\begin{lemma}\label{lemma:blocking-gradient-bound}
Let $\tau\in[n]$ be an arbitrary epoch defined by the execution of \Cref{alg: dyn_para_meta_2},
and let $I_{\tau}=[s_\tau,e_\tau]$ be the active rounds of epoch $\tau$. Then
    \begin{align*}
        \tlambda_{\tau+1}\le \sum_{t\in I_\tau}\norm{\gt}\le \lambda_{e_\tau} + \norm{g_{e_\tau}}
    \end{align*}
\end{lemma}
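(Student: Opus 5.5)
The plan is to read both inequalities directly off the stopping rule of \Cref{alg: dyn_para_meta_2}. Fix an epoch $\tau$ with $I_\tau=[s_\tau,e_\tau]$. Inside the epoch the counter $H_\tau$ is reset to $0$ at round $s_\tau$ and then accumulates $\norm{\gt}$. At round $t$ it equals $\sum_{s=s_\tau}^{t}\norm{\gs}$. The epoch closes at the first round $t\ge s_\tau$ with $H_\tau>\lambda_{t+1}$, and by definition this round is $e_\tau$. So there are two facts, one at the closing round and one at every earlier round of the epoch.

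For the left inequality, use the closing round. At $t=e_\tau$ the test succeeds, so $\sum_{t\in I_\tau}\norm{\gt}=H_\tau>\lambda_{e_\tau+1}$. The algorithm sets $\tlambda_{\tau+1}=\lambda_{e_\tau+1}$, which gives $\tlambda_{\tau+1}\le\sum_{t\in I_\tau}\norm{\gt}$.

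For the right inequality, use the round before the epoch ends. If $e_\tau>s_\tau$, the test failed at $t=e_\tau-1$, so
\[
\sum_{t=s_\tau}^{e_\tau-1}\norm{\gt}\le\lambda_{e_\tau}.
\]
Adding $\norm{g_{e_\tau}}$ to both sides gives $\sum_{t\in I_\tau}\norm{\gt}\le\lambda_{e_\tau}+\norm{g_{e_\tau}}$. If the epoch has a single round, $e_\tau=s_\tau$, the sum is just $\norm{g_{e_\tau}}$, and the bound holds trivially because $\lambda_{e_\tau}\ge 0$.

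The only point needing care is the last epoch, which may be cut off at $T$ without the test succeeding. There I would either use the convention $\lambda_{T+1}=0$ already adopted in the proof of \Cref{thm:adaptive-gd}, which forces the test to pass whenever the sum is positive, or treat the residual block as closed at $T$. In either case the right inequality still follows from the failure of the test at $e_\tau-1$, and the left one from $\lambda_{T+1}=0$. I do not expect any real obstacle beyond getting this indexing right.
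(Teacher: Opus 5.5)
Your proof is correct and follows the paper's argument. The left inequality is the triggering condition $H_\tau>\lambda_{e_\tau+1}=\tlambda_{\tau+1}$ at the closing round, and the right one is the failed test $\sum_{t=s_\tau}^{e_\tau-1}\norm{g_t}\le\lambda_{e_\tau}$ at round $e_\tau-1$ plus the last term $\norm{g_{e_\tau}}$. You are also more careful than the paper in handling single-round epochs and a final epoch cut off at $T$, and your indexing avoids the off-by-one in the paper's justification sentence.
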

\begin{proof}
    The fact that $\tilde\lambda_{\tau+1}=\lambda_{s_{\tau+1}}\le \sum_{t\in I_\tau}\norm{\gt}$ follows directly from definition of 
    the algorithm. Moreover, observe that 
    \begin{align*}
        \sum_{s\in I_\tau}\norm{\gs}\le\sum_{s=s_\tau}^{e_\tau-1}\norm{\gs}+\norm{g_{e_\tau}}\le \lambda_{e_{\tau}}+\norm{g_{e_\tau}},
    \end{align*}
    where we've used the fact that prior to round $e_{\tau}+1$ we had $\sum_{s=s_\tau}^{e_\tau}\norm{\gs}\le \lambda_{e_{\tau}}$ by definition of the algorithm.

\end{proof}

Finally, the following lemma observes that
under the blocking reduction, for any $[a,b]$ we can always assume that $\lambda_t\le b G$ without loss of generality. 
\begin{restatable}{lemma}{MCBound}\label{lemma:mc-bound}
    Under the blocking reduction of \Cref{alg: dyn_para_meta_2}, we may assume without loss of generality that $\lambda_t\le G(t-1)$ for any $t$.
\end{restatable}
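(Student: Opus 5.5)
The plan is to show that truncating the movement coefficients at $G(t-1)$ never changes what the blocking reduction does. Concretely, I will replace each $\lambda_t$ by $\lambda_t' \defeq \min\{\lambda_t, G(t-1)\}$ and argue that \Cref{alg: dyn_para_meta_2} produces exactly the same epochs, and hence the same iterates, under $\lambda'$ as under $\lambda$. The regret bounds then transfer, with one small issue about the comparator's cost discussed at the end.

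The key observation concerns the blocking test $H_\tau > \lambda_{t+1}$ at the end of round $t$. Here $H_\tau = \sum_{s=s_\tau}^{t}\norm{g_s}$, and since every $f_s$ is $G$-Lipschitz,
\[
H_\tau \le G(t-s_\tau+1) \le G t .
\]
I will then compare the two runs by induction on $t$, assuming the epoch boundaries have agreed so far.
\begin{itemize}
\item If $\lambda_{t+1} \le Gt$, then $\lambda'_{t+1} = \lambda_{t+1}$, so the test has the same outcome in both runs.
\item If $\lambda_{t+1} > Gt$, then $H_\tau \le Gt < \lambda_{t+1}$, so the test fails under $\lambda$. Under $\lambda'$ it reads $H_\tau > Gt$, which also fails because $H_\tau \le Gt$.
\end{itemize}
So the epoch structure is identical in both runs. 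The coefficients sent to the base learner are $\tlambda_{\tau+1} = \lambda_{s_{\tau+1}}$, and these are only read when a block closes, i.e.\ when $\lambda_{t+1} < H_\tau \le Gt$. In that case $\lambda_{t+1} = \lambda'_{t+1}$, so the base learner receives the same inputs. It follows that all iterates $w_t$ coincide between the two runs.

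Next I need the learner's movement cost to be unchanged. The learner's cost $\sum_t \lambda_t \norm{w_t - w_{t-1}}$ is nonzero only at rounds $t = s_{\tau+1}$ where it switches. At every such round $\lambda_t \le G(t-1)$ by the argument above, so $\lambda_t = \lambda'_t$ there. The learner's movement cost, and hence the one-sided regret $R_\ab$, is therefore identical under $\lambda$ and $\lambda'$. This means any bound proved for the truncated sequence applies verbatim to the original one.

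The main subtlety is that final bounds such as those in \Cref{lemma:sa-blocking} are stated in terms of $\lambda_{\max}^{\ab}$ and the $\lambda_t$ themselves. Since $\lambda'_t \le \lambda_t$, a bound expressed through $\lambda'$ is only smaller, so the substitution is harmless in that direction. I will close with a remark that ``without loss of generality'' is exactly this statement: all guarantees may be evaluated with $\lambda_t \le G(t-1)$, and in particular $\lambda_t \le bG$ for every $t \in \ab$.
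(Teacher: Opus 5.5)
Your proposal is correct and rests on the same observation as the paper's proof: at the end of round $t$ the epoch statistic satisfies $H_\tau \le Gt$, so a coefficient $\lambda_{t+1} > Gt$ can never close an epoch, and no movement cost is ever paid at such a round. Your explicit truncation $\lambda'_t = \min\{\lambda_t, G(t-1)\}$ is a more careful rendering of the paper's informal ``without loss of generality'' argument, not a different route. The same goes for your induction showing that the epochs, the coefficients $\widetilde{\lambda}_{\tau+1}$ passed to the learners, and hence all iterates coincide.
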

\begin{proof}
    By construction \Cref{alg: dyn_para_meta_2} only plays on rounds where $\lambda_{t}\le \sum_{t\in I_\tau}\norm{\gt}$ for some epoch $\tau$, and we only pay the movement costs associated with rounds where we transition to the next epoch. But on round $t$, 
    the largest possible value of the RHS is $\sum_{s=1}^\tmm\norm{\gs}\le (t-1)G$. Therefore, any round $t$ with 
    $\lambda_t> G(t-1)$ can never have triggered the a new epoch, so on these rounds we have $\wt=\wtmm$ and a movement cost of $\lambda_t\norm{\wt-\wtmm} = 0$. Therefore under the blocking reduction, we may assume without loss of generality that 
    no such events occur.
\end{proof}

%% file: sections/appendix-main.tex
\section{Proofs for Section~\ref{sec:main} (\SecMain)}%
\label{app:main}

In this section we prove our main result. The 
theorem is restated below for convenience.
\Main*
\begin{proof}
We prove the main theorem in three main steps. We first show in \Cref{prop:main-gc} that on any 
$\ab$ which is a geometric covering interval, we have dynamic regret of order 
$\tilde\cO\brac{(D+P_\ab)\sqrt{\sumab\norm{\gt}^2+\lambda_t\norm{\gt}}}$. We then extend 
the result to an arbitrary $[a,b]$ using 
the standard argument due to daniely in 
\Cref{prop:main-any-ab}. Finally, 
we apply a mild variation of \citet[Theorem 7]{cutkosky2020parameter} to obtain 
$\tilde\cO\Bigl(\sqrt{(D+P_\ab)\sumab(\norm{\gt}^2+\lambda_t\norm{\gt})}\Bigr)$ on any interval $\ab$. Composing these steps proves the theorem.
\end{proof}

The following proposition shows that we obtain the correct general "shape" of the regret bound whenever we are considering 
an interval $\ab$ which is one of the geometric covering intervals.
\begin{restatable}{proposition}{MainGC}\label{prop:main-gc}
  Let $\cI$ denote the set of geometric covering intervals. Under the assumptions of \Cref{thm:main}, for any $\ab\in\cI$, \pref{alg:ol-mvmt-costs} guarantees
  \begin{align*}
     R_\ab(\cmp_{a:b},\lambda_{a:b}) 
     &\le
      \hat \cO\brac{
      (D\Omega_\ab+P_\ab)(G+\lambda) + \brac{D\Omega_{\ab}^\half+P_\ab}\sqrt{\sumab\norm{\gt}^2+\lambda_t\norm{\gt}}}
  \end{align*}
  where $\Omega_\ab = \cO\brac{\Log{b+1}}$ and $\hat\cO(\cdot)$ hides constants and doubly-logarithmic factors.
\end{restatable}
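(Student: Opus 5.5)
Fix a geometric covering interval $\ab\in\cI$ and let $\cA_\ab$ be its specialist. The plan is to chain three reductions: blocking, then specialists, then the two component bounds.

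\textbf{Step 1: blocking.} Apply \Cref{lemma:sa-blocking}. This reduces $R_\ab(\cmp_{a:b},\lambda_{a:b})$ to
\[
3(D+P_\ab)(G+\lambda) + \sum_{\tau\in\cE_\ab}\Bigl(\inner{\tgtau,\twtau-\tcmp_\tau}+\tlambda_\tau\norm{\twtau-\tw_\taumm}\Bigr),
\]
which is a movement-cost regret over the epochs $\cE_\ab$, viewed as an interval $[\tau_a,\tau_b]$ in epoch time. Here $\tilde w_\tau=\sum_I p_{\tau,I}\tw_\tau^{(I)}$ is exactly the mixture of \Cref{alg:ol-mvmt-costs}.

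\textbf{Step 2: specialists.} Invoke \Cref{lemma:sa-mc-redux} at the epoch level with the activation $\talpha_{\tau,\ab}$. This splits the epoch regret into two parts:
\begin{itemize}
\item the sleeping-experts meta regret $R^{\talpha}(e_\ab, D\tlambda)$;
\item the base regret of $\cA_\ab$ on $\cE_\ab$.
\end{itemize}
The base learner is updated only on epochs with $s_\tau\in\ab$ and $t\in\ab$. Hence, on complete epochs inside $\ab$, it sees exactly the feedback $(\tgtau,\tlambda_{\tau+1})$.

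One boundary issue arises: the epoch straddling $a$ (or $b$) has $\talpha$ mismatched with $\cE_\ab$. I would absorb it as an extra $\cO(D(G+\lambda))$ term, using $\norm{\tgtau}\le G+\lambda$ from \Cref{lemma:blocking-gradient-bound}.

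\textbf{Step 3a: base regret.} \Cref{thm:adaptive-gd}, run on the blocked losses, gives
\[
\bigl[(D^2+DP_\ab)/\alpha+\alpha\bigr]\sqrt{2\textstyle\sum_{\tau}(\norm{\tgtau}^2+\tlambda_{\tau+1}^2)}.
\]
Choose $\alpha=D$ and use the comparator path length $\sum_\tau\norm{\tcmp_\tau-\tcmp_{\tau-1}}\le P_\ab$ (minimum-norm picks within epochs, plus triangle inequality). Then \Cref{lemma:blocking-gradient-sum} yields $\cO\bigl((D+P_\ab)\sqrt{\sumab(\norm{\gt}^2+\lambda_t\norm{\gt})}\bigr)$.

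\textbf{Step 3b: meta regret.} Apply \Cref{thm:sleeping-mvmt-costs} with $u=e_\ab$. Take $G_\infty\le D(G+\lambda)$, since $|\telltauI|\le D\norm{\tgtau}+\tlambda_\tau D$. Use movement scale $D\tlambda$, so $L=D(2G+9\lambda)$ suffices.

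The resulting bound is $\tilde\cO\bigl(\Gamma(\epsilon)+\sqrt{V\log(\Lambda+1)}\vee L\log(\Lambda+1)\bigr)$. The variance satisfies
\[
V\lesssim\sum_{\tau\in\ab}\bigl(D^2\norm{\tgtau}^2+D^2\tlambda_\tau^2\bigr)+L^2,
\]
which is again controlled by $D^2\sumab(\norm{\gt}^2+\lambda_t\norm{\gt})$ via \Cref{lemma:blocking-gradient-sum}. The lag $\tlambda_\tau$ versus $\tlambda_{\tau+1}$ shifts the sum by one epoch, costing an extra $L^2$.

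For the log factor: $\epsilon_\ab=1/(a^2(1+\lfloor\log_2 a\rfloor))$ gives $\log(1/\epsilon_\ab)=\cO(\log b)$. Also $V\le \mathrm{poly}(b)L^2$ using \Cref{lemma:mc-bound}, so $\log\Lambda=\cO(\log b)=\Omega_\ab$ up to $\log\log$ terms. Finally, $\Gamma(\epsilon)=L(1+\sum_I\epsilon_I)=\cO(L)$, because at each start $a$ there are at most $1+\log_2 a$ intervals and $\sum_a 1/a^2<\infty$.

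Combining the three steps gives exactly the stated $D\Omega_\ab(G+\lambda)$ and $D\Omega_\ab^{1/2}\sqrt{\cdot}$ terms.

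\textbf{Main obstacle.} I expect the boundary bookkeeping in Step 2 to be the hardest part: the activation of $\cA_\ab$ in epoch time does not exactly equal $\cE_\ab$, and the meta-loss uses $\tlambda_\tau$ with a one-epoch offset. I would handle it by charging each boundary epoch $\cO(D(G+\lambda))$, which \Cref{lemma:blocking-gradient-bound} permits.
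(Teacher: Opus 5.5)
Your proposal is correct and follows the paper's proof essentially step for step. The chain is the same: blocking via \Cref{lemma:sa-blocking}, the epoch-level specialists decomposition via \Cref{lemma:sa-mc-redux}, then \Cref{thm:adaptive-gd} with $\alpha=D$ plus \Cref{lemma:blocking-gradient-sum} for the base regret, and finally \Cref{thm:sleeping-mvmt-costs} with $L=D(2G+9\lambda)$, $\Gamma(\epsilon)\le 3L$ and $\Omega_\ab=\cO(\log(b+1))$ for the meta-regret, including the extra $L^2$ for the one-epoch shift in $\tlambda_\tau$.

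Two small corrections. First, since $\talpha_{\tau,J}=\Indicator\Set{s_\tau\in J}$, the epoch straddling $a$ is already inactive, and the paper drops the epoch straddling $b$ simply by evaluating the anytime sleeping-experts bound at horizon $b'$, so no extra boundary charge is needed. Second, the meta-loss bound should read $G_\infty\le D(G+2\lambda)$, not $D(G+\lambda)$, because $\norm{\tgtau}\le G+\lambda$; this is exactly what gives $L=2G_\infty+5D\lambda=D(2G+9\lambda)$.
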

\begin{proof}

\Cref{alg:ol-mvmt-costs} is constructed by applying the blocking reduction of \Cref{alg: dyn_para_meta_2} followed by the 
expert/meta-regret decomposition in \Cref{lemma:mc-redux}.
We begin by applying the blocking reduction decomposition of 
\Cref{lemma:sa-blocking}. Let $n$ be the total number of epochs defined during the execution of \Cref{alg: dyn_para_meta_2},
and for each $\tau\in[n]$ let $I_\tau=[s_\tau,e_\tau]$, 
$\tgtau=\sum_{t\in I_\tau}\gt$, and let $\tcmp_\tau=\argmin_{t\in I_\tau}\norm{\cmp_t}$ be an arbitrary minimum-norm 
comparator in $I_\tau$. 
Then applying \Cref{lemma:sa-blocking}, we have that
\begin{align}
        R_\ab(\cmp_{a:b},\lambda_{a:b})
        &=
        \sumab \ft(\wt)-\ft(\cmp_t) +\lambda_t\norm{\wt-\wtmm}\nonumber\\
        &\le 3(D+P_\ab)(G+2\lambda_{\max}^\ab)+\underbrace{\sum_{\tau\in \cE_{\ab}}\inner{\tgtau,\twtau-\tcmp_\tau}+\tlambda_\tau\norm{\twtau-\twtaumm}}_{\tilde R_{[a',b']}(\tcmp_{a':b'},\tlambda_{a':b'})},\label{eq:main:step-1}
\end{align}
where $\cE_{\ab}=\Set{\tau: I_{\tau}\subseteq\ab}\eqqcolon \Set{a',\ldots,b'}$ is the set of complete epochs
contained within $\ab$, and the latter term defines the regret
in a problem taking place over the complete 
epochs.
Let $\twtau = \sum_{I\in \cI}\tptauI\twtau^{(I)}$ where $\twtau^{(I)}=w^{(I)}_{s_\tau}$
and $\tptauI=p_{s_\tau,I}$ are the outputs of $\cA_I$ and $\cA_\meta$ on epoch
$\tau$ (round $s_\tau$),
and observe that
since the epochs are consecutive and
$\widetilde\alpha_{\tau,J}=\mathbf{1}\{s_\tau\in J\}$,
we have
\[
\widetilde\alpha_{\tau,J}
=\mathbf{1}\{\tau\in[a',b']\},
\qquad \tau\in[1,b'].
\]
Thus, \Cref{lemma:sa-mc-redux} applies to the epoch-level problem
with horizon $T=b'$, so letting $\cA_{J}$ be the expert
assigned to $J=\ab\supseteq[s_{a'},e_{b'}]$ we have
\begin{align}
    \tilde R_{[a',b']}(\tcmp_{a':b'},\tlambda_{a':b'})
    &\le 
    \underbrace{\sum_{\tau\in\cE_{\ab}}\inner{\tgtau,\twtau^{(J)}-\tcmp_\tau}+\tlambda_\tau\norm{\twtau^{(J)}-\tw_{\taumm}^{(J)}}}_{=:\tilde R_{\abp}^{(J)}(\tcmp_{a':b'},\tlambda_{a':b'})}\nonumber\\
    &\qquad
    +
    \underbrace{\sum_{\tau\in[1,b']} \tilde\alpha_{\tau,J}\brac{\inner{\tilde\ell_\tau,\tptau-e_J}+\tlambda_\tau^\meta\norm{\tptau-\tptaumm}_1}}_{\tilde R_{[1,b']}^\meta(e_J,\tlambda_{1:b'}^\meta)}\label{eq:main:step-2}
\end{align}
where $\tilde\alpha_\taui = \alpha_{s_\tau,i}$, $\tlambda_\tau^\meta = D\tlambda_\tau$
and
$\tilde \ell_{\tau,I} = \inner{\tgtau, \twtau^{(I)}}+\tlambda_{\tau}\norm{\twtau^{(I)}-\tw_{\taumm}^{(I)}}$
for each $I\in \cI$, and for notational simplicity we define in the analysis $\tw_{a'-1}^{(J)}=\tw_{a}^{(J)}$. The first term is bounded by \Cref{thm:adaptive-gd} with $\alpha=D$:
\begin{align}
    \tilde R_\abp^{(J)}(\tcmp_{a':b'},\tlambda_{a':b'}) 
    &\le \brac{2D+\sum_{\tau=a'+1}^{b'}\norm{\tcmp_\tau-\tcmp_\taumm}}\sqrt{2\sumab\norm{\tgtau}^2+\tlambda_\taupp^2}\nonumber\\
    &\le 
    2(2D+P_\ab)\sqrt{\sumab\norm{\gt}^2+2\lambda_t\norm{\gt}},\label{eq:main:adaptive-gd},
\end{align}
where the last line applies \Cref{lemma:blocking-gradient-sum} and bounds $\sum_{\tau=a'+1}^{b'}\norm{\tcmp_\tau-\tcmp_{\tau-1}}\le \sum_{t=a+1}^b\norm{\cmp_t-\cmp_\tmm}$ via triangle inequality and over-approximating. To bound the meta-regret, we first observe that 
\begin{align*}
  \norm{\telltau}_\infty
  &\le\max_I\abs{\inner{\tgtau,\wtau^{(I)}}+\tlambda_\tau\norm{\twtau^{(I)}-\tw_{\taumm}^{(I)}}}\\
  &\le D(\norm{\tgtau}+\tlambda_\tau)\le D(G+2\lambda)\eqqcolon G_\infty^\meta
\end{align*}
via \Cref{lemma:blocking-gradient-bound}.
Moreover, by definition we have that \[
\tlambda_\taupp^\meta=D\lambda_{s_{\taupp}}\le D\lambda\eqqcolon \lambda^\meta.
\]
Hence, 
setting $L=2G_\infty^\meta+5\lambda^\meta = D(2G+9\lambda)$, 
we satisfy the conditions of \Cref{thm:sleeping-mvmt-costs} for each index $I$,
so applying the guarantee yields
\begin{align*}
     \tilde R_{[1,b']}^{\meta}(e_J, \tlambda_{1:b'})
     &\le 
     \Ohat\brac{
     \Gamma(\epsilon)+
     \sqrt{V_{b'+1,J}^\lambda\Log{\tfrac{ \sqrt{V_{b'+1,i}^\lambda}}{\epsilon_J L}+1}}
    + L \Log{\tfrac{\sqrt{V_{b'+1,J}^\lambda}}{\epsilon_J L}+1}
     },
  \end{align*}
  where
  \begin{align*}
  \Gamma(\epsilon)&=L\brac{1+\sum_{I\in\cI}\epsilon_I}\\
  V_{b'+1,J}^\lambda&=
      4L^2 + \sum_{\tau=1}^{b'}(\tell_{\tau,J}^\lambda + \tlambda_{\taupp}^{(J)})^2,
  \end{align*}
  and we recall the definitions
  \begin{align*}
  \tell_{\tau,J}^\lambda&=\talpha_{\tau,J}\brac{\inner{\tell_{\tau},e_J-\tp_\tau}-\tlambda_\tau^\meta\norm{\tptau-\tptaumm}_1+(1-\talpha_{\taupp,J})2\tlambda_{\taupp}^\meta},\\
  \tlambda_\tau^{(J)}&=2\talpha_{\tau,J}\talpha_{\taupp,J}\tlambda_\taupp^\meta.
  \end{align*}
  We proceed by bounding the terms $V_{T+1}^\lambda$.
  \begin{align}
      V_{b'+1,J}^\lambda
      &=
      4L^2 + \sum_{\tau=1}^{b'}(\tell_{\tau,J}^\lambda + \tlambda_{\taupp}^{(J)})^2\nonumber\\
      &=
      4L^2 + \sum_{\tau=1}^{b'}\talpha_{\tau,J}\brac{2\norm{\tell_{\tau}}_\infty+2\tlambda_\tau^\meta + 2\tlambda_{\taupp}^\meta}^2\nonumber\\
      &=
      \cO\brac{L^2 +(\lambda^{\meta})^{2}+ \sum_{\tau\in\cE_{\ab}} \norm{\tell_{\tau}}_\infty^2+(\tlambda_\taupp^\meta)^2}\nonumber\\
      &=
      \cO\brac{L^2+
      \sum_{\tau\in\cE_{\ab}}D^2(\norm{\tgtau}^2 + \tlambda_\taupp^2)}\nonumber\\
      &=
      \cO\brac{L^2+
      D^2\sum_{t\in\ab}\brac{\norm{\gt}^2 + \lambda_t\norm{\gt}}}\label{eq:V-bound}
  \end{align}
  where we've used
  $L=\cO(G_{\infty}+\lambda^{\meta})$,
  applied
  \Cref{lemma:blocking-gradient-sum} in the second-to-last line,
  and over-approximated the resulting partial sum.
    Moreover, choosing $\epsilon_{\ab}=1/(a^2(1+\log_2 a))$ 
    for each $[a,b]\in\cI$, we have
    \begin{align*}
    \Gamma(\epsilon,\cmp)&=L\sum_{I\in\cI}\epsilon_I + \sum_{I\in\cI}\cmp_I\lambda_{b_I+1}
    \le L\brac{1+\sum_{\ab\in\cI}\epsilon_\ab}\\
    &= L\brac{1+\sum_{\ab\in\cI}\frac{1}{a^2(1+\log_2 a)}}\le 3L
  \end{align*}
  where we've used 
  that $\sum_{I=[a,b]\in\cI}\frac{1}{a^2(1+\lfloor\log_2a\rfloor)}\le \sum_{a=1}^T\frac{1}{a^2}\le \pi^2/6\le 2$ by construction of the geometric covering intervals. 
  Plugging these bounds back into the meta-regret bound yields
  \begin{align}
    \tilde R_{b'}\meta(e_J,\tlambda_{1:b'})
      &=
      \hat\cO\Bigg(L
      +D\sqrt{\sumab(\norm{\gt}^2+\lambda_t\norm{\gt})\Log{\tfrac{\sqrt{V_{T+1,J}^\lambda}}{L\epsilon_J}+1}}\nonumber\\
      &\qquad
      +L\Log{\tfrac{\sqrt{V_{T+1,J}^\lambda}}{L\epsilon_J}+1}\Bigg)\nonumber\\
      &=
      \hat\cO\Bigg(L
      +D\sqrt{\sumab(\norm{\gt}^2+\lambda_t\norm{\gt})\Omega_{J}}
      +L\Omega_{J}\Bigg)\label{eq:main:meta-bound},
  \end{align}
  where the last line defines $\Omega_{J}=\Omega_\ab=\Log{\tfrac{\sqrt{V_{T+1,J}}}{L\epsilon_J}+1}=\cO(\log(b+1))$
  for $J=[a,b]$.
  Thus, plugging \Cref{eq:main:adaptive-gd,eq:main:meta-bound} back into the full regret bound in \Cref{eq:main:step-2} 
  yields
  \begin{align*}
      \tilde R_{[a',b']}(\tcmp_{a':b'},\tlambda_{a':b'})
      &=
      \hat\cO\Bigg(
      L\Omega_\ab + \brac{D\Omega_{\ab}^\half+P_\ab}\sqrt{\sumab\norm{\gt}^2+\lambda_t\norm{\gt}}
      \Bigg)
  \end{align*}
  and plugging this into \Cref{eq:main:step-2},
  \begin{align*}
      R_\ab(\cmp_\ab,\lambda_\ab)
      &\le 
      3(D+P_{\ab})(G+2\lambda_{\max}^\ab)  \\
      &\qquad
      +\hat\cO\Bigg(
      L\Omega_\ab + \brac{D\Omega_{\ab}^\half+P_\ab}\sqrt{\sumab\norm{\gt}^2+\lambda_t\norm{\gt}}
      \Bigg)\\
      &=
      \hat \cO\brac{
      (D\Omega_\ab+P_\ab)(G+\lambda) + \brac{D\Omega_{\ab}^\half+P_\ab}\sqrt{\sumab\norm{\gt}^2+\lambda_t\norm{\gt}}}
  \end{align*}
  where we've recalled that $L=\cO(G+\lambda)$.\qedhere
\end{proof}

Next we extend the previous proposition to hold on an arbitrary $\ab\subseteq[1,T]$.
\begin{restatable}{proposition}{MainAnyAB}\label{prop:main-any-ab}
Under the same conditions as \Cref{prop:main-gc}, for any $\ab\subseteq[1,T]$ 
$R_\ab(\cmp_\ab,\lambda_\ab)$ is bounded above by
\begin{align*}
        &\hat \cO\Bigg(
        \brac{D\Omega_\ab\log(b-a) + P_\ab}(G+\lambda)\\
        &\qquad
        +\brac{D\Omega_\ab^\half+P_\ab}\Bigg\{\sqrt{G(G+\lambda)\abs{b-a}}\minOp  \sqrt{\log(b-a)\sumab\norm{\gt}^2+\lambda_t\norm{\gt}}\Bigg\}
        \Bigg)
\end{align*}
were $\Omega_\ab=\cO(\log(b+1))$.
\end{restatable}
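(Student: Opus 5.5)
We follow the standard covering argument of \citet{daniely2015strongly}. For an arbitrary $\ab\subseteq[1,T]$, the geometric covering property gives a partition of $\ab$ into consecutive intervals $J_1,\ldots,J_m$ with $m=\cO(\log(b-a))$. Each $J_k$ is either a member of $\cI$ or a prefix or suffix truncation of one; by the covering construction we may take each $J_k\in\cI$ after clipping at the endpoints. The interval lengths $\abs{J_k}$ increase geometrically up to a midpoint and then decrease geometrically. We write $J_k=[a_k,b_k]$, with $b_k\le b$.

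The first step is to decompose the regret. Set $w_{a_k-1}=w_{a_k-1}$ (the true previous iterate), so that
\[
R_\ab(\cmp_{a:b},\lambda_{a:b})\le \sum_{k=1}^m R_{J_k}(\cmp_{J_k},\lambda_{J_k})+\sum_{k=2}^m\lambda_{a_k}\norm{w_{a_k}-w_{a_k-1}}.
\]
The boundary movement terms sum to at most $mD\lambda$. Next, we apply \Cref{prop:main-gc} to each $J_k$. Since $b_k\le b$, we have $\Omega_{J_k}=\cO(\log(b+1))=\Omega_\ab$. The per-piece path lengths satisfy $\sum_k P_{J_k}\le P_\ab$. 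The lower-order terms then add up to $(D\Omega_\ab m+P_\ab)(G+\lambda)$, which is the first term of the statement with $m=\cO(\log(b-a))$.

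The remaining quantity is $\sum_k (D\Omega_\ab^{1/2}+P_{J_k})\sqrt{V_{J_k}}$, where $V_{J}=\sum_{t\in J}(\norm{\gt}^2+\lambda_t\norm{\gt})$. We bound it in two ways and take the minimum.
\begin{itemize}
\item \emph{Data-adaptive bound.} Bound $P_{J_k}\le P_\ab$ and apply Cauchy--Schwarz: $\sum_k\sqrt{V_{J_k}}\le\sqrt{m\sum_kV_{J_k}}=\sqrt{m V_\ab}$. This gives the $\sqrt{\log(b-a)\,V_\ab}$ branch.
\item \emph{Worst-case bound.} Use $V_{J_k}\le G(G+\lambda)\abs{J_k}$. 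The lengths $\abs{J_k}$ are geometric on each side of the midpoint, so $\sum_k\sqrt{\abs{J_k}}=\cO(\sqrt{b-a})$ with no logarithmic factor. For the $P$-part, $\sum_kP_{J_k}\sqrt{\abs{J_k}}\le P_\ab\sqrt{b-a}$. This gives the $\sqrt{G(G+\lambda)\abs{b-a}}$ branch.
\end{itemize}
Each of the two pieces, $D\Omega_\ab^{1/2}\sum_k\sqrt{V_{J_k}}$ and $\sum_k P_{J_k}\sqrt{V_{J_k}}$, is bounded by the minimum of its two bounds. Summing then gives $(D\Omega^{1/2}_\ab+P_\ab)\{\cdot\minOp\cdot\}$ up to constants.

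The main obstacle is that \Cref{prop:main-gc} controls regret only on the true interval, while its proof uses the blocking reduction. Two points need checking:
\begin{itemize}
\item The boundary partial epochs at the junctions between the $J_k$ must already be charged inside each application of \Cref{lemma:sa-blocking}. If they are, the blocking overhead is paid $m$ times, which is absorbed in the $D\Omega_\ab m(G+\lambda)$ term.
\item The junction movement cost across the $J_k$ must be accounted for correctly, which is why we kept it explicitly in the decomposition.
\end{itemize}
We also need the covering property to supply intervals that are genuinely in $\cI$. Here we would use the standard fact that the pieces $J_k$ can be chosen as exact members of $\cI$ except the last one, which is a prefix of a member. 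For that last piece we apply \Cref{prop:main-gc}'s argument to a prefix. This works because the proof only uses that $\cA_J$ is active from $a_k$ onward and that regret bounds hold at every intermediate time.
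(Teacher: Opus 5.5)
Your proposal is correct and follows essentially the same route as the paper: partition $[a,b]$ via the Daniely-style covering (\Cref{lemma:gc-partition}) into $\cO(\log(b-a))$ pieces, peel off the junction movement costs (at most $D\lambda$ per piece), apply \Cref{prop:main-gc} to each piece, and bound $\sum_k\sqrt{V_{J_k}}$ either by Cauchy--Schwarz or by the geometric decay of the piece lengths. The only superfluous part is your hedge about a final prefix piece, since \Cref{lemma:gc-partition} already partitions $[a,b]$ entirely into exact members of $\cI$, so no prefix version of \Cref{prop:main-gc} is needed.
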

\begin{proof}
    Using \Cref{lemma:gc-partition}, we have
    that there exists a partition of $\ab$ into disjoint intervals $I_{-p},\ldots,I_0, I_1,\ldots,I_q$ such that $\abs{I_{-i}}/ \abs{I_{-i+1}}\le \half$
    for each $i\ge 1$ and $\abs{I_i}/\abs{I_{i-1}}\le\half$ for $i\ge 2$.
    Note that this also implies that there are at most $p+q+1=\cO(\log(b-a))$ total intervals in the partition. 
    Denote each interval by $I_k=[s_k,e_k]$
    and
    observe that 
    \begin{align*}
        R_\ab(\cmp_{a:b}, \lambda_{a:b})
        &=
        \sumab\ft(\wt)-\ft(\cmp_t) + \lambda_t\norm{\wt-\wtmm}\\
        &=
        \sum_{k=-p}^q\sum_{t\in I_k}\ft(\wt)-\ft(\cmp_t) + \lambda_t\norm{\wt-\wtmm}\\
        &=
        \sum_{k=-p}^q\underbrace{\brac{\sum_{t=s_k}^{e_k}\ft(\wt)-\ft(\cmp_t)+\sum_{t=s_k+1}^{e_k}\lambda_k\norm{\wt-\wtmm}}}_{\eqqcolon R_{I_k}} + \sum_{k=-p}^q\lambda_{s_k}\norm{w_{s_k}-w_{s_{k-1}}}\\
        &\le
        \sum_{k=-p}^qR_{I_k} 
      +  D\lambda(p+q+1).
    \end{align*}
        Hence, applying \Cref{prop:main-gc}
        to bound each $R_{I_k}$
        and recalling that $p+q+1=\cO(\log(b-a))$, we can bound
    \begin{align*}
    R_\ab(\cmp_\atob,\lambda_\atob)
        &\le 
        \hat O\Bigg(D\lambda\log(b-a)+
        \sum_{k=-p}^q (D\Omega_{I_k}+P_{I_k})(G+\lambda)\\
        &\qquad
        +\sum_{k=-p}^q(D\Omega_\ab^\half+P_\ab)\sqrt{\sum_{t\in I_k}\norm{\gt}^2+\lambda_t\norm{\gt}}
        \Bigg)\\
       &\le
       \hat O\Bigg(
       \brac{D\Omega_\ab \log(b-a)+P_\ab}(G+\lambda)\\
       &\qquad
       +
       (D\Omega_\ab^\half+P_\ab)\sum_{k=-p}^q\sqrt{\sum_{t\in I_k}\norm{\gt}^2+\lambda_t\norm{\gt}}
       \Bigg)
    \end{align*}
    The last term we can bound in two 
    different ways. On one hand, via Cauchy-Schwarz inequality
    we have
    \begin{align*}
        \sum_{k=-p}^q \sqrt{\sum_{t\in I_k}\norm{\gt}^2+\lambda_t\norm{\gt}}
        &\le 
        \sqrt{\Log{b-a}\sum_{k=-p}^q\sum_{t\in I_k}\norm{\gt}^2+\lambda_t\norm{\gt}}\\
        &=
        \sqrt{\Log{b-a}\sumab\norm{\gt}^2+\lambda_t\norm{\gt}}.
    \end{align*}
    On the other hand, using the 
    properties that $\abs{I_{-i+1}}/\abs{I_{-i}}\le \half$ and $\abs{I_{i}}/\abs{I_{i-1}}\le \half$ we can bound
    \begin{align*}
        \sum_{k=-p}^q \sqrt{\sum_{t\in I_k}\norm{\gt}^2+\lambda_t\norm{\gt}}
        &\le 
        \sqrt{G(G\maxOp\lambda)}
        \brac{\sum_{k=-p}^0\abs{I_k}+\sum_{k=1}^q\abs{I_k}}\\
        &\le 
        2\sqrt{G(G\maxOp\lambda)}\sum_{k=0}^\infty (2^{-i}\abs{b-a})^\half
        \le 8\sqrt{G(G\maxOp\lambda)\abs{b-a}}.
    \end{align*}
    Combining the two bounds, we have
    \begin{align*}
        &R_\ab(\cmp_{a:b},\lambda_{a:b})
        \le 
        \hat \cO\Bigg(
        \brac{D\Omega_\ab\log(b-a) + P_\ab}(G+\lambda)\\
        &\qquad
        +(D\Omega_\ab^\half+P_\ab)\min\Set{\sqrt{G(G+\lambda)\abs{b-a}}, \sqrt{\log(b-a)\sumab\norm{\gt}^2+\lambda_t\norm{\gt}}}
        \Bigg) \,.
        \qedhere
    \end{align*}
\end{proof}

Finally, the following proposition shows that since we have $\cO((D+P_\ab)\sqrt{\sumtT \norm{\gt}^2+\lambda_t\norm{\gt}}$
dynamic regret on every simultaneously, we also 
have the optimal $\cO(\sqrt{(D^2+DP_\ab)\sumab\norm{\gt}^2+\lambda_t\norm{\gt}})$ on any sub-interval as well.
\begin{restatable}{proposition}{OptimalDynamicAnyAB}\label{prop:optimal-dynamic-any-ab}
   Under the same conditions as \Cref{prop:main-any-ab}, 
   \Cref{alg:ol-mvmt-costs} guarantees
   \begin{align*}
       R_\ab(\cmp_{a:b},\lambda_{a:b})
       &\le
        \hat \cO\Bigg(
        (D+P_\ab)\log(b)^2(G+\lambda)\\
        &\qquad
        + \min\Bigg\{\sqrt{(D^2+DP_\ab)G(G+\lambda)\abs{b-a}\Log{b}},\\
        &\qquad\qquad
        \sqrt{(D^2+DP_\ab)\sum_{t\in \ab}\norm{\gt}^2+\lambda_t\norm{\gt}}\Log{b}\Bigg\}
        \Bigg),
   \end{align*}
\end{restatable}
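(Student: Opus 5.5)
The plan is to convert the $(D+P_\ab)\sqrt{V}$-type guarantee of \Cref{prop:main-any-ab} into the $\sqrt{(D^2+DP_\ab)V}$ form. We use the interval-splitting argument of \citet[Theorem 7]{cutkosky2020parameter}, which works here precisely because \Cref{prop:main-any-ab} holds on \emph{every} sub-interval simultaneously. Write $V_I=\sum_{t\in I}(\norm{\gt}^2+\lambda_t\norm{\gt})$. Write $\Omega=\cO(\log b)$, which dominates $\Omega_{I}$ and $\log|I|$ for every $I\subseteq\ab$. Then \Cref{prop:main-any-ab} gives, for every $I\subseteq\ab$,
\[
R_I\le \hat\cO\Big((D\Omega^2+P_I)(G+\lambda)+(D\Omega^{1/2}+P_I)\,\Phi_I\Big),
\]
where $\Phi_I$ is either $\sqrt{G(G+\lambda)|I|}$ or $\sqrt{\Omega V_I}$, whichever branch we are proving.

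First, partition $\ab$ greedily into consecutive sub-intervals $J_1,\dots,J_K$ such that each $J_k$, except possibly the last, is maximal subject to $P_{J_k}\le D$. Maximality gives that each $J_k$ together with the next round has path length exceeding $D$. Hence $K\le 1+P_\ab/D$, up to constants. On each piece $P_{J_k}\le D$, so the bound above becomes $\hat\cO(D\Omega^2(G+\lambda)+D\Omega^{1/2}\Phi_{J_k})$.

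Second, glue the pieces back together. As in the proof of \Cref{prop:main-any-ab}, we have
\[
R_\ab\le\sum_k R_{J_k}+\sum_k\lambda_{s_k}\norm{w_{s_k}-w_{s_k-1}},
\]
and the boundary movement terms contribute at most $KD\lambda$. Comparator continuity across pieces is not an issue, because the regret $R_{J_k}$ uses the original $\cmp_t$ on $J_k$.

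Third, sum over the pieces and apply Cauchy--Schwarz:
\[
\sum_k D\Omega^{1/2}\Phi_{J_k}\le D\Omega^{1/2}\sqrt{K\textstyle\sum_k\Phi_{J_k}^2}.
\]
Since $\sum_k V_{J_k}=V_\ab$ and $\sum_k|J_k|=|b-a|+1$, this is bounded by $\sqrt{(D^2+DP_\ab)\Omega}\cdot\sqrt{G(G+\lambda)|b-a|}$ in the worst-case branch, and by $\sqrt{(D^2+DP_\ab)V_\ab}\,\Omega$ in the adaptive branch. The additive terms sum to $K D\Omega^2(G+\lambda)\lesssim (D+P_\ab)\Omega^2(G+\lambda)$, which is the stated lower-order term with $\log^2(b)$. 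Taking the minimum over the two branches gives the claim.

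The main obstacle is the greedy partition bookkeeping. Maximality gives $P_{J_k}+\norm{\cmp_{s_{k+1}}-\cmp_{e_k}}>D$, so the inter-piece jumps must be counted in $P_\ab$; they are, because $P_\ab$ sums over all consecutive pairs. A second wrinkle is a single jump of size exceeding $D$, which is impossible since $\ww$ has diameter $D$; so a one-round piece always satisfies the constraint and the greedy procedure is well defined. I would also verify that $\Omega_{J_k}$ and $\log|J_k|$ are uniformly $\cO(\log b)$ for every $J_k\subseteq\ab$; this holds because $\Omega_{[a,b]}=\cO(\log(b+1))$ depends only on the right endpoint, which never exceeds $b$.
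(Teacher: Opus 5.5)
Your proposal is correct and follows essentially the same route as the paper: you greedily partition $[a,b]$ by path length (as in \citet[Theorem 7]{cutkosky2020parameter}) into $K\le 1+P_{[a,b]}/D$ pieces, apply \Cref{prop:main-any-ab} on each piece, pay at most $KD\lambda$ for the boundary movements, and combine the pieces with Cauchy--Schwarz. The only difference is cosmetic: you take maximal pieces with path length at most $D$, whereas the paper's \Cref{lemma:path-length-partition} uses pieces with $D\le P_{I_k}\le 2D$; both give the same bound on $K$.
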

\begin{proof}
    For brevity denote $\hat L=G+\lambda$. By \Cref{lemma:path-length-partition} there is a partition $I_1,\ldots,I_K$ of $\ab$ 
    such that $D\le P_{I_{k}}\le 2D$ for each $k$ and $K\le 1+P_\ab/D$. Denoting each interval by $I_k=[s_k,e_k]$, we have
    \begin{align*}
       R_\ab(\cmp_{a:b},\lambda_{a:b}) 
       &\le
       \sum_{k\in[K]}\sum_{t\in I_k}\inner{\gt,\wt-\cmp_t}+\sum_{t\in I_k}\lambda_t\norm{\wt-\wtmm}\\
       &=
       \sum_{k\in[K]}\underbrace{\brac{\sum_{t\in I_k}\inner{\gt,\wt-\cmp_t}+\sum_{t=s_k+1}^{e_k}\lambda_t\norm{\wt-\wtmm}}}_{\eqqcolon R_{I_k}} \\
       &\qquad
       + \sum_{k\in[K]}\lambda_{s_k}\norm{w_{s_k}-w_{s_{k-1}}}\\
       &\le
       \sum_{k\in[K]}R_{I_k}+KD\lambda
       \end{align*}
       so applying \Cref{prop:main-any-ab}
       to bound each of the $R_{I_k}$ and recalling $\lambda\le \hat L$, 
       we have
       \begin{align*}
       R_\ab(\cmp_\atob,\lambda_\atob)
       &\le 
        \hat \cO\Bigg(
        KD\lambda 
        +\sum_{k\in[K]}
        \brac{D\Omega_{I_k}\log(b-a) + P_{I_k}}\hat L\\
        &\qquad
        \sum_{k\in[K]}(D\Omega_{I_k}^\half+P_{I_k})\min\Set{\sqrt{\hat L G\abs{I_k}}, \sqrt{\log(b-a)\sum_{t\in I_k}\norm{\gt}^2+\lambda_t\norm{\gt}}}
        \Bigg)\\
        &\le 
        \hat \cO\Bigg(
        K D\Omega_\ab\log(b-a)\hat L\\
        &\qquad
        + D\Omega^\half\min\Set{\sum_{k\in[K]}\sqrt{\hat LG\abs{I_k}}, \sum_{k\in[K]}\sqrt{\log(b-a)\sum_{t\in I_k}\norm{\gt}^2+\lambda_t\norm{\gt}}}
        \Bigg)\\
        &=
        \hat \cO\Bigg(
        K D\Omega_\ab\log(b-a)\hat L\\
        &\qquad
        + D\Omega_\ab^\half\min\Set{\sqrt{K\hat LG\abs{b-a}}, \sqrt{K\log(b-a)\sum_{t\in \ab}\norm{\gt}^2+\lambda_t\norm{\gt}}}
        \Bigg)\\
        \intertext{and recalling $K\le 1+P_\ab/D$ and $\hat L=G+\lambda$:}
        &=
        \hat \cO\Bigg(
        (D+P_\ab)\Omega_\ab\log(b-a)(G+\lambda)\\
        &\qquad
        + \min\Bigg\{\sqrt{(D^2+DP_\ab)G(G+\lambda)\abs{b-a}\Omega_\ab},\\
        &\qquad\qquad
        \sqrt{(D^2+DP_\ab)\Omega_\ab\log(b-a)\sum_{t\in \ab}\norm{\gt}^2+\lambda_t\norm{\gt}}\Bigg\}
        \Bigg)\\
        &=
        \hat \cO\Bigg(
        (D+P_\ab)\log(b)^2(G+\lambda)\\
        &\qquad
        + \min\Bigg\{\sqrt{(D^2+DP_\ab)G(G+\lambda)\abs{b-a}\Log{b}},\\
        &\qquad\qquad
        \sqrt{(D^2+DP_\ab)\sum_{t\in \ab}\norm{\gt}^2+\lambda_t\norm{\gt}}\Log{b}\Bigg\}
        \Bigg),
    \end{align*}
    where we have used that $\Omega_\ab=\cO(\Log{b+1}))$ (which we
    bound by $\cO(\log(b))$ for brevity).
\end{proof}

%% file: sections/appendix-applications.tex
\section{Proofs for Section~\ref{sec:applications} (\SecApplications)}

\subsection{Details for Delayed Feedback} \label{apx:delays}

To facilitate the presentation of our results, we begin by introducing some standard notation for OCO with delayed feedback.
For each round \(t\), let \(o_t \defeq \{\tau \in \mathbb{Z}_+ : \tau+d_\tau < t\} \subseteq [t-1]\) denote the set of past rounds whose gradients have already been observed before round \(t\).
We further define
\(m_t \defeq [t-1]\setminus o_t\) as the set of past rounds whose gradients are still outstanding before round \(t\).
Under this notation, the delayed-feedback protocol is equivalently described by saying that, at the end of round \(t\), the learner receives the batch of newly available gradients \(\{g_\tau\}_{\tau\in o_{t+1}\setminus o_t}\).
We also define \(\sigma_{\max} \defeq \max_{t\in[T]} |m_t|\) to be the maximum number of outstanding gradients at any round, and \(d_{\mathrm{tot}} \defeq \sum_{t=1}^T d_t\) to be the total delay over the time horizon. We now extend the above notation so as to capture dependencies on the sequence of delays restricted to a certain time interval $[a,b] \subseteq [T]$; this is of course required in order to state our strongly adaptive regret guarantees.
In particular, for every $t \in [T+1]$, we let $o_t^{[a,b]} = o_t \cap [a,b]$, $m_t^{[a,b]} = m_t \cap [a,b]$, and $\sigma_{\max}^{[a,b]} = \max_{t \in [a,b]} \abs{m_{t}^{[a,b]}}$ denote the respective quantities restricted to interval $[a,b]$. We also introduce the interval-restricted delays as $d_t^{[a,b]} = \min\{d_t,b-t\}$ for every $t \in [a,b]$, and the total delay for the same interval as $\dtot^{[a,b]} = \sum_{t=a}^b d_t^{[a,b]}$.

\begin{lemma} \label{lem:delay-sum-interval}
    For any sequence of delays,
    \(
        \sum_{t \in \ab}|m_t| \cdot |o_{t+1}\setminus o_t| \le 2\dtot^{\ab} + 3(\dmax^{[1,a-1]})^2
    \).
\end{lemma}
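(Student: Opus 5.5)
The plan is to prove the bound of \Cref{lem:delay-sum-interval} by double counting pairs. For each $t\in\ab$, the product $\abs{m_t}\cdot\abs{o_{t+1}\setminus o_t}$ equals the number of pairs $(s,\tau)$ with $s\in m_t$ and $\tau\in o_{t+1}\setminus o_t$. The second condition means $\tau+d_\tau=t$, i.e., $g_\tau$ arrives at the end of round $t$. The first means $s\le t-1$ and $s+d_s\ge t$, i.e., $g_s$ is still outstanding before round $t$.

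Since $t$ is determined by $\tau$, the left-hand side is exactly the number of pairs $(s,\tau)$ with $a\le \tau+d_\tau\le b$, $s\le \tau+d_\tau-1$ and $s+d_s\ge\tau+d_\tau$. Write $D'\defeq\dmax^{[1,a-1]}$. I would split these pairs into three classes: (i) $s,\tau\ge a$; (ii) $\tau<a$; (iii) $\tau\ge a$ and $s<a$.

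\emph{Class (i).} I split further on the order of $s$ and $\tau$.
\begin{itemize}
\item If $s\ge\tau$, then $s\in[\tau,\tau+d_\tau-1]$, so for fixed $\tau$ there are at most $d_\tau$ such $s$. Since $\tau+d_\tau=t\le b$, we have $d_\tau\le b-\tau$, so this count is at most $d_\tau^{\ab}$.
\item If $s<\tau$, charge the pair to $s$ instead. For fixed $s$, the index $\tau$ satisfies $s<\tau\le\tau+d_\tau\le\min\{s+d_s,b\}$. Hence there are at most $\min\{d_s,b-s\}=d_s^{\ab}$ choices of $\tau$.
\end{itemize}
Summing over $\tau\in\ab$ and over $s\in\ab$ bounds class (i) by $2\dtot^{\ab}$.

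\emph{Classes (ii) and (iii).} These produce the boundary term.
\begin{itemize}
\item In class (ii), $\tau<a\le\tau+d_\tau$ forces $\tau\in[a-D',a-1]$, giving at most $D'$ choices of $\tau$. Each such $\tau$ has arrival time $t\le a-1+D'$. The outstanding $s$ at round $t$ then split into two groups. Those with $s<a$ must satisfy $s+d_s\ge a$, so $s\in[a-D',a-1]$, which gives at most $D'$ of them. Those with $s\ge a$ lie in $[a,t-1]$, which gives at most $t-a\le D'-1$ of them. So class (ii) contributes at most $2D'^2$.
\item In class (iii), $s<a\le t\le s+d_s$ again forces $s\in[a-D',a-1]$, giving at most $D'$ choices of $s$. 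For each such $s$, $\tau$ satisfies $a\le\tau<t\le s+d_s\le a-1+D'$, which gives fewer than $D'$ choices. So class (iii) contributes at most $D'^2$.
\end{itemize}

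Adding the three classes gives $2\dtot^{\ab}+3D'^2$, as claimed. The main obstacle is the index bookkeeping in class (i). The key point there is that the clipping $d^{\ab}=\min\{d,b-t\}$ appears naturally only if the pairs with $s<\tau$ are charged to $s$ and those with $s\ge\tau$ are charged to $\tau$, with the constraint $t\le b$ used in each case. Every other step is an elementary counting bound using $D'$.
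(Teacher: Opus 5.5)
Your double-counting argument is correct and gives exactly $2\dtot^{\ab}+3(\dmax^{[1,a-1]})^2$, but it takes a different route from the paper.

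\textbf{The paper's route.} The argument there is algebraic. It uses the identity $\abs{o_{t+1}\setminus o_t}=\abs{m_t}+1-\abs{m_{t+1}}$, sums by parts, and applies $\abs{m_t}\le\abs{m_{t-1}}+1$ to reach $2\sum_{t\in\ab}\abs{m_t}+\abs{m_a}^2$. It then splits $\sum_{t\in\ab}\abs{m_t}$ into two pieces:
\begin{itemize}
\item the in-interval part $\sum_{t\in\ab}\abs{m_t^{\ab}}$, which equals $\dtot^{\ab}$ exactly;
\item the pre-interval stragglers $\sum_{s\in m_a}d_s\le(\dmax^{[1,a-1]})^2$.
\end{itemize}
So in the paper the factor $2$ comes from the summation-by-parts step, and $3D'^2$ is $\abs{m_a}^2$ plus twice the straggler mass. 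In your argument the factor $2$ comes from the two orderings $s\ge\tau$ and $s<\tau$. Your $3D'^2$ is $2D'^2$ from class (ii) plus $D'^2$ from class (iii).

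\textbf{Comparison.} The paper's route is shorter and reuses the calculation of \citet[Lemma~D.1]{qiu2026parameterfree}. Yours is self-contained and avoids the telescoping identity. It also makes transparent why the clipped delays $d_t^{\ab}=\min\{d_t,b-t\}$ appear: each in-interval pair is charged to its earlier endpoint, using $t\le b$. It also shows that every boundary contribution comes from a pair involving a pre-interval round.

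\textbf{One small slip.} In class (iii) you write $\tau<t$, but $\tau=t$ is possible whenever $d_\tau=0$, since then $\tau\in o_{t+1}\setminus o_t$. The correct range is $\tau\in[a,a-1+D']$, which gives at most $D'$ choices rather than fewer than $D'$. The $D'^2$ bound for class (iii) is unaffected.
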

\begin{proof}
    Begin by observing that $|m_a| \le \dmax^{[1,a-1]}$.
    Now, using similar calculations as in the proof of \citet[Lemma~D.1]{qiu2026parameterfree}, we can show that
    \begin{align*}
        \sum_{t\in\ab} |m_t| \cdot |o_{t+1}\setminus o_t| \tag{$|o_{t+1}\setminus o_t| = |m_t| + 1 - |m_{t+1}|$}
        &= \sum_{t\in\ab} |m_t| \cdot (1 + |m_t| - |m_{t+1}|) \\
        &= \sum_{t\in\ab} |m_t| + |m_a|^2 - |m_b|\cdot|m_{b+1}| + \sum_{t=a+1}^b (|m_t| - |m_{t-1}|)|m_t| \\
        &\le \sum_{t\in\ab} |m_t| + \sum_{t=a+1}^b (|m_t| - |m_{t-1}|)|m_t| + |m_a|^2 \\
        &\le 2\sum_{t\in\ab} |m_t| + |m_a|^2 \\
        &\le 2\sum_{t\in\ab} |m_t| + (\dmax^{[1,a-1]})^2 \;,
    \end{align*}
    where the second inequality uses the fact that $|m_t| \le |m_{t-1}| + 1$.
    Focusing on the sum from the first term at the right-hand side, we can bound it from above as
    \begin{align*}
        \sum_{t\in\ab} |m_t|
        &= \sum_{t\in\ab} |m_t^{\ab}| + \sum_{t\in\ab} (|m_t\setminus m_t^{\ab}|) \\
        &= \sum_{t\in\ab} |m_t^{\ab}| + \sum_{t\in\ab} (|m_t\setminus \ab|) \\
        &= \sum_{t\in\ab} |m_t^{\ab}| + \sum_{t\in m_a} (\min\{t+d_t, b\} - a + 1) \\
        &\le \sum_{t\in\ab} |m_t^{\ab}| + \sum_{t\in m_a} d_t \tag{$t \le a-1$ for $t \in m_a$} \\
        &\le \sum_{t\in\ab} |m_t^{\ab}| + |m_a|\dmax^{[1,a-1]} \\
        &\le \sum_{t\in\ab} |m_t^{\ab}| + (\dmax^{[1,a-1]})^2 \\
        &= \dtot^{\ab} + (\dmax^{[1,a-1]})^2 \;.
    \end{align*}
    Finally, combining all the above inequalities, we conclude the proof.
\end{proof}

The following lemma now provides a reduction from delayed feedback to OCO with movement costs. 
The result generalizes the reduction of \citet{qiu2026parameterfree} to hold on 
arbitrary intervals. 
\DelayedRedux*
\begin{proof}
  Let
\(
\mathcal F_{[a,b]}
=
\{t\in[a,b]:t+d_t\le b\}
\)
denote the set of rounds whose feedback is received
by the end of the interval. By convexity and
\Cref{asm:lipschitz}, we have
\begin{align*}
R_{[a,b]}^{\mathrm{del}}(u_{a:b})
&\le
\sum_{t\in[a,b]}
\langle g_t,w_t-u_t\rangle
\\
&\le
\sum_{t\in\mathcal F_{[a,b]}}
\langle g_t,w_t-u_t\rangle
+
GD|m_{b+1}^\ab|,
\end{align*}
where $m_{b+1}^\ab=m_{b+1}\cap[a,b]$.
Indeed, the second inequality follows because
$[a,b]\setminus\mathcal F_{[a,b]}
=m_{b+1}^\ab$.
Now, for each $t\in\mathcal F_{[a,b]}$,
the gradient $g_t$ is revealed by round $b$,
so applying \citet[Lemma 5.1]{qiu2026parameterfree}
over the time range $[a,b]$ yields
\begin{align*}
\sum_{t\in\mathcal F_{[a,b]}}
\langle g_t,w_t-u_t\rangle
&\le
\sum_{t=a}^b
\langle\widetilde g_t,w_t-u_t\rangle
+
G\sum_{t=a+1}^b
|m_t^{[a,b]}|
\|w_t-w_{t-1}\|
+
G\sigma_{\max}^{[a,b]}P_{[a,b]},
\end{align*}
where
\(
\widetilde g_t
=
\sum_{\tau\in
o_{t+1}^{[a,b]}\setminus o_t^{[a,b]}}
g_\tau
\)
and $\sigma_{\max}^\ab=\max_{t\in\ab}\abs{m_t^\ab}$.
Hence, combining previous two displays and 
using $|m_t^{[a,b]}|\le |m_t|$ gives
\begin{align*}
R_{[a,b]}^{\mathrm{del}}(u_{a:b})
&\le
\sum_{t=a}^b
\langle\widetilde g_t,w_t-u_t\rangle
+
G\sum_{t=a+1}^b
|m_t|\|w_t-w_{t-1}\|
\\
&\quad+
G\sigma_{\max}^{[a,b]}P_{[a,b]}
+
GD|m_{b+1}^\ab|\\
&\le
D\sum_{t=a}^b\norm{\tgt-h_t}+\sum_{t=a}^b
\langle h_t,w_t-u_t\rangle
+
G\sum_{t=a+1}^b
|m_t|\|w_t-w_{t-1}\|
\\
&\quad+
G(D+P_\ab)\sigma_{\max}^{[a,b+1]}
\end{align*}
  Now, let $\cT_\ab = \{t <a: a\le t+d_t \le b\}=m_a\setminus m_{b+1}$ be the set of rounds before the beginning of the interval $[a,b]$ such that their arrival time falls inside that same interval.
  For each $t$, also define $\cF_t = o_{t+1}\setminus o_t$ and $\cF_t^\ab = o_{t+1}^\ab\setminus o_t^\ab$.
  Observe that $\cT_\ab = \bigcup_{t \in [a,b]} \bigl(\cF_t \setminus \cF_t^\ab\bigr)$ and thus, by the triangle inequality,
  \begin{align*}
      \sum_{t=a}^b \norm{\gtilde_t-h_t}
      &= \sum_{t=a}^b \norm*{\sum_{\tau \in \cF_t \setminus \cF_t^\ab} g_\tau}
      \le \sum_{t=a}^b \sum_{\tau \in \cF_t \setminus \cF_t^\ab} \norm{g_\tau}\\
      &= \sum_{t \in \cT_\ab} \norm{g_\tau}
      \le G\abs{\cT_\ab} 
    =G\abs{m_a\setminus m_{b+1}}\;.
  \end{align*}
  Then, plugging the above inequalities into the first math display yields the claim:
  \begin{align}
R_{[a,b]}^{\mathrm{del}}(u_{a:b})
&\le
\sum_{t=a}^b
\langle h_t,w_t-u_t\rangle
+
G\sum_{t=a+1}^b
|m_t|\|w_t-w_{t-1}\|\nonumber
\\
&\quad+
G(D+P_\ab)\sigma_{\max}^{[a,b+1]}
+
GD\abs{m_{a}\setminus m_{b+1}}.
  \end{align}
  The stated result follows by bounding $\sigma_{\max}^{a,b+1}\le \sigma$ and $\abs{m_a\setminus m_{b+1}}\le \abs{m_a}\le \sigma$.
\end{proof}

We finally have all the main ingredients to prove the strongly adaptive regret upper bound for OCO with delayed feedback.
\begin{manualtheorem}{\ref{thm:main-delay}}
    Let \Cref{asm:lipschitz} hold and let $H \ge G\max_{t}|o_{t+1}\setminus o_t|$.
    Initializing \Cref{alg:ol-mvmt-costs} from \Cref{thm:main} with Lipschitz constant $H$, and feeding it gradients $h_t = \sum_{\tau \in o_{t+1} \setminus o_t} g_\tau$ and movement coefficients $\lambda_t = G|m_t|$,
    guarantees for any $\ab\subseteq[T]$ and any $\cmp_a,\ldots,\cmp_b \in \ww$ that
    \begin{align*}
        \RD_{\ab}(u_{a:b})
        &\le \hat\cO\Biggl(
            (D + P_{\ab})\Log{b}^2L_{\sigma,a}
        \\
            &\qquad\quad+ G\sqrt{(D^2+DP_\ab)\brb{|b-a| + \dtot^{\ab}}} \Log{b}
        \Biggr) \;,\qedhere
    \end{align*}
    where $L_{\sigma,a} = H \lor G \brb{\sigma + \dmax^{[1,a-1]}}$ and $P_\ab=\sum_{t=a+1}^b\norm{\cmp_t-\cmp_\tmm}$. 
\end{manualtheorem}
\begin{proof}[Proof of \Cref{thm:main-delay}]
    We first apply the delays-to-movement-costs reduction of \Cref{lemma:delayed-redux}, which yields 
    \[
        \RD_{\ab}(u_{a:b})
        \le
         G(2D+P_\ab)\sigma
         + \underbrace{\sum_{t\in\ab} \inner{h_t, \wt-\cmp_{t}} + \sum_{t=a+1}^b \lambda_t\norm{\wt-\wtmm}}_{\eqqcolon \tilde R_{\ab}(u_{a:b},\lambda_{a:b})}
    \]
    where we recall that $h_t = \sum_{\tau \in o_{t+1}\setminus o_t} g_\tau$ and $\lambda_t = G|m_t|$.
    Note that each vector $h_t$ satisfies \[
    \norm{h_t} \le G|o_{t+1}\setminus o_t| \le G(\abs{m_t}+1)\le G(\sigma+1)\le H
    \] 
    by the triangle inequality and \Cref{asm:lipschitz}.
    Moreover, observe that $\tilde R_{\ab}(u_{a:b},\lambda_{a:b})$ is the regret over $[a,b]$ for the time-varying movement-costs problem with linear losses given by $h_t$ and movement coefficients given by $\lambda_t=G\abs{\mt}\le G\sigma$.
    Consequently, we can apply \Cref{thm:main} to obtain that, for any interval $[a,b] \subseteq [T]$,
    \begin{align*}
        \tilde R_{\ab}(u_{a:b},\lambda_{a:b})
        &\le \hat \cO\Bigg(
        (D+P_\ab)\log(b)^2 (H+G\sigma)\\
        &\qquad\quad
        + \sqrt{(D^2+DP_\ab)\sum_{t\in \ab}\norm{h_t}^2+\lambda_t\norm{h_t}}\Log{b}
        \Bigg).
    \end{align*}

    Next, we bound the 
    data-dependent terms $\sumab\norm{h_t}^2+\lambda_t\norm{h_t}$. 
    First, observe that
    \begin{align*}
        \sum_{t\in\ab} |o_{t+1}\setminus o_t| = |o_{b+1} \setminus o_a|
        \le |b-a+1| + |m_a|
        \le |b-a+1| + \dmax^{[1,a-1]} \;.
    \end{align*}
    Moreover, we can also show that
    \begin{align*}
        \sum_{t \in \ab}|m_t| \cdot |o_{t+1}\setminus o_t| \le 2\dtot^{\ab} + 3(\dmax^{[1,a-1]})^2
    \end{align*}
    via \Cref{lem:delay-sum-interval}.
    Using both of these facts allows us to derive that
    \begin{align*}
        \sum_{t\in \ab}\norm{h_t}^2+\lambda_t\norm{h_t}
        &\le G^2\sum_{t\in \ab} |o_{t+1}\setminus o_t|\brb{|o_{t+1}\setminus o_t| + |m_t|} \\
        &\le G^2\sum_{t\in \ab} |o_{t+1}\setminus o_t|\brb{2|m_t|+1} \\
        &\le 7G^2 \lrb{|b-a+1| + \dtot^{\ab} + (\dmax^{[1,a-1]})^2} \;.
    \end{align*}

    Therefore,
    \begin{align*}
        &\sqrt{(D^2+DP_\ab)\sum_{t\in \ab}\norm{h_t}^2+\lambda_t\norm{h_t}} \\
        &\qquad\le \cO\lrb{G\sqrt{(D^2+DP_\ab)\brb{|b-a| + \dtot^{\ab} + (\dmax^{[1,a-1]})^2}}} \\
        &\qquad= \cO\lrb{G\sqrt{(D^2+DP_\ab)\brb{|b-a| + \dtot^{\ab}}} + G(D + P_{\ab})\dmax^{[1,a-1]}}.
    \end{align*}

    Combining everything together finally yields
    \begin{align*}
        \RD_{\ab}(u_{a:b})
        &\le \hat\cO\Biggl(
            (D + P_{\ab})\Log{b}^2[H\maxOp G(\sigma + d_{\max}^{[1,a-1]})]
        \\
            &\qquad\quad+ G\sqrt{(D^2+DP_\ab)\brb{|b-a| + \dtot^{\ab}}} \Log{b}
        \Biggr) \;,\qedhere
    \end{align*}
    Defining $L_{\sigma,a}=H\maxOp G(\sigma + d_{\max}^{[1,a-1]})$
    completes the proof.
\end{proof}

\subsection{Details for Time-Varying Memory} \label{apx:memory}

In this section we provide additional details for our results on OCO with time-varying memory.

The following two lemmas provide inequalities basically proving that the memory dependencies relative to rounds beyond the current interval $\ab$ are sufficiently small and do not largely affect the overall guarantees when fully restricting to $\ab$ only.
\begin{lemma} \label{lem:clipped-memory-error}
    For any sequence of memory sizes and any time interval $\ab \subseteq [T]$,
    \[
        \sum_{t\in\ab} \brb{\msize_t-\msize_t^\ab} \le B_\ab^2 \;.
    \]
\end{lemma}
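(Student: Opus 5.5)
The plan is a direct term-by-term computation. First rewrite each summand using the definition $\msize_t^\ab=\min\{\msize_t,t-a\}$:
\[
\msize_t-\msize_t^\ab=\max\{0,\ \msize_t-(t-a)\}.
\]
This makes the clipping error explicit and shows it is nonzero only when the memory window at round $t$ reaches back before the start $a$ of the interval.

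Next I restrict the range of $t$ on which the summand can be positive. Every $t\in\ab$ has $\msize_t\le B_\ab$. So a positive term requires $t-a<\msize_t\le B_\ab$, that is, $t\in[a,\,a+B_\ab-1]\cap\ab$. On such rounds, write $k=t-a\in\{0,\ldots,B_\ab-1\}$; the summand is then at most $B_\ab-k$. Hence
\[
\sum_{t\in\ab}\brb{\msize_t-\msize_t^\ab}\le\sum_{k=0}^{B_\ab-1}(B_\ab-k)=\frac{B_\ab(B_\ab+1)}{2}\le B_\ab^2,
\]
where the last inequality uses $B_\ab\ge1$. The degenerate case $B_\ab=0$ is trivial, since then every $\msize_t=0$ and every summand vanishes.

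There is no real obstacle. The only care needed is the bookkeeping: the interval may be shorter than $B_\ab$, which only removes nonnegative terms from the sum. The bound $\msize_t\le B_\ab$ must be applied using the maximum over $\ab$, not the global $B$. This gives the (slightly stronger) bound $B_\ab(B_\ab+1)/2$, which implies the stated $B_\ab^2$.
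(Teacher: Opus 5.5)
Your proof is correct and follows essentially the same route as the paper, which likewise observes that the clipping error $b_t - b_t^{[a,b]}$ can be positive only on the at most $B_{[a,b]}$ rounds with $t \le a-1+B_{[a,b]}$. The only difference is that the paper bounds each such term crudely by $B_{[a,b]}$. Your per-term bound $B_{[a,b]}-(t-a)$ instead gives the slightly stronger $B_{[a,b]}(B_{[a,b]}+1)/2$.
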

\begin{proof}
    Let $\cT_{[a,b]} = \lcb{t \in [a,b] : t-\msize_t \le a-1}$ be the set of rounds in the interval $[a,b]$ whose unclipped memory begins strictly before the first round $a$ of the interval.
    Note that, for any $t \in [a,b]$, we have that $\msize_t-\msize_t^{[a,b]} > 0$ if and only if $t \in \cT_{[a,b]}$.
    Furthermore, observe that the cardinality of $\cT_{[a,b]}$ satisfies
    \begin{align*}
        \abs*{\cT_{[a,b]}}
        &= \sum_{t=a}^b \mathbb{I}\{t-\msize_t \le a-1\}
        \le \sum_{t=a}^b \mathbb{I}\lcb{t \le a-1 + B_{[a,b]}}
        = B_{[a,b]} \;.
    \end{align*}
    Therefore,
    \begin{align*}
        \sum_{t=a}^b \brb{\msize_t-\msize_t^{[a,b]}}
        &= \sum_{t \in \cT_{[a,b]}} \brb{\msize_t-\msize_t^{[a,b]}}
        \le \abs*{\cT_{[a,b]}} B_{[a,b]}
        \le B_{[a,b]}^2 \;.
    \end{align*}
\end{proof}

\begin{lemma} \label{lem:clipped-xi-error}
    For any sequence of memory sizes and any $b \in [T-1]$,
    \[
        \sum_{s=b+1}^{T} \bigl[b_s - (s-b)\bigr]_+ \le B_{[b+1, T]}^2 \;,
    \]
    where $[\cdot]_+ = \max\{\cdot, 0\}$ denotes the positive real part.
\end{lemma}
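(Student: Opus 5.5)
The plan is to mirror the proof of \Cref{lem:clipped-memory-error}: isolate the rounds that contribute a positive amount, count them, and bound each contribution. Throughout write $B' \defeq B_{[b+1,T]} = \max_{s\in[b+1,T]}\msize_s$.

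\textbf{Step 1 (restrict to the active set).} Define
\[
\cT \defeq \lcb{s\in[b+1,T] : \msize_s - (s-b) > 0}.
\]
These are the rounds after the interval whose memory window reaches back to round $b$ or earlier. Only $s\in\cT$ contribute to the sum, so the left-hand side equals $\sum_{s\in\cT}\brb{\msize_s-(s-b)}$.

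\textbf{Step 2 (count $\cT$).} For $s\in\cT$ we have $s-b < \msize_s \le B'$, so $s \le b+B'-1$. Hence
\[
\cT \subseteq [b+1,\, b+B'-1], \qquad \abs{\cT}\le B'-1 \le B'.
\]

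\textbf{Step 3 (bound each term).} Since $s-b\ge 1$, each term satisfies $\msize_s-(s-b) \le \msize_s - 1 \le B'$. Multiplying by the count gives
\[
\sum_{s\in\cT}\brb{\msize_s-(s-b)} \le \abs{\cT}\,B' \le (B')^2 .
\]

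A sharper route is available: use $\msize_s \le B'$ inside the sum and set $k=s-b$, giving $\sum_{k=1}^{B'-1}(B'-k) = B'(B'-1)/2 \le (B')^2$. Either way the claim follows.

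There is no real obstacle here. The only care needed is at the edges: the window $[b+1,b+B'-1]$ may extend past $T$, but this only shrinks $\cT$. The case $B'=0$ is trivial, since then $\cT$ is empty and both sides are $0$.
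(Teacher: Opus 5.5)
Your proof is correct and is essentially the paper's argument: the paper bounds $[b_s-(s-b)]_+ \le [B_{[b+1,T]}-(s-b)]_+$, restricts the sum to $s \le b+B_{[b+1,T]}$, and sums the resulting arithmetic progression, which is exactly your ``sharper route''. Your primary version, in which at most $B_{[b+1,T]}-1$ rounds contribute and each contributes at most $B_{[b+1,T]}$, is an equally valid minor variant that mirrors the counting argument of the preceding lemma.
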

\begin{proof}
    By direct calculations, we can show that
    \begin{align*}
        \sum_{s=b+1}^{T} \bigl[\msize_s - (s-b)\bigr]_+
        &= \sum_{s=b+1}^{T} \bigl(\msize_s - (s-b)\bigr) \mathbb{I}\{b \ge s-\msize_s\} \\
        &\le \sum_{s=b+1}^{T} \bigl(B_{[b+1,T]} - (s-b)\bigr) \mathbb{I}\{b \ge s-B_{[b+1,T]}\} \\
        &\le \sum_{s=b+1}^{b+B_{[b+1,T]}} \bigl(B_{[b+1,T]} - (s-b)\bigr) \\
        &\le \sum_{i=1}^{B_{[b+1,T]}} \bigl(B_{[b+1,T]} - i\bigr) \\
        &\le B_{[b+1,T]}^2 \;.
    \end{align*}
\end{proof}

The first main result we require is an extension of \citet[Lemma~5.6]{qiu2026parameterfree} to the analogous memory-to-movements reduction over any time interval.
\begin{manuallemma}{\ref{lemma:memory-redux}}
    Let \Cref{asm:coordwise-lipschitz,asm:unary-properties} hold, and let $[a,b] \subseteq [T]$ be any interval.
    Then
    \[
        \RM_{[a,b]}(\cmp_{1:T}) \le GP_{[a,b]}(\bmax^{[a,b]})^2 + 2GDB_{[a,b]}^2 + \sum_{t=a}^b \inner{h_t, \wt-\cmp_t} + \sum_{t=a+1}^b \lambda_t \norm{\wt-\wtmm} \;,
    \]
    where $h_t = \nabla\hat f_t(w_t)$, $\xi_t = \sum_{s=t}^T \mathbb{I}\bcb{t \ge s-\msize_s+1}\brb{t-\brb{s-\msize_s}}$, and $\lambda_t = G\xi_t$.
\end{manuallemma}
\begin{proof}[Proof of \Cref{lemma:memory-redux}]
    For any $t \in [a,b]$, let $\xi_t^\ab \triangleq \sum_{s=t}^b \mathbb{I}\bcb{t \ge s-\msize_s^\ab+1}\brb{t-\brb{s-\msize_s^\ab}}$.
    We then use the fact that \Cref{asm:coordwise-lipschitz} holds to show that
    \begin{align*}
        \abs*{f_t(w_{t-\msize_t:t}) - \hat f_t(\wt)}
        &\le G\sum_{i=0}^{\msize_t} \norm{w_{t}-w_{t-i}} \\
        &\le G\sum_{i=0}^{\msize_t^\ab} \norm{w_{t}-w_{t-i}} + GD\brb{\msize_t-\msize_t^\ab}\\
        &\le G\sum_{i=0}^{\msize_t^\ab} \sum_{j=0}^{i-1} \norm{w_{t-j}-w_{t-j-1}} + GD\brb{\msize_t-\msize_t^\ab} \\
        &\le G\sum_{i=1}^{\msize_t^\ab} (\msize_t-i+1)\norm{w_{t-i+1}-w_{t-i}} + GD\brb{\msize_t-\msize_t^\ab}
    \end{align*}
    and thus, summing over $t \in [a,b]$, we have
    \begin{align*}
        &\sum_{t=a}^b \abs*{f_t(w_{t-\msize_t:t}) - \hat f_t(\wt)} \\
        &\quad\le G\sum_{t=a}^b \sum_{i=1}^{\msize_t^\ab} (\msize_t-i+1)\norm{w_{t-i+1}-w_{t-i}} + GD\sum_{t=a}^b \brb{\msize_t-\msize_t^\ab} \\
        &\quad= G\sum_{t=a+1}^b \lrb{\sum_{s=t}^b \mathbb{I}\bcb{t \ge s-\msize_s^\ab+1}\brb{t-\brb{s-\msize_s^\ab}}}\norm{\wt-\wtmm} + GD\sum_{t=a}^b \brb{\msize_t-\msize_t^\ab} \\
        &\quad= G\sum_{t=a+1}^b \xi_t^\ab\norm{\wt-\wtmm} + GD\sum_{t=a}^b \brb{\msize_t-\msize_t^\ab} \\
        &\quad\le G\sum_{t=a+1}^b \xi_t^\ab\norm{\wt-\wtmm} + GDB_\ab^2 \;,
    \end{align*}
    where the last steps follows by \Cref{lem:clipped-memory-error}.
    
    Using similar ideas as in the proof of \citet[Lemma~5.6]{qiu2026parameterfree}, we can also show that, for each $t \in [a,b]$,
    \begin{align}
        \xi_t^\ab &= \sum_{s=t}^b \mathbb{I}\bcb{t \ge s-\msize_s^\ab+1}\brb{t-\brb{s-\msize_s^\ab}}
        = \sum_{s=t}^b \bsb{t-\brb{s-\msize_s^\ab}}_+ \nonumber\\
        &\le \sum_{s=t}^b \lsb{\bmax^\ab - (s-t)}_+
        \le \sum_{i=0}^{\bmax^\ab-1} \brb{\bmax^\ab-i}
        = \frac{\bmax^\ab \brb{\bmax^\ab+1}}{2}
        \le \brb{\bmax^\ab}^2 \;, \label{eq:clipped-xi-bound}
    \end{align}
    where $[\cdot]_+ = \max\{\cdot, 0\}$ denotes the positive part of any given real number, and the last inequality holds because $\bmax^\ab$ is a non-negative integer.
    
    Combining the above inequalities then yields
    \begin{align*}
        \sum_{t=a}^b \abs*{f_t(w_{t-\msize_t:t}) - \hat f_t(\wt)}
        &\le G\sum_{t=a+1}^b \xi_t^\ab\norm{\wt-\wtmm} + GDB_\ab^2 \\
        &\le G\sum_{t=a+1}^b \xi_t\norm{\wt-\wtmm} + GDB_\ab^2 \;,
    \end{align*}
    where the last inequality holds because $\msize_t^\ab \le \msize_t$ for all $t \in [a,b]$.
    Via a similar derivation, we additionally have that
    \begin{align*}
        \sum_{t=a}^b \abs*{f_t(\cmp_{t-\msize_t:t}) - \hat f_t(\cmp_t)}
        &\le G\sum_{t=a+1}^b \xi_t^\ab\norm{\cmp_t-\cmp_{t-1}} + GDB_\ab^2 \\
        &\le GP_\ab\brb{\bmax^\ab}^2 + GDB_\ab^2 \;.
    \end{align*}

    Finally, combining the various inequalities shown so far, we can derive that
    \begin{align*}
        \RM_\ab(\cmp_{1:T})
        &= \sum_{t=a}^b \brb{\hat f_t(\wt) - \hat f_t(\cmp_t)} + \sum_{t=a}^b \brb{f_t(w_{t-\msize_t:t}) - \hat f_t(\wt)} - \sum_{t=a}^b \brb{f_t(\cmp_{t-\msize_t:t}) - \hat f_t(\cmp_t)} \\
        &\le \sum_{t=a}^b \inner{h_t, \wt-\cmp_t} + \sum_{t=a}^b \abs*{f_t(w_{t-\msize_t:t}) - \hat f_t(\wt)} + \sum_{t=a}^b \abs*{f_t(\cmp_{t-\msize_t:t}) - \hat f_t(\cmp_t)} \\
        &\le \sum_{t=a}^b \inner{h_t, \wt-\cmp_t} + G\sum_{t=a+1}^b \xi_t\norm{\wt-\wtmm} + GP_\ab\brb{\bmax^\ab}^2 + 2GDB_\ab^2 \;,
    \end{align*}
    which concludes the proof. 
\end{proof}

We can finally leverage this technical result to prove the main strongly adaptive regret guarantee for the setting of OCO with time-varying memory.

\MainAnyABMemory*
\begin{proof}[Proof of \Cref{thm:main-memory}]
    As the initial step, we apply \Cref{lemma:memory-redux}.
    For any interval $\ab \subseteq [T]$, this gives
    \begin{align}
        \RM_\ab(\cmp_{1:T})
        \le GP_\ab\brb{\bmax^\ab}^2 + 2GDB_\ab^2
        + \underbrace{\sum_{t\in\ab} \inner{h_t,\wt-\cmp_t} + \sum_{t=a+1}^b \lambda_t\norm{\wt-\wtmm}}_{\triangleq\tilde R_{\ab}(u_{a:b},\lambda_{a:b})} \;,\label{eq:initial-mem-bound}
    \end{align}
    where we define the movement coefficients as $\lambda_t \defeq G\xi_t\le GB^2$, whereas $h_t = \nabla\hat f_t(\wt)$ satisfies $\norm{h_t}\le H$ by \Cref{asm:unary-properties}.
    Observe that $\tilde R_\ab(u_{a:b},\lambda_{a:b})$ is the regret over $\ab$ for the time-varying movement-costs problem with linear losses given by $h_t$ and movement coefficients given by $\lambda_t$.
    Applying \Cref{thm:main} to this time-varying movement-costs problem yields
    \begin{align}
        \tilde R_\ab(u_{a:b},\lambda_{a:b})
        &=\hat\cO\Biggl(
            (D+P_\ab)\Log{b}^2(H+GB^2)
        \notag\\
            &\qquad\quad+ \sqrt{(D^2+DP_\ab) \sum_{t\in\ab} \left(\norm{h_t}^2 + \lambda_t\norm{h_t}\right)} \Log{b}
        \Biggr) \;,
        \label{eq:main-delay-before-xi-sum}
    \end{align}

    It remains to bound the cumulative interval-local movement coefficients. On one hand, 
    observe that for $\lambda_t=G\xi_t\le GB^2$ and $\norm{h_t}\le H \le G(B+1)$ (by 
    \Cref{asm:coordwise-lipschitz}),
    we can always bound 
    \begin{align}
    \sumab \norm{h_t}^2 + \lambda_t\norm{h_t} \le  \underbrace{(H+GB^2)\sumab\norm{h_t}}_{\eqqcolon V_\ab^{(1)}}
   \label{eq:basic-ht-bound}
    \end{align}
    On the other hand, we can alternatively bound
    \begin{align}
        \sumab\norm{h_t}^2 + \lambda_t\norm{h_t}\le \sumab\norm{h_t}^2 + GH\sumab\xi_t.\label{eq:adaptive-ht-bound}
    \end{align}
    By the definition of $\xi_t$, and using similar calculations as in the proof of \citet[Theorem~5.7]{qiu2026parameterfree}, we have
    \begin{align*}
\sum_{t=a}^{b}\xi_t
&=
\sum_{t=a}^{b}\sum_{s=t}^{T}
\mathbb{I}\{t\geq s-b_s+1\}
\bigl(t-(s-b_s)\bigr)  \\
&=
\sum_{s=a}^{T}
\sum_{t=a}^{\min\{b,s\}}
\mathbb{I}\{t\geq s-b_s+1\}
\bigl(t-(s-b_s)\bigr)  \\
&=
\sum_{s=a}^{T}
\sum_{t=\max\{a,s-b_s+1\}}^{\min\{b,s\}}
\bigl(t-(s-b_s)\bigr),
\end{align*}
where empty sums are treated as zero. We split the last expression into the
terms with $s\leq b$ and $s>b$. For $a\leq s\leq b$, we have
$\min\{b,s\}=s$, and hence
\begin{align}
\sum_{t=\max\{a,s-b_s+1\}}^{s}
\bigl(t-(s-b_s)\bigr)\leq
\sum_{t=s-b_s+1}^{s}
\bigl(t-(s-b_s)\bigr) =
\frac{b_s(b_s+1)}{2}
\leq b_s^2 .\label{eq:Vab1}
\end{align}
For $s>b$, we have $\min\{b,s\}=b$. The corresponding sum is nonzero only if
$s-b_s+1\leq b$, or equivalently $b_s>s-b$. In this case, the summand ranges
over at most the values $1,\ldots,b_s-(s-b)$. Therefore,
\begin{align*}
\sum_{t=\max\{a,s-b_s+1\}}^{b}
\bigl(t-(s-b_s)\bigr)
\leq
\sum_{i=1}^{[b_s-(s-b)]_+} i \leq
\bigl[b_s-(s-b)\bigr]_+^2 .
\end{align*}
Combining the two bounds gives
\begin{align}\label{eqn:movement-sum-memory}
\sum_{t=a}^{b}\xi_t
&\leq
\sum_{s=a}^{b} b_s^2
+
\sum_{s=b+1}^{T}
\bigl[b_s-(s-b)\bigr]_+^2 .
\end{align}
    To further bound the two terms in \Cref{eqn:movement-sum-memory}, direct calculation shows that
    \begin{align*}
        \sum_{t\in\ab} \msize_t^2
        &\le 2\sum_{t\in\ab} \brb{\msize_t^\ab}^2 + 2\sum_{t\in\ab} \brb{\msize_t - \msize_t^\ab}^2 \\
        &\le 2\sum_{t\in\ab} \brb{\msize_t^\ab}^2 + 2\lrb{\sum_{t\in\ab} \brb{\msize_t - \msize_t^\ab}}^2 \\
        &\le 2\sum_{t\in\ab} \brb{\msize_t^\ab}^2 + 2B_\ab^4 \;,
    \end{align*}
    where we used \Cref{lem:clipped-memory-error} at the last step. For the second term in \Cref{eqn:movement-sum-memory},
    \begin{align*}
        \sum_{s=b+1}^{T} \bigl[b_s - (s-b)\bigr]_+^2 
        &\le \max_{s \in [b+1, T]} \bigl[b_s - (s-b)\bigr]_+ \cdot \sum_{s=b+1}^{T} \bigl[b_s - (s-b)\bigr]_+ \\
        &\le B_{[b+1, T]} \sum_{s=b+1}^{T} \bigl[b_s - (s-b)\bigr]_+ \\
        &\le B_{[b+1, T]} \cdot B_{[b+1, T]}^2 \\
        &=B_{[b+1, T]}^3 \;.
    \end{align*}
    relying once more on \Cref{lem:clipped-xi-error}.
    Hence, we can bound
    \begin{align*}
        \sumab\lambda_t\norm{h_t}
        &\le HG\sum_{t\in \ab} \xi_t \\
        &\le HG\Bigg(\sum_{t\in \ab} \msize_t^2+B_{[b+1, T]}^3\Bigg) \\
        &\le2HG\sum_{t\in \ab} \brb{\msize_t^\ab}^2 + 2HG B_\ab^4 +HGB_{[b+1,T]}^3\;\\
        &\le 3HGB^4+2HG\sum_{t\in \ab} \brb{\msize_t^\ab}^2.
    \end{align*}
    We can also naively bound 
    \begin{align*}
        \sumab\lambda_t\norm{h_t}\le HG \sumab\xi_t\le HGB^2\abs{b-a+1}.
    \end{align*}
    Hence, 
    \begin{align}
        &\sumab\norm{h_t}^2+\lambda_t\norm{h_t}\nonumber\\
        &\qquad\le 
        \underbrace{
        \sumab\norm{h_t}^2
        +\min\Set{HGB^2\abs{b-a+1}, 3HGB^4 +2HG\sumab(b_t^\ab)^2}}_{\eqqcolon V_\ab^{(2)}}
        \label{eq:Vab2}
    \end{align}
    Therefore, combining \Cref{eq:Vab1,eq:Vab2}, we have $\sumab\norm{h_t}+\lambda_t\norm{h_t}\le \min\Set{V_\ab^{(1)},V_\ab^{(2)}}$.
    Plugging this back into \Cref{eq:main-delay-before-xi-sum} yields
    \begin{align*}
        \tilde R_\ab(\cmp_\atob,\lambda_\atob) &= \hat\cO\Bigg(
        (D+P_\ab)\log(b)^2(H+GB+\sqrt{HG}B^2) \\
        &\qquad
        + \sqrt{(D^2+DP_\ab)\sbrac{V_\ab^{(1)}\minOp V_\ab^{(2)}}}\log(b),
        \Bigg)
    \end{align*}
    and likewise plugging this into \Cref{eq:initial-mem-bound} yields the stated guarantee:
    \begin{align*}
        \RM_\ab(\cmp_{1:T})
        &=
        \hat\cO\Bigg(
        (D+P_\ab)\sbrac{H+GB+GB^2+\sqrt{HG}B^2}\log^2(b) \\
        &\qquad
        + \sqrt{(D^2+DP_\ab)\min\Set{V_\ab^{(1)},V_\ab^{(2)}}}\log(b)
        \Bigg)\\
        &=
        \hat\cO\Bigg(
        (D+P_\ab)\sbrac{H+\sqrt{G(H\maxOp G)}B^2}\log^2(b) \\
        &\qquad
        + \sqrt{(D^2+DP_\ab)\min\Set{V_\ab^{(1)},V_\ab^{(2)}}}\log(b)
        \Bigg)\qedhere
    \end{align*}
\end{proof}

%% file: sections/appendix-lemmas.tex
\section{Supporting Lemmas}

The following lemma follows from \citet[Theorem 7]{cutkosky2020parameter}, and is provided for completeness; we extract the key partition from their argument, the existence of which lets us improve $P_\ab\sqrt{b-a}$ dependencies to $\sqrt{P_\ab(b-a)}$ dependencies.
\begin{restatable}{lemma}{path-length-partition}\label{lemma:path-length-partition}
For any $\ab$ and comparator sequence $\cmp_a,\ldots,\cmp_b$ in $\ww$ with
$\sum_{t=a+1}^{b}\norm{\cmp_{t}-\cmp_{\tmm}}\ge D$, there is a disjoint set
of sub-intervals $I_1,\ldots,I_K$
such that the following hold:
\begin{enumerate}
    \item $\ab=\bigcup_{k=1}^K I_k$
    \item $P_{I_k}\le 2D,~\forall k\in[K]$ and $P_{I_{k}}\ge D$ for $k\in[K-1]$
    \item $K\le 1 + \frac{P_\ab}{D}$
\end{enumerate}
\end{restatable}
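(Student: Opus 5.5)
The plan is a greedy left-to-right sweep that cuts $\ab$ each time the accumulated path length first reaches $D$. The key observation is that each increment satisfies $\norm{\cmp_t-\cmp_\tmm}\le D$, because every $\cmp_t\in\ww$ and the diameter is at most $D$. This single fact controls the overshoot and gives the upper bound in item 2. Throughout, for a sub-interval $I=[s,e]$ we use the convention of the paper that $P_I=\sum_{t=s+1}^{e}\norm{\cmp_t-\cmp_\tmm}$. In particular, the increment $\norm{\cmp_s-\cmp_{s-1}}$ that straddles a cut is not counted in either neighbouring interval.

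We construct the intervals inductively. Set $s_1=a$. Given $s_k$, let $e_k$ be the smallest index $e\in[s_k,b]$ such that $\sum_{t=s_k+1}^{e}\norm{\cmp_t-\cmp_\tmm}\ge D$. If no such index exists, set $e_k=b$, $K=k$, and stop. Otherwise put $I_k=[s_k,e_k]$ and $s_{k+1}=e_k+1$; if $e_k=b$, also stop with $K=k$.

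By construction the intervals are disjoint, consecutive, and cover $\ab$, which proves item 1. For every $k<K$, minimality of $e_k$ gives
\[
P_{I_k}=\sum_{t=s_k+1}^{e_k-1}\norm{\cmp_t-\cmp_\tmm}+\norm{\cmp_{e_k}-\cmp_{e_k-1}}< D + D = 2D,
\]
and $P_{I_k}\ge D$ by the stopping rule. The last interval $I_K$ either reached the threshold, in which case the same argument gives $P_{I_K}\le 2D$, or it never reached it, in which case $P_{I_K}<D$. This proves item 2. The hypothesis $P_\ab\ge D$ is not essential here; it only guarantees that the sweep makes at least one cut that reaches the threshold.

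For item 3, the intervals are disjoint sub-intervals of $\ab$, so their internal path lengths sum to at most $P_\ab$. Hence
\[
(K-1)D\le\sum_{k<K}P_{I_k}\le P_\ab,
\]
which gives $K\le 1+P_\ab/D$.

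I do not expect a real obstacle in this argument. The only step requiring care is the overshoot bound in item 2. It relies on the per-step diameter bound, and on counting the boundary term of each interval under the paper's convention for $P_I$ rather than charging it to the previous interval.
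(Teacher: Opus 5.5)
Your proposal is correct and follows the paper's proof: the same greedy left-to-right sweep that cuts each time the accumulated path length first reaches $D$, and the same counting $(K-1)D\le\sum_{k<K}P_{I_k}\le P_{[a,b]}$ for item 3. Your explicit use of the per-step bound $\|u_t-u_{t-1}\|\le D$ to control the overshoot is exactly what the paper's garbled step ``$P_{[1,e_1-1]}\le 2D$'' is meant to say. That step should read: minimality gives $P_{[s_k,e_k-1]}<D$, and hence $P_{I_k}\le 2D$.
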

\begin{proof}
The proof follows the same argument to \citet[Theorem 7]{cutkosky2020parameter},  which constructs the partition using an iterative greedily construction, and is included for completeness. Let 
$I_k=[s_k,e_k]$ for any $k\in[K]$. 
Starting from $s_1=a$, let $e_1$ be the smallest 
index such that $P_{[a,e_1]}\ge D$. This implies that $P_{[1,e_1-1]}\le 2D$, hence
we have $D\le P_{[1,e_1]}\le 2D$.
Let $s_2=e_1+1$ and let $e_2$ be the first round such that $P_{[s_2,e_2]}\ge D$ (If no such $e_2$ exists then we may set $e_2=b$).
Continuing on this way until we hit the right end-point of the interval ensures that 
for all $k\in[K-1]$ we have 
$D\le P_{I_k}\le 2D$.
Hence, we have
\begin{align*}
    P_\ab \ge \sum_{k\in[K]} P_{I_k}
    \ge D(K-1),
\end{align*}
and re-arranging, we have the last property:
\begin{align*}
    1+\frac{P_\ab}{D} \ge K.
\end{align*}
The greedy construction can also be easily seen to ensure that $\bigcup_{k\in[K]} I_k=\ab$.
\end{proof}

We borrow the following standard lemma due to \citet{daniely2015strongly}, which 
shows that the geometric covering intervals imply the existence of a partition which has a certain doubling property.
\begin{lemma}[{Adapted from \citealp[Lemma~1.2]{daniely2015strongly}}]
  \label{lemma:gc-partition}
For any interval $[r, s] \subseteq[T]$, it can be partitioned into two sequences of disjoint and consecutive intervals, denoted by $I_{-p}, \ldots, I_0 \in \mathcal{D}$ and $I_1, \ldots, I_q \in \mathcal{D}$, such that
\[
\left|I_{-i}\right| /\left|I_{-i+1}\right| \leq 1 / 2, \quad \forall i \geq 1
\]
and
\[
\left|I_i\right| /\left|I_{i-1}\right| \leq 1 / 2, \forall i \geq 2 \;.
\]
\end{lemma}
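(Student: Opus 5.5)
We build the partition greedily around one "largest" block of the geometric covering $\cI$ (the lemma's $\mathcal{D}$), and then peel off dyadic blocks of strictly decreasing size on each side. The only structural fact we use is that the level-$j$ intervals of $\cI$ are exactly the aligned blocks $[m2^j,(m+1)2^j-1]$ with $m\ge 1$. So any block of length $2^j$ whose left endpoint is a positive multiple of $2^j$ belongs to $\cI$.

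\emph{Step 1 (central block).} Let $k$ be the largest integer such that some level-$k$ interval $[i2^k,(i+1)2^k-1]$ is contained in $[r,s]$. Such a $k$ exists, since singletons $[t,t]$ are level-$0$ intervals. Set $I_0$ to be this interval. The left remainder is $[r,i2^k-1]$. We claim its length is $<2^k$. If $i\ge 2$ and the length were at least $2^k$, then $[(i-1)2^k,i2^k-1]\subseteq[r,s]$ would be another level-$k$ interval, and $[(i-1)2^k,(i+1)2^k-1]$ would contain an aligned level-$(k+1)$ block (one of the two halves pairs up), contradicting maximality of $k$. If $i=1$, the remainder lies in $[1,2^k-1]$ and is trivially short. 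The same maximality argument shows the right remainder $[(i+1)2^k,s]$ also has length $<2^k$. Making this "two adjacent level-$k$ blocks force a level-$(k+1)$ block" step airtight is the main obstacle; I would handle it by noting that among the three consecutive aligned level-$k$ blocks $(i-1),i,(i+1)$, two adjacent ones starting at an even multiple of $2^k$ form an aligned level-$(k+1)$ block.

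\emph{Step 2 (left side).} Let the current remainder be $[r,m-1]$, where $m$ is a multiple of $2^{j_{\mathrm{prev}}}$ (initially $j_{\mathrm{prev}}=k$ and $m=i2^k$), and let its length be $\ell<2^{j_{\mathrm{prev}}}$. Put $j=\lfloor\log_2\ell\rfloor<j_{\mathrm{prev}}$. The block $[m-2^j,m-1]$ is aligned because $2^j\mid m$, and $m-2^j\ge r\ge 1$, so it lies in $\cI$ and inside the remainder. The new remainder has length $\ell-2^j<2^j$, and its right endpoint plus one, namely $m-2^j$, is a multiple of $2^j$. So the invariant is preserved with $j$ strictly decreasing. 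Labelling these blocks $I_{-1},I_{-2},\ldots,I_{-p}$ gives lengths that are strictly decreasing powers of two, all smaller than $|I_0|$. This yields $|I_{-i}|/|I_{-i+1}|\le 1/2$ for all $i\ge1$. The process terminates since lengths strictly decrease.

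\emph{Step 3 (right side).} This is symmetric, peeling from the left. The current remainder $[m,s]$ has $m$ a multiple of $2^{j_{\mathrm{prev}}}$ and length $\ell<2^{j_{\mathrm{prev}}}$. Take $j=\lfloor\log_2\ell\rfloor$ and the aligned block $[m,m+2^j-1]\in\cI$. The new start $m+2^j$ is a multiple of $2^j$ and the new length is $<2^j$. Labelling these blocks $I_1,\ldots,I_q$ gives strictly decreasing powers of two, so $|I_i|/|I_{i-1}|\le 1/2$ for $i\ge2$; the condition for $i=1$ also holds but is not needed. The blocks are disjoint, consecutive, and cover $[r,s]$ by construction.
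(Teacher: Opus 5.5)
\textbf{Gap in Step 1.} Step 1 is false as stated, and it breaks exactly at the step you flagged. Two adjacent level-$k$ blocks with indices $i-1,i$ form an aligned level-$(k+1)$ block only when $i-1$ is even. When $i$ is even they do not. Your three-block repair then needs block $i+1$ to lie in $[r,s]$, and a long \emph{left} remainder does not give you that.

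Concretely, take $[r,s]=[2,5]$. The level-$1$ intervals $[2,3]$ and $[4,5]$ lie in $[r,s]$ while $[4,7]$ does not, so $k=1$.
\begin{itemize}
\item If you pick $I_0=[4,5]$, the left remainder $[2,3]$ has length $2^k$, and then no valid left sequence exists at all. The bound $|I_{-1}|\le 1$ forces $I_{-1}=[3,3]$, and $[2,2]$ cannot be covered with $|I_{-2}|\le 1/2$.
\item If you pick $I_0=[2,3]$ instead, the right remainder $[4,5]$ has length $2^k$. So your claim that the right remainder is shorter than $2^k$ is also false, as is your remark that the ratio condition at $i=1$ holds.
\end{itemize}
For $[2,5]$, no partition satisfies the $i=1$ ratio at all, which is exactly why the lemma omits it.

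\textbf{How to repair it.} Use that asymmetry. Take $I_0=[i2^k,(i+1)2^k-1]$ to be the \emph{leftmost} level-$k$ interval contained in $[r,s]$, with $k$ maximal.
\begin{itemize}
\item The left remainder then has length $<2^k$. For $i\ge2$, a longer remainder would put block $i-1$ inside $[r,s]$, contradicting leftmostness. For $i=1$, the remainder lies in $[1,2^k-1]$.
\item The right remainder only has length $<2^{k+1}$, and here your three-block argument is valid. Otherwise blocks $i,i+1,i+2$ all lie in $[r,s]$. Whichever of the pairs $(i,i+1)$, $(i+1,i+2)$ starts at an even index $2m\ge 2$ gives the interval $[m2^{k+1},(m+1)2^{k+1}-1]\in\cI$ inside $[r,s]$, contradicting maximality.
\item Step 3 then works once you relax its initial invariant. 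The first right block may have length exactly $2^k=|I_0|$; it is still aligned, since $2^k\mid(i+1)2^k$. This is allowed because the right-hand ratio condition only starts at $i\ge 2$, and from $I_1$ onward the lengths strictly halve as you describe.
\item Step 2 is correct once $I_0$ is chosen this way.
\end{itemize}
The paper gives no proof of this lemma and simply imports it from Daniely et al., so the corrected version of your argument would make it self-contained.
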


The following provides a standard inequality used to
bound the stability terms in the proof of \Cref{thm:sleeping-mvmt-costs}, and
borrowed from \citet{jacobsen2022parameter}.
\begin{lemma}[{\citealp[Lemma 9]{jacobsen2022parameter}}]\label{lemma:tuning-sum}
  Let $V_{t}\ge 4L^{2}+\sum_{s=1}^{\tmm}\norm{\gs}^{2}$ where $\norm{\gt}\le L$
  for each $t\in[T]$. Then
  \begin{align*}
    \sumtT \frac{\norm{\gt}^{2}}{\sqrt{V_{t}}\log^{2}} \le 2
  \end{align*}
\end{lemma}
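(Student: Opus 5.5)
The quantity in the statement is garbled. With denominator $\sqrt{V_t}\log^2(\cdot)$ the sum is not uniformly bounded: for $\norm{\gt}=L$ on every round it grows like $\sqrt{T}$ up to logarithmic factors. I would therefore prove the version actually used in the stability step of \Cref{thm:pf-with-mc}:
\[
\sumtT \frac{\norm{\gt}^2}{V_t\log^2(V_t/L^2)}\le 2 .
\]
The argument compares the sum to an integral of the decreasing function $x\mapsto 1/(x\log^2(x/L^2))$ on $[4L^2,\infty)$.

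First, I would define $S_0=4L^2$ and $S_t=4L^2+\sum_{s\le t}\norm{\gs}^2$, so that $V_t\ge S_{t-1}$. On $x\ge 4L^2$ we have $\log(x/L^2)\ge\log 4>0$, so $h(x)=x\log^2(x/L^2)$ is increasing there. Each summand is therefore at most $\norm{\gt}^2/h(S_{t-1})$, and it suffices to bound $\sumtT \norm{\gt}^2/h(S_{t-1})$.

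Second, I would use $\norm{\gt}^2\le L^2\le S_{t-1}/4$, which gives $S_t\le \tfrac54 S_{t-1}$. For $x\in[S_{t-1},S_t]$ this yields
\[
h(x)\le h\bigl(\tfrac54 S_{t-1}\bigr)\le \tfrac54\Bigl(1+\tfrac{\log(5/4)}{\log 4}\Bigr)^2 h(S_{t-1})\le c\, h(S_{t-1}), \qquad c\approx 1.69 .
\]
Here I used $\log(\tfrac54 y)=\log y+\log\tfrac54$ together with $\log y\ge\log 4$. Hence
\[
\frac{\norm{\gt}^2}{h(S_{t-1})}\le c\int_{S_{t-1}}^{S_t}\frac{\diff x}{h(x)} .
\]

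Third, I would telescope over $t$:
\[
\sumtT \frac{\norm{\gt}^2}{h(S_{t-1})}\le c\int_{4L^2}^{\infty}\frac{\diff x}{x\log^2(x/L^2)} = \frac{c}{\log 4}\approx 1.22\le 2 .
\]
The integral is computed with the substitution $y=\log(x/L^2)$.

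The only delicate step is the second one. Because the denominator is evaluated at the left endpoint, I need the one-step growth bound $S_t\le\tfrac54 S_{t-1}$, and this is exactly where the $4L^2$ offset and the assumption $\norm{\gt}\le L$ enter. The same bound also supplies the constant needed to keep the total below $2$.
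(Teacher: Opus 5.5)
Your reading of the garbled display is the right one. The proof of \Cref{thm:pf-with-mc} invokes the lemma as $2\epsilon L\sum_t (\norm{g_t}+\lambda_{t+1})^2/\bigl(V_t^\lambda\log^2(V_t^\lambda/L^2)\bigr)\le 4\epsilon L$, i.e.\ with denominator $V_t\log^2(V_t/L^2)$ and with the scalar $\norm{g_t}+\lambda_{t+1}\le L$ in place of $\norm{g_t}$, which your argument covers verbatim. The paper gives no proof of its own and simply imports \citet[Lemma~9]{jacobsen2022parameter}. Your argument is correct and is the standard integral-comparison proof of such tuning sums: monotonicity of $x\log^2(x/L^2)$ on $[4L^2,\infty)$, the one-step growth $S_t\le\tfrac54 S_{t-1}$ supplied by the $4L^2$ offset, and comparison with $\int_{4L^2}^{\infty}\diff x/(x\log^2(x/L^2))=1/\log 4$. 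It yields the constant $c/\log 4\approx 1.22\le 2$.
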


%% file: checklist.tex
\section*{NeurIPS Paper Checklist}

\begin{enumerate}

\item {\bf Claims}
    \item[] Question: Do the main claims made in the abstract and introduction accurately reflect the paper's contributions and scope?
    \item[] Answer: \answerYes{} %
    \item[] Justification: Each of the claims in the abstract are supported by new regret guarantees for which we provide detailed proofs.
    \item[] Guidelines:
    \begin{itemize}
        \item The answer \answerNA{} means that the abstract and introduction do not include the claims made in the paper.
        \item The abstract and/or introduction should clearly state the claims made, including the contributions made in the paper and important assumptions and limitations. A \answerNo{} or \answerNA{} answer to this question will not be perceived well by the reviewers. 
        \item The claims made should match theoretical and experimental results, and reflect how much the results can be expected to generalize to other settings. 
        \item It is fine to include aspirational goals as motivation as long as it is clear that these goals are not attained by the paper. 
    \end{itemize}

\item {\bf Limitations}
    \item[] Question: Does the paper discuss the limitations of the work performed by the authors?
    \item[] Answer: \answerYes{} %
    \item[] Justification: Limitations and open questions of our results are discussed in \Cref{sec:discussion}.
    \item[] Guidelines:
    \begin{itemize}
        \item The answer \answerNA{} means that the paper has no limitation while the answer \answerNo{} means that the paper has limitations, but those are not discussed in the paper. 
        \item The authors are encouraged to create a separate ``Limitations'' section in their paper.
        \item The paper should point out any strong assumptions and how robust the results are to violations of these assumptions (e.g., independence assumptions, noiseless settings, model well-specification, asymptotic approximations only holding locally). The authors should reflect on how these assumptions might be violated in practice and what the implications would be.
        \item The authors should reflect on the scope of the claims made, e.g., if the approach was only tested on a few datasets or with a few runs. In general, empirical results often depend on implicit assumptions, which should be articulated.
        \item The authors should reflect on the factors that influence the performance of the approach. For example, a facial recognition algorithm may perform poorly when image resolution is low or images are taken in low lighting. Or a speech-to-text system might not be used reliably to provide closed captions for online lectures because it fails to handle technical jargon.
        \item The authors should discuss the computational efficiency of the proposed algorithms and how they scale with dataset size.
        \item If applicable, the authors should discuss possible limitations of their approach to address problems of privacy and fairness.
        \item While the authors might fear that complete honesty about limitations might be used by reviewers as grounds for rejection, a worse outcome might be that reviewers discover limitations that aren't acknowledged in the paper. The authors should use their best judgment and recognize that individual actions in favor of transparency play an important role in developing norms that preserve the integrity of the community. Reviewers will be specifically instructed to not penalize honesty concerning limitations.
    \end{itemize}

\item {\bf Theory assumptions and proofs}
    \item[] Question: For each theoretical result, does the paper provide the full set of assumptions and a complete (and correct) proof?
    \item[] Answer: \answerYes{} %
    \item[] Justification: The assumptions are stated explicitly in \Cref{sec:setting} or directly in the theorem statements. The part about loss functions with time-varying memory requires specialized assumptions, which are explicitly stated and referred to in the respective \Cref{sec:memory}.
    \item[] Guidelines:
    \begin{itemize}
        \item The answer \answerNA{} means that the paper does not include theoretical results. 
        \item All the theorems, formulas, and proofs in the paper should be numbered and cross-referenced.
        \item All assumptions should be clearly stated or referenced in the statement of any theorems.
        \item The proofs can either appear in the main paper or the supplemental material, but if they appear in the supplemental material, the authors are encouraged to provide a short proof sketch to provide intuition. 
        \item Inversely, any informal proof provided in the core of the paper should be complemented by formal proofs provided in appendix or supplemental material.
        \item Theorems and Lemmas that the proof relies upon should be properly referenced. 
    \end{itemize}

    \item {\bf Experimental result reproducibility}
    \item[] Question: Does the paper fully disclose all the information needed to reproduce the main experimental results of the paper to the extent that it affects the main claims and/or conclusions of the paper (regardless of whether the code and data are provided or not)?
    \item[] Answer: \answerNA{} %
    \item[] Justification: This paper does not include experiments.
    \item[] Guidelines:
    \begin{itemize}
        \item The answer \answerNA{} means that the paper does not include experiments.
        \item If the paper includes experiments, a \answerNo{} answer to this question will not be perceived well by the reviewers: Making the paper reproducible is important, regardless of whether the code and data are provided or not.
        \item If the contribution is a dataset and\slash or model, the authors should describe the steps taken to make their results reproducible or verifiable. 
        \item Depending on the contribution, reproducibility can be accomplished in various ways. For example, if the contribution is a novel architecture, describing the architecture fully might suffice, or if the contribution is a specific model and empirical evaluation, it may be necessary to either make it possible for others to replicate the model with the same dataset, or provide access to the model. In general. releasing code and data is often one good way to accomplish this, but reproducibility can also be provided via detailed instructions for how to replicate the results, access to a hosted model (e.g., in the case of a large language model), releasing of a model checkpoint, or other means that are appropriate to the research performed.
        \item While NeurIPS does not require releasing code, the conference does require all submissions to provide some reasonable avenue for reproducibility, which may depend on the nature of the contribution. For example
        \begin{enumerate}
            \item If the contribution is primarily a new algorithm, the paper should make it clear how to reproduce that algorithm.
            \item If the contribution is primarily a new model architecture, the paper should describe the architecture clearly and fully.
            \item If the contribution is a new model (e.g., a large language model), then there should either be a way to access this model for reproducing the results or a way to reproduce the model (e.g., with an open-source dataset or instructions for how to construct the dataset).
            \item We recognize that reproducibility may be tricky in some cases, in which case authors are welcome to describe the particular way they provide for reproducibility. In the case of closed-source models, it may be that access to the model is limited in some way (e.g., to registered users), but it should be possible for other researchers to have some path to reproducing or verifying the results.
        \end{enumerate}
    \end{itemize}

\item {\bf Open access to data and code}
    \item[] Question: Does the paper provide open access to the data and code, with sufficient instructions to faithfully reproduce the main experimental results, as described in supplemental material?
    \item[] Answer: \answerNA{} %
    \item[] Justification: This paper does not include experiments requiring code.
    \item[] Guidelines:
    \begin{itemize}
        \item The answer \answerNA{} means that paper does not include experiments requiring code.
        \item Please see the NeurIPS code and data submission guidelines (\url{https://neurips.cc/public/guides/CodeSubmissionPolicy}) for more details.
        \item While we encourage the release of code and data, we understand that this might not be possible, so \answerNo{} is an acceptable answer. Papers cannot be rejected simply for not including code, unless this is central to the contribution (e.g., for a new open-source benchmark).
        \item The instructions should contain the exact command and environment needed to run to reproduce the results. See the NeurIPS code and data submission guidelines (\url{https://neurips.cc/public/guides/CodeSubmissionPolicy}) for more details.
        \item The authors should provide instructions on data access and preparation, including how to access the raw data, preprocessed data, intermediate data, and generated data, etc.
        \item The authors should provide scripts to reproduce all experimental results for the new proposed method and baselines. If only a subset of experiments are reproducible, they should state which ones are omitted from the script and why.
        \item At submission time, to preserve anonymity, the authors should release anonymized versions (if applicable).
        \item Providing as much information as possible in supplemental material (appended to the paper) is recommended, but including URLs to data and code is permitted.
    \end{itemize}

\item {\bf Experimental setting/details}
    \item[] Question: Does the paper specify all the training and test details (e.g., data splits, hyperparameters, how they were chosen, type of optimizer) necessary to understand the results?
    \item[] Answer: \answerNA{} %
    \item[] Justification: This paper does not include experiments.
    \item[] Guidelines:
    \begin{itemize}
        \item The answer \answerNA{} means that the paper does not include experiments.
        \item The experimental setting should be presented in the core of the paper to a level of detail that is necessary to appreciate the results and make sense of them.
        \item The full details can be provided either with the code, in appendix, or as supplemental material.
    \end{itemize}

\item {\bf Experiment statistical significance}
    \item[] Question: Does the paper report error bars suitably and correctly defined or other appropriate information about the statistical significance of the experiments?
    \item[] Answer: \answerNA{} %
    \item[] Justification: This paper does not include experiments.
    \item[] Guidelines:
    \begin{itemize}
        \item The answer \answerNA{} means that the paper does not include experiments.
        \item The authors should answer \answerYes{} if the results are accompanied by error bars, confidence intervals, or statistical significance tests, at least for the experiments that support the main claims of the paper.
        \item The factors of variability that the error bars are capturing should be clearly stated (for example, train/test split, initialization, random drawing of some parameter, or overall run with given experimental conditions).
        \item The method for calculating the error bars should be explained (closed form formula, call to a library function, bootstrap, etc.)
        \item The assumptions made should be given (e.g., Normally distributed errors).
        \item It should be clear whether the error bar is the standard deviation or the standard error of the mean.
        \item It is OK to report 1-sigma error bars, but one should state it. The authors should preferably report a 2-sigma error bar than state that they have a 96\% CI, if the hypothesis of Normality of errors is not verified.
        \item For asymmetric distributions, the authors should be careful not to show in tables or figures symmetric error bars that would yield results that are out of range (e.g., negative error rates).
        \item If error bars are reported in tables or plots, the authors should explain in the text how they were calculated and reference the corresponding figures or tables in the text.
    \end{itemize}

\item {\bf Experiments compute resources}
    \item[] Question: For each experiment, does the paper provide sufficient information on the computer resources (type of compute workers, memory, time of execution) needed to reproduce the experiments?
    \item[] Answer: \answerNA{} %
    \item[] Justification: This paper does not include experiments.
    \item[] Guidelines:
    \begin{itemize}
        \item The answer \answerNA{} means that the paper does not include experiments.
        \item The paper should indicate the type of compute workers CPU or GPU, internal cluster, or cloud provider, including relevant memory and storage.
        \item The paper should provide the amount of compute required for each of the individual experimental runs as well as estimate the total compute. 
        \item The paper should disclose whether the full research project required more compute than the experiments reported in the paper (e.g., preliminary or failed experiments that didn't make it into the paper). 
    \end{itemize}
    
\item {\bf Code of ethics}
    \item[] Question: Does the research conducted in the paper conform, in every respect, with the NeurIPS Code of Ethics \url{https://neurips.cc/public/EthicsGuidelines}?
    \item[] Answer: \answerYes{} %
    \item[] Justification: The authors have reviewed the NeurIPS Code of Ethics and confirmed that the research conducted in the paper conforms to it.
    \item[] Guidelines:
    \begin{itemize}
        \item The answer \answerNA{} means that the authors have not reviewed the NeurIPS Code of Ethics.
        \item If the authors answer \answerNo, they should explain the special circumstances that require a deviation from the Code of Ethics.
        \item The authors should make sure to preserve anonymity (e.g., if there is a special consideration due to laws or regulations in their jurisdiction).
    \end{itemize}

\item {\bf Broader impacts}
    \item[] Question: Does the paper discuss both potential positive societal impacts and negative societal impacts of the work performed?
    \item[] Answer: \answerNA{} %
    \item[] Justification: The paper is a foundational work concerning online optimization theory; we do not anticipate any direct negative societal consequences.
    \item[] Guidelines:
    \begin{itemize}
        \item The answer \answerNA{} means that there is no societal impact of the work performed.
        \item If the authors answer \answerNA{} or \answerNo, they should explain why their work has no societal impact or why the paper does not address societal impact.
        \item Examples of negative societal impacts include potential malicious or unintended uses (e.g., disinformation, generating fake profiles, surveillance), fairness considerations (e.g., deployment of technologies that could make decisions that unfairly impact specific groups), privacy considerations, and security considerations.
        \item The conference expects that many papers will be foundational research and not tied to particular applications, let alone deployments. However, if there is a direct path to any negative applications, the authors should point it out. For example, it is legitimate to point out that an improvement in the quality of generative models could be used to generate Deepfakes for disinformation. On the other hand, it is not needed to point out that a generic algorithm for optimizing neural networks could enable people to train models that generate Deepfakes faster.
        \item The authors should consider possible harms that could arise when the technology is being used as intended and functioning correctly, harms that could arise when the technology is being used as intended but gives incorrect results, and harms following from (intentional or unintentional) misuse of the technology.
        \item If there are negative societal impacts, the authors could also discuss possible mitigation strategies (e.g., gated release of models, providing defenses in addition to attacks, mechanisms for monitoring misuse, mechanisms to monitor how a system learns from feedback over time, improving the efficiency and accessibility of ML).
    \end{itemize}
    
\item {\bf Safeguards}
    \item[] Question: Does the paper describe safeguards that have been put in place for responsible release of data or models that have a high risk for misuse (e.g., pre-trained language models, image generators, or scraped datasets)?
    \item[] Answer: \answerNA{} %
    \item[] Justification: 
    The paper poses no such risks.
    \item[] Guidelines:
    \begin{itemize}
        \item The answer \answerNA{} means that the paper poses no such risks.
        \item Released models that have a high risk for misuse or dual-use should be released with necessary safeguards to allow for controlled use of the model, for example by requiring that users adhere to usage guidelines or restrictions to access the model or implementing safety filters. 
        \item Datasets that have been scraped from the Internet could pose safety risks. The authors should describe how they avoided releasing unsafe images.
        \item We recognize that providing effective safeguards is challenging, and many papers do not require this, but we encourage authors to take this into account and make a best faith effort.
    \end{itemize}

\item {\bf Licenses for existing assets}
    \item[] Question: Are the creators or original owners of assets (e.g., code, data, models), used in the paper, properly credited and are the license and terms of use explicitly mentioned and properly respected?
    \item[] Answer: \answerNA{} %
    \item[] Justification: The paper does not use existing assets.
    \item[] Guidelines:
    \begin{itemize}
        \item The answer \answerNA{} means that the paper does not use existing assets.
        \item The authors should cite the original paper that produced the code package or dataset.
        \item The authors should state which version of the asset is used and, if possible, include a URL.
        \item The name of the license (e.g., CC-BY 4.0) should be included for each asset.
        \item For scraped data from a particular source (e.g., website), the copyright and terms of service of that source should be provided.
        \item If assets are released, the license, copyright information, and terms of use in the package should be provided. For popular datasets, \url{paperswithcode.com/datasets} has curated licenses for some datasets. Their licensing guide can help determine the license of a dataset.
        \item For existing datasets that are re-packaged, both the original license and the license of the derived asset (if it has changed) should be provided.
        \item If this information is not available online, the authors are encouraged to reach out to the asset's creators.
    \end{itemize}

\item {\bf New assets}
    \item[] Question: Are new assets introduced in the paper well documented and is the documentation provided alongside the assets?
    \item[] Answer: \answerNA{} %
    \item[] Justification: The paper does not introduce new assets.
    \item[] Guidelines:
    \begin{itemize}
        \item The answer \answerNA{} means that the paper does not release new assets.
        \item Researchers should communicate the details of the dataset\slash code\slash model as part of their submissions via structured templates. This includes details about training, license, limitations, etc. 
        \item The paper should discuss whether and how consent was obtained from people whose asset is used.
        \item At submission time, remember to anonymize your assets (if applicable). You can either create an anonymized URL or include an anonymized zip file.
    \end{itemize}

\item {\bf Crowdsourcing and research with human subjects}
    \item[] Question: For crowdsourcing experiments and research with human subjects, does the paper include the full text of instructions given to participants and screenshots, if applicable, as well as details about compensation (if any)? 
    \item[] Answer: \answerNA{} %
    \item[] Justification: 
    The paper does not involve crowdsourcing nor research with human subjects.
    \item[] Guidelines:
    \begin{itemize}
        \item The answer \answerNA{} means that the paper does not involve crowdsourcing nor research with human subjects.
        \item Including this information in the supplemental material is fine, but if the main contribution of the paper involves human subjects, then as much detail as possible should be included in the main paper. 
        \item According to the NeurIPS Code of Ethics, workers involved in data collection, curation, or other labor should be paid at least the minimum wage in the country of the data collector. 
    \end{itemize}

\item {\bf Institutional review board (IRB) approvals or equivalent for research with human subjects}
    \item[] Question: Does the paper describe potential risks incurred by study participants, whether such risks were disclosed to the subjects, and whether Institutional Review Board (IRB) approvals (or an equivalent approval/review based on the requirements of your country or institution) were obtained?
    \item[] Answer: \answerNA{} %
    \item[] Justification: The paper does not involve crowdsourcing nor research with human subjects.
    \item[] Guidelines:
    \begin{itemize}
        \item The answer \answerNA{} means that the paper does not involve crowdsourcing nor research with human subjects.
        \item Depending on the country in which research is conducted, IRB approval (or equivalent) may be required for any human subjects research. If you obtained IRB approval, you should clearly state this in the paper. 
        \item We recognize that the procedures for this may vary significantly between institutions and locations, and we expect authors to adhere to the NeurIPS Code of Ethics and the guidelines for their institution. 
        \item For initial submissions, do not include any information that would break anonymity (if applicable), such as the institution conducting the review.
    \end{itemize}

\item {\bf Declaration of LLM usage}
    \item[] Question: Does the paper describe the usage of LLMs if it is an important, original, or non-standard component of the core methods in this research? Note that if the LLM is used only for writing, editing, or formatting purposes and does \emph{not} impact the core methodology, scientific rigor, or originality of the research, declaration is not required.
    \item[] Answer: \answerNA{} %
    \item[] Justification: The core development of this research does not involve LLMs as any important, original, or non-standard components.
    \item[] Guidelines:
    \begin{itemize}
        \item The answer \answerNA{} means that the core method development in this research does not involve LLMs as any important, original, or non-standard components.
        \item Please refer to our LLM policy in the NeurIPS handbook for what should or should not be described.
    \end{itemize}

\end{enumerate}